\newcommand{\squishlist}{
 \begin{list}{$\bullet$}
  { \setlength{\itemsep}{0pt}
     \setlength{\parsep}{1pt}
     \setlength{\topsep}{1pt}
     \setlength{\partopsep}{0pt}
     \setlength{\leftmargin}{1.5em}
     \setlength{\labelwidth}{1em}
     \setlength{\labelsep}{0.5em} } }
\newcommand{\squishend}{
  \end{list} }
\DeclareMathOperator*{\argmin}{argmin}
\DeclareMathOperator*{\sign}{sign}
\DeclareFixedFont{\fontn}{OT1}{ptm}{m}{n}{10pt}
\DeclareFixedFont{\fontb}{OT1}{ptm}{bx}{n}{10pt}
\let\oldsqrt\sqrt
\def\sqrt{\mathpalette\DHLhksqrt}
\def\DHLhksqrt#1#2{%
\setbox0=\hbox{$#1\oldsqrt{#2\,}$}\dimen0=\ht0
\advance\dimen0-0.2\ht0
\setbox2=\hbox{\vrule height\ht0 depth -\dimen0}%
{\box0\lower0.4pt\box2}}
\newtheorem{theorem}{Theorem}
\begin{document}

\begin{frontmatter}

\title{A Fast and Robust TSVM for Pattern Classification}
\author[a]{Bin-Bin~Gao}
%\cortext[cor]{Corresponding author Email:csgaobb@gmail.com}
\ead{csgaobb@gmail.com}

\author[b]{Jian-Jun~Wang}
%\cortext[cor]{Corresponding author Email:wjj@swu.edu.cn}
\ead{wjj@swu.edu.cn}

\address[a]{YouTu Lab, Tencent, Shenzhen 518000, China.}
\address[b]{School of Mathematics and Statistics, Southwest University, Chongqing 400715, China.}

\begin{abstract}
Twin support vector machine~(TSVM) is a powerful learning algorithm by solving a pair of small-sized SVM-type problems. However, there are still some specific issues such as low efficiency and weak robustness when it is faced with some applications. In this paper, we propose a Fast and Robust TSVM~(FR-TSVM) to deal with the above issues. In order to strengthen model robustness, we propose an effective fuzzy membership function and reformulate the TSVMs such that different input instances can make different contributions to the learning of the decision hyperplane. To further speed up the training procedure, we develop an efficient coordinate descent algorithm with shirking to solve the involved a pair of quadratic programming problems (QPPs). Moreover, the theoretical foundations of the proposed approach are analyzed in details. The experimental results on several artificial and benchmark datasets indicate that the FR-TSVM not only obtains a fast learning speed but also shows a robust classification performance. Code has been made available at:~\url{https://github.com/gaobb/FR-TSVM}.
\end{abstract}

\begin{keyword}
Pattern classification \sep Support vector machine\sep Twin support vector machine \sep Fuzzy membership \sep Coordinate descent method
%\MSC[2016] 00-01\sep  99-00
\end{keyword}

\end{frontmatter}

%\linenumbers

%% main text
\section{Introduction}\label{sec:int}
Support vector machine~(SVM), invented by Vapnik~\cite{Vapnik98}, is a very popular machine learning algorithm. Due to the adoption structural risk minimization principle, SVM is capable of handling excellently with many classification and regression problems, such as machine fault-diagnosis~\cite{Widodo07}, image identification~\cite{Bernd01}, text classification~\cite{Joachims98}, biomedicine~\cite{El-Naqa02} and financial forecast~\cite{Trafalis00}, \emph{etc.}  It has become a prominent highlight of machine learning research. However, the application of SVM is occasionally restricted by some practical issues, especially computational speed and model robustness. To overcome these limitations, some solutions have also been proposed.

Mangasarian~\emph{et al.}~\cite{Mangasarian06} changed the proximal planes which are originally parallel to each other to generate a maximal spaced separating hyperplane into nonparallel ones, and proposed a generalized eigenvalue proximal SVM (GEPSVM). The GEPSVM speed-ups training with slighter accurate performance since it is solved by a pair of simple generalized eigenvalue problems. Following this concept, Jayadeva \emph{et al}.~\cite{Jayadeva07} proposed twin SVM (TSVM). Its objective is turned to be optimized by placing the non-parallel proximal planes as closely as possible to their corresponding instances' cluster and as far as possible from their adversary instances' cluster (as shown in Fig.~\ref{fig:GI-a}). Therefore, the TSVM algorithm converts GEPSVM into a pair of SVM-like convex programs and computation time is reduced satisfactorily to one-fourth of that of a conventional SVM. Recently, numerous nonparallel proximal-plane learning methods are proposed as variants of TSVM, such as TBSVM~\cite{Shao11}, Structural TSVM~\cite{qi2013structural}, Robust TSVM~\cite{qi2013robust}, Laplacian Smooth TSVM~\cite{chen2014laplacian}, Least-square TSVM(Ls-TSVM)~\cite{Kumar09}, Ls-TSVM for multi-class~\cite{nasiri2015}, Twin Parametric-margin SVM~\cite{Peng11,peng2015improvements}, Multi-label TSVM~\cite{chen2016mltsvm}, RPTWSVM~\cite{rastogi2017robust} and Pin-TSVM~\cite{xu2017novel}. In additional, some regression model based TSVM are also proposed such as TSVR~\cite{peng2010tsvr}, ~TPSVR\cite{peng2014twin}, $\epsilon$-TSVR~\cite{ye2016weighted}, TWSVR~\cite{khemchandani2016twsvr} and $v$-TWSVR~\cite{rastogi2017nu}. All of the proposed variants share the same merits of TSVM. However, one challenge is that training instances from real-world applications often carry information with significant noise. These TSVM methods are sensitive to outliers or noises.

Noisy data often can deteriorate the generalization ability of SVM or TSVM. In term of fuzzy theory technique, \emph{noisy information} can be causally converted into the \emph{fuzzy information} to meet fuzzy inference theory. The training of SVM would be too sensitive to noisy inputs if all training instances are treated equivalently at the training stage. A conceptual way to alleviate this sensitive deterioration is to contract the influence of noisy inputs. This means that a conventional SVM, which intrinsically treats every input instance in equivalence, can be improved by introducing fuzzy membership functions to soften input information. A category of fuzzy SVMs are hence developed, such as Lin and Wang~\cite{Lin02,Lin04}, Wu and Liu~\cite{Wu07}, Inoue and Abe~\cite{Inoue01}, Yang~\textit{et al.}~\cite{Yang13}, and Tang~\cite{Tang11}~\emph{etc}. The elementary concept of fuzzy SVM is to allocate a small confident membership for each noisy instance consistent with the~\emph{information fuzziness} which the instance has carried to reduce its influence on the optimization. The membership is generally assigned according to the instance's confidence intrinsically related to its native class. The introduction of fuzzy membership reduces effectively the uncertainty caused by noisy instances and leads to a robust classifier. To utilize the fuzzy concept, some variants of fuzzy TSVM have been developed in~\cite{Xu12,Khemchandani07,richhariya2018robust}. However, these fuzzy SVM methods equivalently assigned the fuzzy membership to each instance and they cannot distinguish support vectors and outliers effectively.

In term of computational efficiency, plenty of input instances from real-world applications require abundant quadratic programming computations, which slow down training efficiency. This computational cost significantly limits the applications of SVM. There are various solutions for improvement, including chunking~\cite{Kudo01}, decomposition~\cite{dong2005fast}, sequential minimal optimization (SMO)~\cite{SMO98}~\emph{etc}. However, they need to optimize the entire set of nonzero
Lagrangian multipliers and the generated kernel matrix may still
be too large to adapt to memory. Recently, Chang~\emph{et al.} proposed a primal~\emph{coordinate descent}~(CD) method to deal with large-scale linear SVM~\cite{Chang01,Chang08,Hsieh08}. 

In this paper, we propose a fast and robust twin support vector machine~(FR-TSVM) for a classification problem.  Our contributions are summarized as follows.

\squishlist
\item We construct a novel fuzzy membership function to measure the importance of each training instance. The assignment policy of fuzzy membership depends on the structural information of training instances in feature space.

\item We embed the proposed fuzzy concept into TSVM and propose a fast and robust TSVM model. In FR-TSVM, different input instances can make different contributions to the learning of decision hyperplanes. The pro- posed model can effectively alleviate the effect of noisy instances and obtain a robust performance.

\item We develop further coordinate descent algorithm with shrinking to speed-up the computations of linear and nonlinear FR-TSVM. In addition, this paper presents FR-TSVM in details including corresponding theoretical fundamentals and related properties.

\item Experiments on artificial and benchmark datasets show that the proposal brings more satisfactory performances on both classification accuracy and learning efficiency, compared with traditional SVM, FSVM and TSVM.
\squishend

The remaining parts of this paper are organized as follows. We first briefly review the basics of classical SVM and its related works, including FSVM and TSVM in Section~\ref{sec:bg}. Then, Section~\ref{sec:app} proposes FR-TSVM approach, including fuzzy membership function construction, linear and nonlinear FR-TSVM models and their optimization algorithms. After that, experimental results are reported in Section~\ref{sec:exp}. Some preliminary results
have been published in a conference presentation~\cite{gao2015coordinate}.

\begin{figure*}
 \centering
 \subfloat[TSVM]
 {\includegraphics[width= 0.45\textwidth]{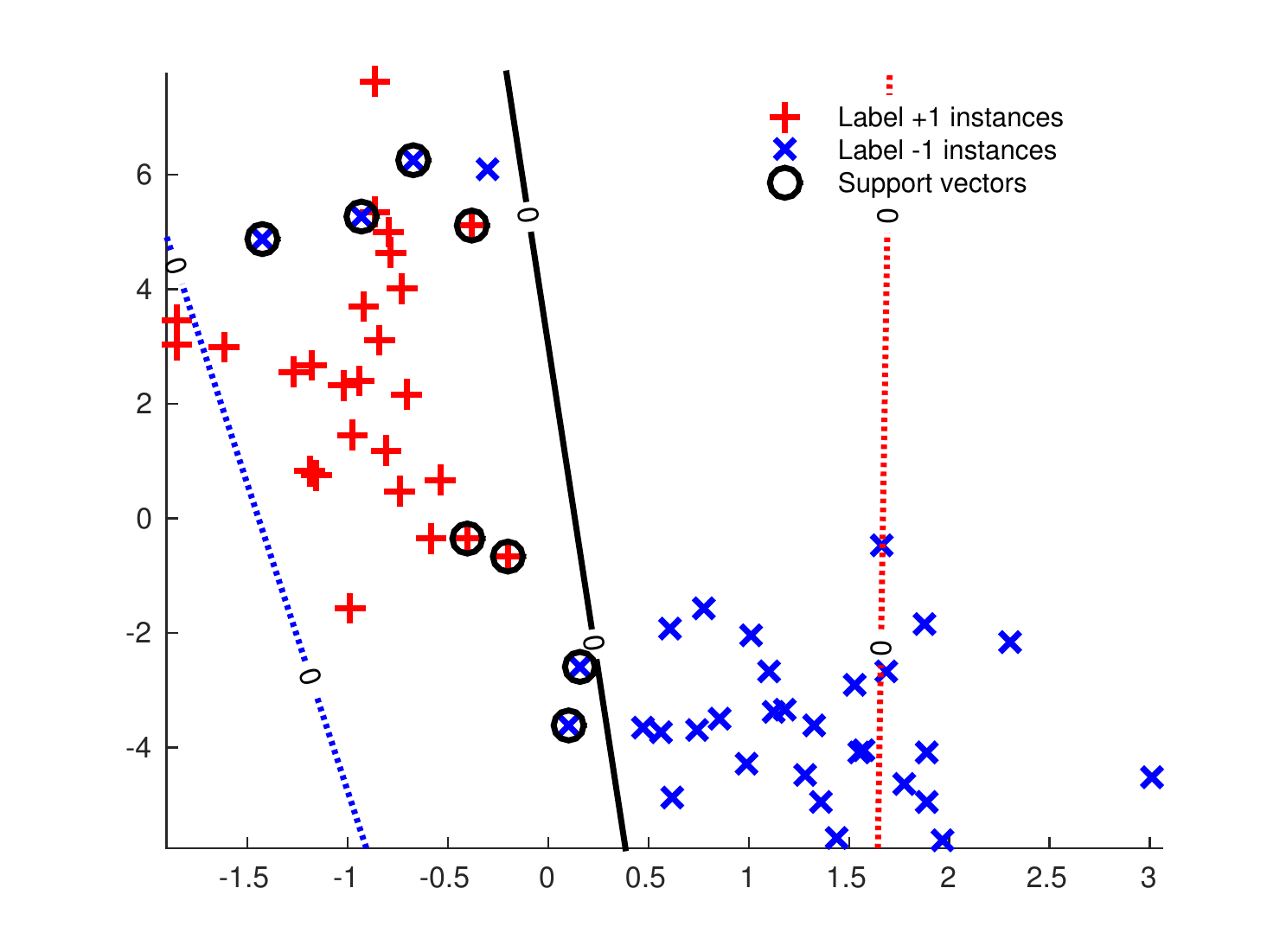}\label{fig:GI-a}}
 \subfloat[FR-TSVM]
 {\includegraphics[width= 0.45\textwidth]{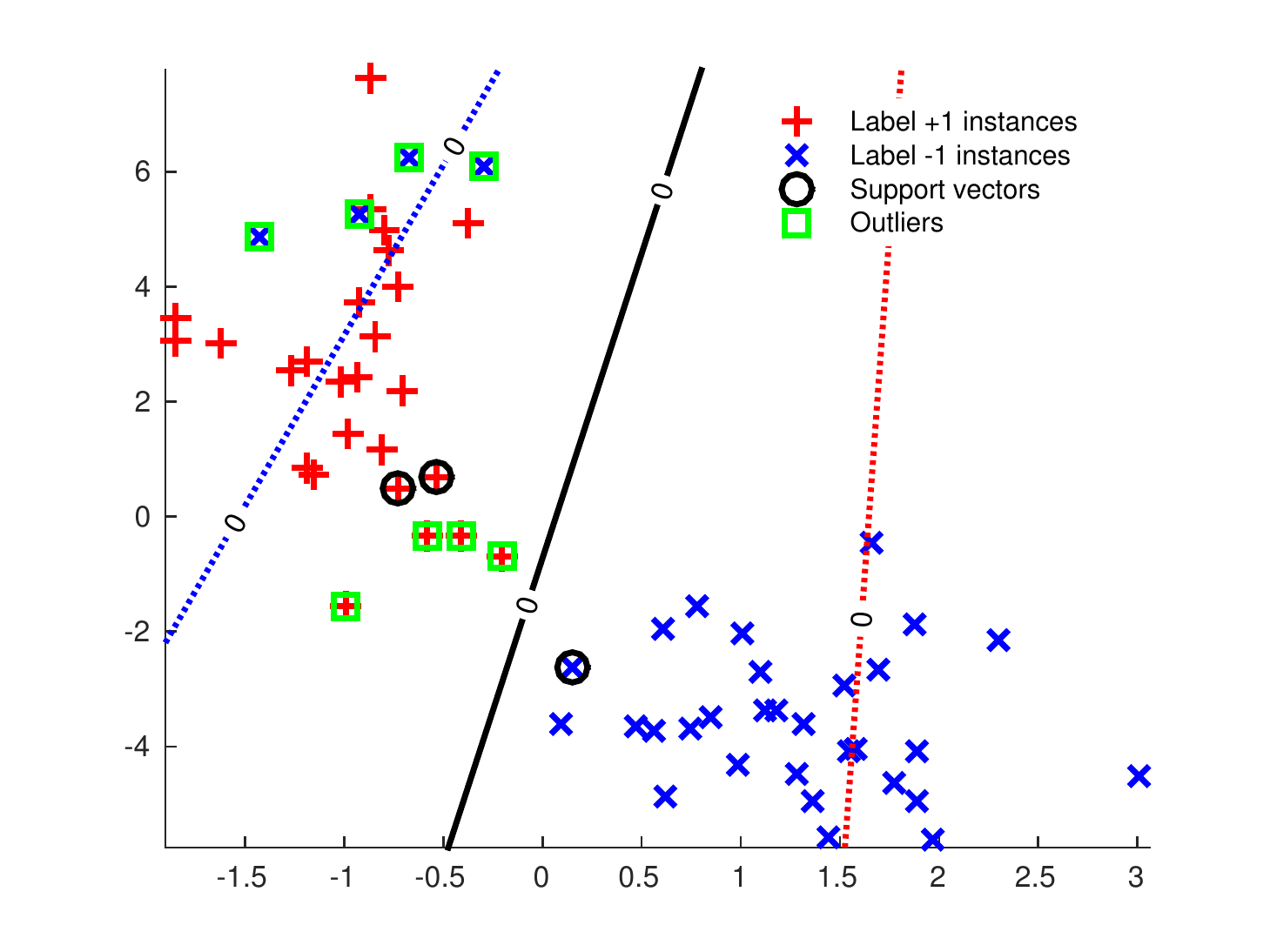}\label{fig:GI-b}}
 \caption{Geometric interpretation of linear TSVM and FR-TSVM for binary classification. Red ``{\color{red} {+}}" and blue ``{\color{blue} {x}}" represent positive and negative instances, respectively. A pair of nonparallel decision hyperplanes are shown by red and blue dashed line and the best decision plane is represented by a black line. We can see that the proposed method is easy to get a better decision hyperplane because it can effectively detect noisy points and weaken their effects.  \label{fig:GI}}
\end{figure*}

\section{Background}\label{sec:bg}
In this paper, we mainly focus on a binary classification problem in the $n$-dimensional space $\mathcal{R}^n$. We denote training data as $D=\{(\vec x_i, y_i)|i=1,2,\dotsc,l\}$, where $\vec x_i \in \mathcal{R}^n$ represents an input instance with the corresponding label $y_i \in \{+1,-1\}$. Without loss of generality, we assume that the matrix $X$ with size of $l\times n$ represents all training instances, and the matrices $X _\pm$ with sizes of $l_\pm\times n$ represent the training instances belonging to ``+1"(positive) or ``-1"(negative) class, respectively, where $l=l_++l_-$.

\subsection{Support vector machine}
As a classical machine learning algorithm, the standard linear SVM~\cite{Vapnik98} aims to construct a pair of parallel hyperplanes between two classes of instances:
\begin{equation}
\vec w^{\rm T} \vec x+b=+1~\textrm{and}~\vec w^{\rm T} \vec x+b=-1\,,
\end{equation}
where $\vec w \in \mathcal{R}^n$ and $b \in \mathcal{R}$ are the normal vector and the bias term of
hyperplanes, respectively. These separating hyperplanes are obtained by solving the following quadratic programming problem~(QPP):
\begin{align}\label{eq:p-svm}
\begin{split}
&\min_{\vec w,b,\xi_i}~\frac {1}{2}{\|\vec w\|}^2+c\sum_{i=1}^l \xi_i\\
&\textrm{s.t.}~~y_{i}(\vec w^{T}\vec x_{i}+b)+\xi_{i} \geq 1 ,\xi_{i} \geq \vec 0, i=1,2,\dotsc, l.
\end{split}
\end{align}
where $\|.\|$ stands for $\ell_2$-norm, $\xi_{i}$ is called as slack variable which denotes the misclassification error associated with the $i$-th input instance and $c>0$ is regularization factor that balances the importance between
the maximization of margin width (\emph{i.e.}, the minimization of $\frac {1}{2}{\|\vec w\|}^2$) and the minimization of the training error. The dual QPP of problem~Eq.\eqref{eq:p-svm} is:
\begin{align}\label{eq:d-svm}
\begin{split}
&\max_{\vec \alpha}~\sum_{i=1}^l \alpha_i+\frac{1}{2} \sum_{i=1}^l \sum_{j=1}^l y_i y_j \langle \vec x_i,\vec x_j \rangle \alpha_i \alpha_j\\
&\textrm{s.t.}~~\sum_{i=1}^l \alpha_i y_{i} = 0, 0 \leq \alpha_i \leq 1 , i=1,2,\dotsc, l\,,
\end{split}
\end{align}
where $\vec \alpha \in \mathcal{R}^l$ is Lagrangian multiplier. After solving
this dual QPP, a testing instance $\vec x$ is classified as ``+1" or ``-1" following decision function
\begin{equation}
f(\vec x)=\sign(\vec w^{*\rm T} \vec x+b^*)
\end{equation}
%&=\sign\Big(\sum_{i=1}^l \alpha^*_i y_i \langle \vec x_i, \vec x \rangle+\frac{1}{N_{sv}}\sum_{j=1}^{N_{sv}}\big(y_j-\sum_{i=1}^l \alpha^*_i y_i \langle \vec x_i,\vec x_j \rangle \big)\Big)\,,
where $(\vec w^*, b^*)$ and $\alpha^*_i$ are the solution of Eq.\eqref{eq:p-svm} and Eq.\eqref{eq:d-svm}, respectively.

\subsection{Twin support vector machine}
Different from the conventional SVM, TSVM is in fact constructed by two non-parallel decision planes, \emph{i.e.},
\begin{equation}\label{eq:lh-tsvm}
\vec w_+^{\rm T} \vec x+b_+=0~\textrm{and}~\vec w_-^{\rm T}\vec x+b_-=0\,.
\end{equation}
To construct such two non-parallel decision planes, a pair of primal
optimization problems are set up:
\begin{equation}\label{eq:p-tsvm1}
\begin{split}
&\min_{\vec w_+,b_+,\vec \xi_-}~~~\frac{1}{2}\parallel X_+\vec w_++\vec e_+b_+\parallel^2+c_1\vec e_-^{\rm T}\vec \xi_-\\
&\textrm{s.t.}~~-(X_-\vec w_++\vec e_-b_+)+\vec \xi_-\geq \vec e_-,\vec \xi_-\geq \vec {0}\,,\\
\end{split}
\end{equation}
and
\begin{equation}\label{eq:p-tsvm2}
\begin{split}
&\min_{\vec w_-,b_-,\vec \xi_+}~~~\frac{1}{2}\parallel X_-\vec w_-+\vec e_-b_-\parallel^2+c_2\vec e_+^{\rm T}\vec \xi_+\\
&\textrm{s.t.}~~~(X_+\vec w_-+\vec e_+b_-)+\vec \xi_+\geq \vec
e_+,\vec \xi_+\geq \vec {0}\,,
\end{split}
\end{equation}
where $c_1>0$ and $c_2>0$ are parameters, $\vec \xi_+$ and $\vec \xi_-$ denote the vectors of slack variables for positive and negative classes, respectively, and $\vec e_+,\vec e_-$ correspond to unit vectors with $l_\pm$ dimensions. By introducing the Lagrangian multipliers, the dual QPPs of Eq.\eqref{eq:p-tsvm1} and~Eq.\eqref{eq:p-tsvm2} can be represented as followings
\begin{equation}
\begin{split}\label{eq:d-tsvm1}
&\max_{\vec \alpha}\quad \vec e_-^{\rm T}\vec \alpha-\frac{1}{2}\vec \alpha^{\rm T}H_-(H_+^{\rm T}H_+)^{-1}H_-^{\rm T}\vec \alpha \\
&\textrm{s.t.}\quad \vec 0\leq\vec \alpha\leq c_1\vec e_-,
\end{split}
\end{equation}
\begin{equation}
\begin{split}\label{eq:d-tsvm2}
&\max_{\vec \beta}\quad \vec e_+^{\rm T}\vec \beta-\frac{1}{2}\vec \beta^{\rm T}H_+(H_-^{\rm T}H_-)^{-1}H_+^{\rm T}\vec \beta \\
&\textrm{s.t.}\quad \vec 0\leq\vec \beta\leq c_2\vec e_+,
\end{split}
\end{equation}
where $H_+ = [X_+, \vec e_+], H_- = [X_-, \vec e_-]$. The non-parallel hyperplanes~Eq.\eqref{eq:lh-tsvm} can be obtained from the
solutions~$\vec \alpha^*$ and~$\vec \beta^*$ of~Eq.\eqref{eq:d-tsvm1} and~Eq.\eqref{eq:d-tsvm2} by
\begin{equation}\label{eq:u_star-tsvm}
\vec u_+^* =-(H_+^{\rm T}H_+)^{-1}H_-^{\rm T}\pmb\alpha^*,\vec u_-^* =(H_-^{\rm T}H_-)^{-1}H_+^{\rm T}\pmb\beta^*,
\end{equation}
where $\vec u_\pm^*=[\vec w_\pm^{*\rm T},b_\pm^*]^{\rm T}$, $w_\pm^{*\rm T}$ and $b_\pm^*$ are the solutions of~Eq.\eqref{eq:p-tsvm1} and~Eq.\eqref{eq:p-tsvm2}. TSVM then can easily assign a label ``+1" or ``-1" to a testing instance $\vec x$ by
\begin{equation}\label{Line_TSVM_Desc}
f(\vec x)=\argmin_\pm\frac{\mid {\vec w^{*}_\pm}^{\rm T}\vec x+b^{*}_\pm\mid}{\parallel \vec w^{*}_\pm\parallel}\,,
\end{equation}
where $|\cdot|$ is a function taking its absolute value. If the matrix $H_+^{\rm T}H_+$ or $H_-^{\rm T}H_-$ is ill-conditioned, TSVM artificially introduces a term~$\lambda I (\lambda >0)$, where $I$ is an identity matrix of appropriate dimension. In the experiments, we fix the value of $\lambda$ as 0.01.
\subsection{Fuzzy support vector machine}
To reduce the effects of outliers, Lin~\emph{et al.} introduced fuzzy membership to each input instance of SVM and proposed fuzzy SVM~(FSVM)~\cite{Lin02}. In FSVM, training instance $\vec x_i$ is assigned a fuzzy membership $0\leq s_i\leq1$ besides a label $y_i \in \{+1,-1\}$. The input dataset $T$ is thus modified as $T' = \{(\vec x_i, y_i, s_i)|i=1,2,\dotsc,l\}$. These fuzzy memberships $\{s_i|i=1,2,\dotsc,l\}$ are used to reduce the influence of noisy instances for generating final decision function, which induces fuzzy SVM as followings:
\begin{align}\label{FSVM}
\begin{split}
&\min_{\vec w,b,\xi}~\frac {1}{2}{\|\vec w\|}^2+c\sum_{i=1}^ls_i \xi_i\\
&\textrm{s.t.}~~y_{i}(\vec w^{T}\vec x_{i}+b)+\xi_{i} \geq 1 ,\xi_{i} \geq 0, i=1,2,\dotsc, l.
\end{split}
\end{align}
It is noted that a smaller $s_i$ can reduce
the effect of the parameter $\xi_i$ in problem Eq.\eqref{FSVM} so that the corresponding instance $\vec x_i$ can be treated
as less important. The classification of any testing instance $\vec x$ can be obtained by determining the sign of ${\vec w^{*}}^{\rm T}{\vec x}+ b^{*}$ where $\vec w^{*}$ and $b^{*}$ are the solution of Eq.\eqref{FSVM}.

\section{The proposed FR-TSVM approach}\label{sec:app}
In this section, we will propose the approach of FR-TSVM. To this end, we first introduce fuzzy membership construction. Then, we propose linear and nonlinear FR-TSVM models by embedding fuzzy membership into TSVM. Finally, a fast optimization method is developed for solving the dual problems of the proposed FR-TSVM.

\subsection{Fuzzy membership construction}\label{sec:fc}
\begin{figure}
 \centering
 {\includegraphics[width= 0.45\textwidth]{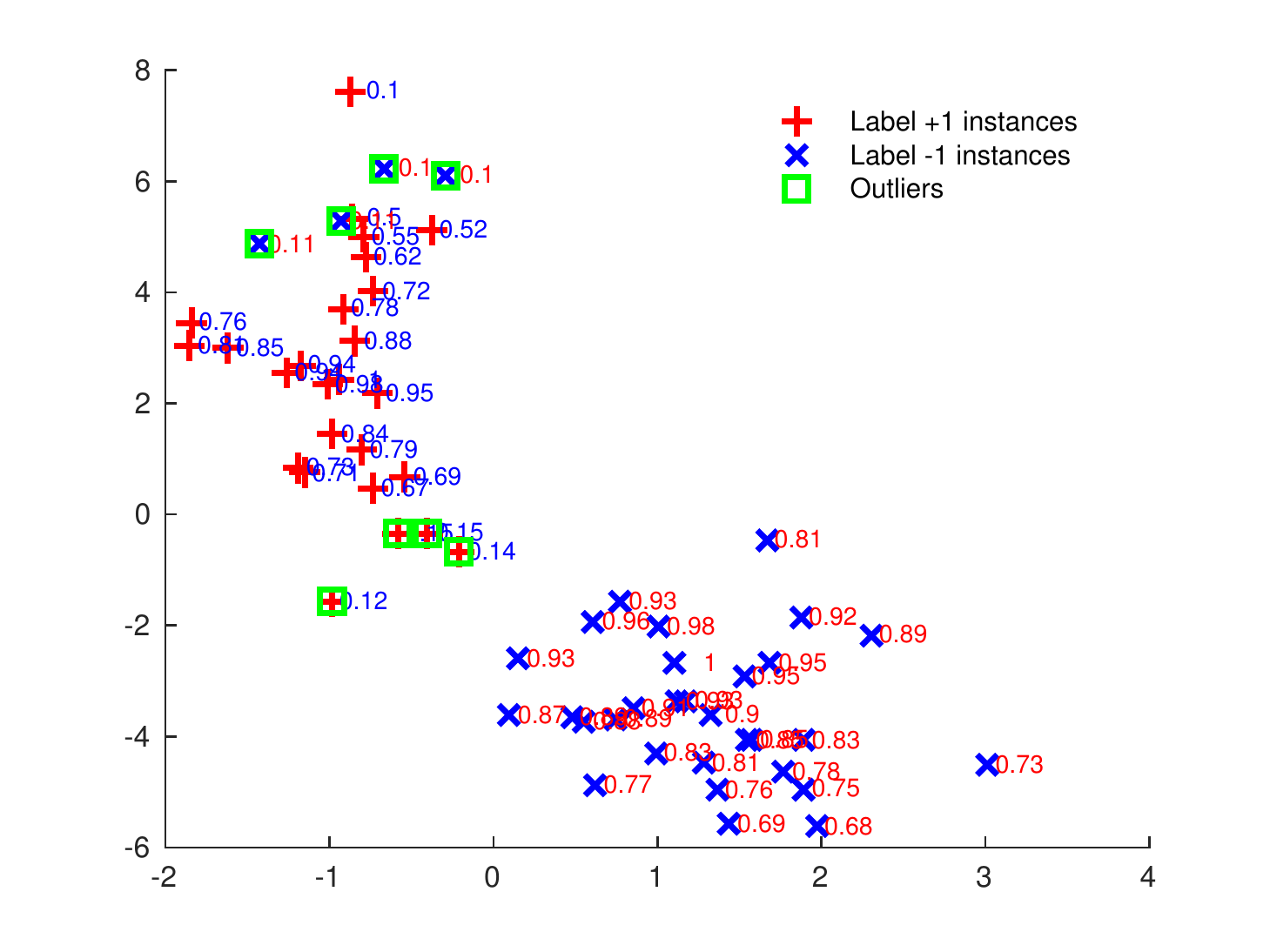}}
% \subfloat[FR-TSVM]
% {\includegraphics[width= 0.5\columnwidth]{geometry_fuzzy_mebership_value_FR-TSVM_RBF}\label{fig:GI-b}}
 \caption{Linear fuzzy membership values for binary classification. The red and blue color numbers are fuzzy membership values for  negative and positive instances, respectively. Fuzzy membership values of instances which close to class center are always larger. \label{fig:fv}}
\end{figure}

Fuzzy membership plays a key role in robust classification learning. However, there is no unified standard to construct such fuzzy membership so far. As we know, support vectors geometrically locate near the boundary area of two adjacent classes. The noisy points also reside in this area typically and unfortunately. This means that support vectors and noisy points are frequently mixed together. Inspired by Tang~\cite{Tang11}, we propose a fuzzy membership assignment for training instances. The proposed method considers not only reducing the noise carried by the outliers but also keeping the importance of support vectors.

\subsubsection{Linear case}
The construction of fuzzy membership considers firstly linear kernel case. we define directly positive and negative class centers, $\vec x_{c+}$ and $\vec x_{c-}$, as the mean points in the input space for positive and negative instances,

\begin{equation}
\vec x_{c+}=\frac{1}{l_+}\sum_{y_i = +1}\vec x_i\,,
\vec x_{c-}=\frac{1}{l_-}\sum_{y_i = -1}\vec x_i\,.
\end{equation}
Measuring the distance between instances and their class center, the hypersphere radius of positive and negative are defined and given as
\begin{equation}
r_+=\max\{\parallel \vec x_i-\vec x_{c+} \parallel|y_i=+1\}\,,
\end{equation}
\begin{equation}
r_-=\max\{\parallel \vec x_i-\vec x_{c-} \parallel|y_i=-1\}\,.
\end{equation}
With known $\vec x_{c+}, \vec x_{c-}$,
$r_+$, and $r_-$, membership $s_i$ of an instance can be assigned
according to the relationship between $\parallel \vec x_i-\vec x_{c+}
\parallel$ and $\parallel \vec x_i-\vec x_{c-}
\parallel$. In this paper, $s_{i}$ of a positive instance be given as:
\begin{equation}\label{eq:liner-fmv}
s_{i}=\left\{\begin{array}{llll}
\mu\big(1-{\parallel \vec x_i-\vec x_{c+}\parallel}/({r_++\delta)}\big),\\ \quad\textrm{if} \parallel \vec x_i-\vec x_{c+} \parallel\geq \parallel \vec x_i-\vec x_{c-} \parallel \& ~y_i =+1\\
\big(1-\mu\big)\big(1-{\parallel \vec x_i-\vec x_{c+}\parallel}/({r_++\delta)}\big),\\ \quad\textrm{if} \parallel \vec x_i-\vec x_{c+} \parallel <
\parallel \vec x_i-\vec x_{c-} \parallel \& ~y_i =+1
\end{array}
\right.
\end{equation}
where $\mu \in [0,1]$ is used to balance the effect of normal and noisy instances, and $\delta>0$ to avoid $s_i=0$. The relationship reveals that an instance is generally assigned by a proportionally decreasing $s_i$ value when it drifts farther from its native class center to increase uncertainty. Moreover, some instances are highly suspected as outliers which dwell with a sufficient far distance from their native class center(\textit{i.e.} $\parallel \vec x_i-\vec x_{c+} \parallel\geq \parallel \vec x_i-\vec x_{c-} \parallel$). In order to decrease outliers' effect toward the hyperplane, we assign a small positive real number for $\mu$. In practice, we set $u=0.1$ for FR-TSVM. A similar rule can be applied to the fuzzy membership of those negative instances. As shown in Fig.~\ref{fig:fv}, we intuitively show linear fuzzy membership values for training instances.

\subsubsection{Nonlinear case}
For nonlinear case, fuzzy membership function must be
consistently reconstructed in the feature space $\mathcal{H}$.
Similar to linear case, positive and negative class
centers $\vec \varphi_{c+}$ and $\vec \varphi_{c-} $ in the
feature space are defined as:
\begin{equation}
\vec \varphi_{c+}=\frac{1}{l_+}\sum_{\mathclap{y_i = +1}}\varphi(\vec x_i)\,,
\vec \varphi_{c-}=\frac{1}{l_-}\sum_{\mathclap{y_i = -1}}\varphi(\vec x_i)\,.
\end{equation}
where $\varphi(\vec x_i)\in\mathcal{H}$ denotes the
transformation of an arbitrary input instance $\vec x_i$. The
squared distance from $\varphi(\vec x_i)$ to $\vec \varphi_{c+}$
or $\vec \varphi_{c-}$ can be rearranged and expressed in terms of
the kernel function $\kappa(\cdot,\cdot)$:
\begin{align}
&{\parallel\varphi(\vec x_i)-\vec \varphi_{c+}\parallel}^2\nonumber\\
&\hspace{4mm}={\|\varphi(\vec x_i)\|^2-2\langle \varphi(\vec x_i),\vec \varphi_{c+}\rangle+\|\vec \varphi_{c+}\|^2}\nonumber\\
&\hspace{4mm}=\langle\varphi(\vec x_i),\varphi(\vec x_i)\rangle-\frac{2}{l_+}\sum_{\mathclap{y_j = +1}}\langle \varphi(\vec x_i),\varphi(\vec x_j)\rangle\nonumber\\
&\hspace{8mm}+\frac{1}{l_+^2}\sum_{{y_i = +1}}^{}\sum_{y_k = +1}\langle\varphi(\vec x_i),\varphi(\vec x_k)\rangle\nonumber\\
&\hspace{4mm}=\kappa(\vec x_i,\vec x_i)-\frac{2}{l_+}\sum_{\mathclap{y_j = +1}}\kappa(\vec x_i,\vec x_j)\nonumber\\
&\hspace{8mm}+\frac{1}{l_+^2}\sum_{y_i = +1}\sum_{y_k = +1}\kappa(\vec x_i,\vec x_k),
\end{align}
and
\begin{align}
&{\parallel\varphi(\vec x_i)-\vec \varphi_{c-}\parallel}^2\nonumber\\
&\hspace{4mm}=\kappa(\vec x_i,\vec x_i)-\frac{2}{l_-}\sum_{\mathclap{y_j = -1}}\kappa(\vec x_i,\vec x_j)\nonumber\\
&\hspace{8mm}+\frac{1}{l_-^2}\sum_{y_i = -1}\sum_{y_k = -1}\kappa(\vec x_i,\vec x_k),
\end{align}
where $\kappa(\vec x_i.\vec x_j)$ is the kernel function which implicitly
calculates the high-dimensional dot-product of $\varphi(\vec x_i)$
and $\varphi(\vec x_j)$. The scattering hypersphere radii in the
feature space are
\begin{equation}
r_{\varphi+}=\max\{\parallel \varphi (\vec {x}_{i})-\vec \varphi_{c+}
\parallel|y_i=+1\}\,,
\end{equation}
\begin{equation}
r_{\varphi-}=\max\{\parallel \varphi(\vec {x}_{i})-\vec \varphi_{c-}
\parallel|y_i=-1\}\,.
\end{equation}
Based on the same principle in linear case, the fuzzy membership function
for non-linear kernel can equivalently given as:
\begin{equation}\label{eq:noliner-fmv}
s_i=\left\{\begin{array}{ll}
\mu\big(1-\sqrt{{\parallel\varphi(\vec x_i)-\vec \varphi_{c+}\parallel}^2/(r_{\varphi+}^2+\delta)}\big), \\
\textrm{if}~\parallel\varphi(\vec x_i)-\vec \varphi_{c+}\parallel \geq \parallel\varphi(\vec x_i)-\vec \varphi_{c-}\parallel \& ~y_i =+1\\
\big(1-\mu\big)\big(1-\sqrt{{\parallel\varphi(\vec x_i)-\vec \varphi_{c+}\parallel}^2/(r_{\varphi+}^2+\delta)}\big), \\
\textrm{if}~\parallel\varphi(\vec x_i)-\vec \varphi_{c+}\parallel <
\parallel\varphi(\vec x_i)-\vec \varphi_{c-}\parallel\& ~y_i =+1
\end{array}
\right.
\end{equation}
where $\delta$ is similarly defined as a small positive constant to
avoid the vanishing of $s_{i}$. Of course, a fuzzy membership function
$s_{i}$ for negative class instances can be similarly defined.

\subsection{Fast and Robust twin support vector machine}\label{sec:R-TSVM}
In this section, we propose an efficient learning approach termed as fast and robust twin support vector machine~(FR-TSVM). As mentioned earlier, the FR-MSVM is similar to TSVM, as it also derives a pair of nonparallel decision hyperplanes through two QPPs. What is more, FR-TSVM is more robust and fast than TSVM.

\subsubsection{Linear FR-TSVM}
For linear case, the FR-TSVM finds two nonparallel hyperplanes in $\mathcal{R}^n$ space

\begin{equation}
\vec w_{+}^{\rm T}\vec x+b_{+}=0,\quad and \quad  \vec w_{-}^{\rm T}\vec x+b_{-}=0.
\end{equation}
Considering the crucial trade-off balance between margin maximization and error minimization, a margin term similar to that in the standard SVM~\cite{Vapnik98}, should be added firstly in the model. Since TSVM has two proximal decision functions, $\vec w_{\pm}^{\rm T}\vec x+b_{\pm}=0$, two margin terms $1/\|\vec w_+\|$ and $1/\|\vec w_-\|$ are accordingly defined for proximal decision functions, respectively. Together with the introduced fuzzy membership function and the margin terms, a weight regularized model of FR-TSVM with the linear kernel is hence proposed:
\begin{align}\label{eq:p-rtsvm1}
&\min_{\vec w_+,b_+,\vec \xi_-}~~\dfrac{1}{2}c_1\parallel \vec w_+\parallel^2+ \dfrac{1}{2}\parallel X_+\vec w_++\vec e_+b_+\parallel^2+c_3\vec s_-^{\rm T}\vec \xi_-\nonumber\\
&\textrm{s.t.}~~-(X_-\vec w_++\vec e_-b_+)+\vec \xi_-\geq \vec e_-,~\vec \xi_-\geq \vec {0},
\end{align}
\begin{align}\label{eq:p-rtsvm2}
&\min_{\vec w_-,b_-,\vec \xi_+}~~\dfrac{1}{2}c_2\parallel \vec w_-\parallel^2+\dfrac{1}{2}\parallel X_-\vec w_-+\vec e_-b_-\parallel^2+c_4\vec s_+^{\rm T}\vec \xi_+\nonumber\\
&\textrm{s.t.}~~~(X_+\vec w_-+\vec e_+b_-)+\vec \xi_+\geq \vec e_+,~\vec \xi_+\geq \vec {0},
\end{align}
where $c_i>0(i=1,2,3,4)$ are trade-off parameters for weighting the regularization, $\vec \xi_+$ and $\vec \xi_-$ denote the subsets of misclassification error for positive and negative classes respectively, both $\vec s_+\in R^{l_+}$ and $\vec s_-\in R^{l_-}$  are the fuzzy-number vectors sequentially associated with positive and negative instances, and $\vec e_+, \vec e_-$ correspond to unit vectors with their dimensions exact to sample sizes in positive and negative classes. The parameters $c_i(i=1,2,3,4)$ are used to balance the effect of maximizing the margin and minimizing the adapting error which aggregates all the individual error measured from instances to its corresponding hyperplane.  An intuitive geometric interpretation for the linear FR-TSVM is shown in Fig.~\ref{fig:GI-b}.

\begin{theorem}\label{th:l-rtsvm}
The dual forms of the primal problems Eq.\eqref{eq:p-rtsvm1}-\eqref{eq:p-rtsvm2} are
\begin{equation}
\begin{split}\label{eq:d-rtsvm1}
&\max_{\vec \alpha}\quad \vec e_-^{\rm T}\vec \alpha-\frac{1}{2}\vec \alpha^{\rm T}H_-(H_+^{\rm T}H_++c_1E_1)^{-1}H_-^{\rm T}\vec \alpha \\
&\textrm{s.t.}\quad \vec 0\leq\vec \alpha\leq c_3\vec s_-,
\end{split}
\end{equation}
\begin{equation}
\begin{split}\label{eq:d-rtsvm2}
&\max_{\vec \beta}\quad \vec e_+^{\rm T}\vec \beta-\frac{1}{2}\vec \beta^{\rm T}H_+(H_-^{\rm T}H_-+c_2E_2)^{-1}H_+^{\rm T}\vec \beta \\
&\textrm{s.t.}\quad \vec 0\leq\vec \beta\leq c_4\vec s_+,
\end{split}
\end{equation}
where $H_+ = [X_+, \vec e_+], H_- = [X_-, \vec e_-]$, and $E_i
=\left(
\begin{array}{cc}
 I& ~\\
 ~& 0\\
\end{array}
\right)$ ($i = 1, 2$). Relationships of the optimal solutions
between the primal problems~Eq.\eqref{eq:p-rtsvm1}-\eqref{eq:p-rtsvm2} and their dual problems
Eq.\eqref{eq:d-rtsvm1}-\eqref{eq:d-rtsvm2} are
\begin{equation}\label{eq:u_star}
\begin{split}
&\vec u_+^* =-(H_+^{\rm T}H_++c_1E_1)^{-1}H_-^{\rm T}\pmb\alpha^*,\\
&\vec u_-^* =(H_-^{\rm T}H_-+c_2E_2)^{-1}H_+^{\rm T}\pmb\beta^*,
\end{split}
\end{equation}
where $\vec u_\pm^*=[\vec w_\pm^{*\rm T},b_\pm^*]^{\rm T}$,
$\pmb\alpha^*$ and $\pmb\beta^*$ denote the optimal values of $\vec \alpha $ and $\vec \beta$, respectively.
\end{theorem}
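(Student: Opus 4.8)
The plan is to derive each dual via the standard Lagrangian-duality recipe for a convex QPP with linear inequality constraints, treating the two primal problems Eq.\eqref{eq:p-rtsvm1}--\eqref{eq:p-rtsvm2} symmetrically; I will carry out the derivation in detail for Eq.\eqref{eq:p-rtsvm1} and note that Eq.\eqref{eq:p-rtsvm2} follows by swapping the roles of the positive and negative classes. First I would form the Lagrangian by introducing a multiplier vector $\vec\alpha\geq\vec 0$ for the constraint $-(X_-\vec w_++\vec e_-b_+)+\vec\xi_-\geq\vec e_-$ and a multiplier vector $\vec\beta\geq\vec 0$ for $\vec\xi_-\geq\vec 0$, obtaining
\begin{equation}
\begin{split}
L=&\;\tfrac{1}{2}c_1\|\vec w_+\|^2+\tfrac{1}{2}\|X_+\vec w_++\vec e_+b_+\|^2+c_3\vec s_-^{\rm T}\vec\xi_-\\
&-\vec\alpha^{\rm T}\bigl(-(X_-\vec w_++\vec e_-b_+)+\vec\xi_--\vec e_-\bigr)-\vec\beta^{\rm T}\vec\xi_-.
\end{split}
\end{equation}

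Next I would write the KKT stationarity conditions by setting the gradients of $L$ with respect to the primal variables $\vec w_+$, $b_+$, and $\vec\xi_-$ to zero. The key observation is that the two quadratic terms combine naturally once I stack the variables: using $H_+=[X_+,\vec e_+]$, $H_-=[X_-,\vec e_-]$, and $\vec u_+=[\vec w_+^{\rm T},b_+]^{\rm T}$, the objective's smooth part becomes $\tfrac{1}{2}\vec u_+^{\rm T}(H_+^{\rm T}H_++c_1E_1)\vec u_+$, where the block-diagonal matrix $E_1$ with an identity in the $\vec w_+$ block and a zero in the $b_+$ block precisely captures that the regularizer $c_1\|\vec w_+\|^2$ penalizes $\vec w_+$ but not the bias $b_+$. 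The stationarity condition in $\vec u_+$ then reads $(H_+^{\rm T}H_++c_1E_1)\vec u_++H_-^{\rm T}\vec\alpha=\vec 0$, which I solve to get $\vec u_+^*=-(H_+^{\rm T}H_++c_1E_1)^{-1}H_-^{\rm T}\vec\alpha$, establishing Eq.\eqref{eq:u_star}. The stationarity condition in $\vec\xi_-$ gives $c_3\vec s_--\vec\alpha-\vec\beta=\vec 0$, and combined with $\vec\beta\geq\vec 0$ this yields the box constraint $\vec 0\leq\vec\alpha\leq c_3\vec s_-$ in Eq.\eqref{eq:d-rtsvm1}.

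Finally I would substitute $\vec u_+^*$ back into $L$ to eliminate the primal variables and obtain the dual objective purely in terms of $\vec\alpha$; after the substitution the $\vec\xi_-$ and $\vec\beta$ terms cancel by the $\vec\xi_-$-stationarity relation, and the quadratic form collapses to $\vec e_-^{\rm T}\vec\alpha-\tfrac{1}{2}\vec\alpha^{\rm T}H_-(H_+^{\rm T}H_++c_1E_1)^{-1}H_-^{\rm T}\vec\alpha$, which is exactly Eq.\eqref{eq:d-rtsvm1}. The step I expect to require the most care is this back-substitution and simplification: one must verify that the linear and cross terms in $\vec\alpha$ recombine correctly, relying on $E_1$ being idempotent-like in the relevant block and on the invertibility of $H_+^{\rm T}H_++c_1E_1$ (which holds for $c_1>0$, since the added positive-semidefinite regularizer removes the rank deficiency of $H_+^{\rm T}H_+$ in the $\vec w_+$ directions). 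The derivation for Eq.\eqref{eq:d-rtsvm2} and the second relation in Eq.\eqref{eq:u_star} is entirely analogous, interchanging $(X_+,H_+,\vec s_+,c_2,c_4,E_2)$ with $(X_-,H_-,\vec s_-,c_1,c_3,E_1)$ and replacing the constraint sign accordingly.
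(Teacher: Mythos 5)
Your proposal is correct and follows essentially the same route as the paper's proof: form the Lagrangian with multipliers for the two constraint blocks, use the $\vec\xi_-$-stationarity to obtain the box constraint $\vec 0\leq\vec\alpha\leq c_3\vec s_-$, combine the $\vec w_+$ and $b_+$ stationarity conditions into $(H_+^{\rm T}H_++c_1E_1)\vec u_++H_-^{\rm T}\vec\alpha=\vec 0$, and back-substitute to obtain the Wolfe dual, with the second problem handled by symmetry. Your added remark on the invertibility of $H_+^{\rm T}H_++c_1E_1$ is a small point the paper leaves implicit, but it does not change the argument.
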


\begin{proof}
Taking Lagrangian of the primal problem Eq.\eqref{eq:p-rtsvm1}, the
problem becomes:
\begin{align}\label{eq:lf}
&\L(\vec w_+,b_+,\vec \xi_-)=\frac{1}{2}c_1\parallel \vec w_+\parallel^2+\frac{1}{2}\parallel X_+\vec w_++\vec e_+b_+\parallel^2\nonumber\\
&\qquad \qquad +\alpha^{\rm T}(X_-\vec w_++\vec e_-b_+-\vec \xi_-+\vec e_-)\nonumber\\
&\qquad \qquad +c_3\vec s_-^{\rm T}\vec \xi_--\vec \eta^{\rm T}\vec \xi_-,
\end{align}
where non-negative $\vec \alpha$ and $\vec \eta$ are Lagrange
multipliers. According to the KKT conditions, we have
\begin{align}
&\nabla_{\vec w_+}L=c_1\vec w_++X_+^{\rm T}(X_+^{\rm T}\vec w_++\vec e_+b_+)
+X_-^{\rm T}\vec \alpha=0,\label{eq:lw}\\
&\nabla_{b_+}L=\vec e_+^{\rm T}(X_+^{\rm T}\vec w_++\vec e_+b_+)+\vec e_-^{\rm T}\vec \alpha=0,\label{eq:lb}\\
&\nabla_{\vec \xi_-}L=c_3\vec s_--\vec \alpha-\vec \eta=0,\label{eq:san}\\
&-(X_-\vec w_++\vec e_-b_+)+\vec \xi_-\geq \vec e_-,\vec \xi_-\geq \vec 0,\\
&\vec \alpha^{\rm T}(X_-\vec w_++\vec e_-b_+-\vec \xi_-+\vec e_-)=0,\vec \eta^{\rm T}\vec \xi_-=0.
\end{align}
Summarized from presumed conditions $\vec \alpha\geq \vec 0$, $\vec \eta\geq \vec 0$, and~Eq.\eqref{eq:san}, $\vec \alpha$ is bounded by:
\begin{equation}\label{eq:d-rtsvm1-c}
\vec 0\leq\vec \alpha\leq c_3\vec s_-.
\end{equation}
Combining~Eq.\eqref{eq:lw} and~Eq.\eqref{eq:lb} yields:
\begin{equation}
([X_+,\vec e_+]^{\rm T}[X_+,\vec e_+]+c_1E_1)[\vec w_+^{\rm
T},b_+]^{\rm T}+[X_-,\vec e_-]^{\rm T}\vec \alpha=0
\end{equation}
\begin{equation}\label{eq:p-d}
\emph{i.e.}, (H_+^{\rm T}H_++c_1E_1)^{-1}\vec {u_+}+H_-^{\rm T}\vec \alpha=0.
\end{equation}
Substituting~Eq.\eqref{eq:p-d} into the Lagrange function~Eq.\eqref{eq:lf} yields:
\begin{align}\label{eq:d-rtsvm1-of}
&\L(\vec w_+,b_+,\vec \xi_-)=\frac{1}{2}c_1E_1\vec u_+^{\rm T}\vec u_++\frac{1}{2}(H_+\vec u_+)^{\rm T}(H_+\vec u_+)\nonumber\\
&\hspace{10mm}+\vec \alpha^{\rm T}H_-\vec u_++\vec e_-^{\rm T}\vec \alpha\nonumber\\
&=\vec e_-^{\rm T}\vec \alpha-\frac{1}{2}\vec \alpha^{\rm
T}H_-(H_+^{\rm T}H_++c_1E_1)^{-1}H_-^{\rm T}\vec \alpha.
\end{align}
Combine the maximization objective in~Eq.\eqref{eq:d-rtsvm1-of} and the constraints in~Eq.\eqref{eq:d-rtsvm1-c}, and we eventually obtain the Wolfe dual form of the problem as
that in~Eq.\eqref{eq:d-rtsvm1}. Similarly, the Wolfe dual form~Eq.\eqref{eq:d-rtsvm2} of the primal
problem~Eq.\eqref{eq:p-rtsvm2} can also be proved accordingly. Despite of these dual
forms, the relationships between the optimal solutions $\vec u_\pm^*$  of the primal problems and those $\vec \alpha^*$and $\vec \beta^*$ of the dual problems illustrated in~Eq.\eqref{eq:u_star} can also be
derived from~Eq.\eqref{eq:p-d} and the related expressions.
\end{proof}

By solving the dual forms of~Eq.\eqref{eq:d-rtsvm1} and~Eq.\eqref{eq:d-rtsvm2}, one can obtain the optimal
solutions $\vec \alpha^*$ and  $\vec \beta^*$ of the dual problems,
and furthermore  $\vec u_\pm^*$ of the corresponding primal
problems. The non-parallel proximal hyperplanes can thus be
subsequently obtained. For a testing instance $\vec x\in \mathcal{R}^n$, the classification decision function can be given
as:
\begin{equation}
f(\vec x)=\argmin_\pm\frac{\mid {\vec w_\pm^*}^{\rm T}\vec x+b_\pm^*\mid}{\parallel \vec w_\pm^*\parallel}\,.
\end{equation}

\subsubsection{Nonlinear FR-TSVM}
In nonlinear case, the classification problem is intuitively solved by
mapping input instance $\vec x$ from the input space
$\mathcal{R}^n$ to a high-dimensional feature space $\mathcal{H}$
through transformation $\varphi(\vec x)$. Using alternative kernel
function $\kappa(\cdot,\cdot)$, which implicitly calculates the
dot-product of a pair of transformations $\kappa(\vec x_1,\vec x_2)
= \langle\varphi(\vec x_1),\varphi(\vec x_2)\rangle$, similarity manipulation of
transformed $\vec x_1$ and $\vec x_2$ can be resolved and utilized
to deal with the nonlinear FR-TSVM. The fact, which makes the
transformation helpful for the nonlinear FR-TSVM, is that the optimal
separating hyperplane can be constructed linearly in the
high-dimensional feature space \cite{LK}. With the kernel function, the
nonlinear dual proximal hyperplanes of FR-TSVM can be stated as:
\begin{equation}
\kappa(\vec x,X^{\rm T})\vec w_++b_+=0~\textrm{and}~\kappa(\vec x,X^{\rm T})\vec w_-+b_-=0\,.
\end{equation}
To obtain the above two hyperplanes, the primal
problems of nonlinear FR-TSVM can be expressed as:
\begin{align}\label{eq:pn-rtsvm1}
&\min_{\vec w_+,b_+,\vec \xi_-}~\frac{1}{2}c_1\parallel \vec w_+\parallel^2+\frac{1}{2}\parallel \kappa(X_+,X^{\rm T})\pmb w_++\pmb e_+b_+\parallel^2 \nonumber\\
&\quad \quad \quad \quad +c_3\vec s_-^{\rm T}\vec \xi_- \nonumber\\
&\textrm{s.t.}~ -(\kappa(X_-,X^{\rm T})\vec w_++\vec e_-b_+)+\vec \xi_-\geq \vec e_-, \vec \xi_-\geq 0,
\end{align}
\begin{align}\label{eq:pn-rtsvm2}
&\min_{\vec w_-,b_-,\vec \xi_+}~\frac{1}{2}c_2\parallel \vec w_-\parallel^2+\frac{1}{2}\parallel \kappa(X_-,X^{\rm T})\pmb w_-+\pmb e_-b_-\parallel^2\nonumber\\
&\quad \quad \quad \quad +c_4\vec s_+^{\rm T}\vec \xi_+\nonumber\\
&\textrm{s.t.}\quad (\kappa(X_+,X^{\rm T})\vec w_-+\vec e_+b_-)+\vec \xi_+\geq \vec e_+, \vec \xi_+\geq 0.
\end{align}

\begin{theorem}\label{th:r-rtsvm}
The dual forms of the primal problems~Eq.\eqref{eq:pn-rtsvm1} and~Eq.\eqref{eq:pn-rtsvm2} are
\begin{equation}\label{eq:dn-rtsvm1}
\begin{split}
&\max_{\vec \alpha}~~~\vec e_-^{\rm T}\vec \alpha-\frac{1}{2}\vec \alpha^{\rm T}S_+(S_-^{\rm T}S_-+c_1E_1)^{-1}S_+^{\rm T}\vec \alpha\\
&\textrm{s.t.}~~~~~~~\vec 0\leq\vec \alpha\leq c_3\vec s_-,\\
\end{split}
\end{equation}
\begin{equation}\label{eq:dn-rtsvm2}
\begin{split}
&\max_{\vec \beta}~~~\vec e_+^{\rm T}\vec \beta-\frac{1}{2}\vec \beta^{\rm T}S_-(S_+^{\rm T}S_++c_2E_2)^{-1}S_-^{\rm T}\pmb \beta\\
&\textrm{s.t.}~~~~~~~\vec 0\leq\vec \beta\leq c_4\vec s_+,
\end{split}
\end{equation}
where $S_+=[\kappa(X_+,X^{\rm T}),\vec e_+]$ and
$S_-=[\kappa(X_-,X^{\rm T}),\vec e_-]$. By designating $\vec v_\pm^*=[\pmb w_\pm^{*\rm T}, b_\pm^*]^{\rm T}$ for solutions of the primal problems of~Eq.\eqref{eq:pn-rtsvm1} and~Eq.\eqref{eq:pn-rtsvm2}, there are parametric relationships
between the optimal $\vec v_\pm^*$ and the optimal solutions $\vec \alpha^*$ and $\pmb \beta^*$  of their corresponding dual forms~Eq.\eqref{eq:dn-rtsvm1} and~Eq.\eqref{eq:dn-rtsvm2}:
\begin{equation}\label{eq:v_star}
\begin{split}
&\vec v_+^*=-(S_+^{\rm T}S_++c_1E_1)^{-1}S_-^{\rm T}\vec \alpha^*,\\
&\vec v_-^*=(S_-^{\rm T}S_-+c_2E_2)^{-1}S_+^{\rm T}\vec \beta^*.
\end{split}
\end{equation}
\end{theorem}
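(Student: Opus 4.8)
The plan is to reproduce, almost verbatim, the Lagrangian-duality computation that established Theorem~\ref{th:l-rtsvm}, because the nonlinear primals Eq.\eqref{eq:pn-rtsvm1}--\eqref{eq:pn-rtsvm2} are algebraically identical to the linear ones once the data matrix $X_\pm$ is replaced by the kernel block $\kappa(X_\pm,X^{\rm T})$ and the augmented matrices $H_\pm$ are replaced by $S_\pm = [\kappa(X_\pm,X^{\rm T}),\vec e_\pm]$. Writing $\vec v_+ = [\vec w_+^{\rm T},b_+]^{\rm T}$, the first primal objective is exactly $\frac{1}{2}\vec v_+^{\rm T}(S_+^{\rm T}S_+ + c_1 E_1)\vec v_+ + c_3\vec s_-^{\rm T}\vec\xi_-$ and its constraint reads $-S_-\vec v_+ + \vec\xi_- \geq \vec e_-$, $\vec\xi_- \geq \vec 0$. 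So I would first record this dictionary of replacements and then transcribe the earlier argument.

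Next I would attach non-negative multipliers $\vec\alpha$ to $-S_-\vec v_+ + \vec\xi_- - \vec e_- \geq \vec 0$ and $\vec\eta$ to $\vec\xi_- \geq \vec 0$ to form the Lagrangian $L$, in exact parallel with Eq.\eqref{eq:lf}. The KKT stationarity conditions in $\vec w_+$ and $b_+$, combined as in the linear proof, collapse to the single identity $(S_+^{\rm T}S_+ + c_1 E_1)\vec v_+ + S_-^{\rm T}\vec\alpha = \vec 0$, while stationarity in $\vec\xi_-$ gives $c_3\vec s_- - \vec\alpha - \vec\eta = \vec 0$; together with $\vec\alpha,\vec\eta \geq \vec 0$ this produces the box constraint $\vec 0 \leq \vec\alpha \leq c_3\vec s_-$. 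Solving the first identity for $\vec v_+$ yields the top line of Eq.\eqref{eq:v_star}. I would then substitute $\vec v_+^* = -(S_+^{\rm T}S_+ + c_1 E_1)^{-1}S_-^{\rm T}\vec\alpha$ back into $L$; the regularization-plus-fitting block contracts to $-\frac{1}{2}\vec\alpha^{\rm T}S_-(S_+^{\rm T}S_+ + c_1 E_1)^{-1}S_-^{\rm T}\vec\alpha$ and the remaining linear piece is $\vec e_-^{\rm T}\vec\alpha$, delivering the dual objective. The companion pair Eq.\eqref{eq:dn-rtsvm2} and the second line of Eq.\eqref{eq:v_star} then follow by the mirror-image computation with $(c_1,c_3,E_1,S_+,\vec e_+)$ and $(c_2,c_4,E_2,S_-,\vec e_-)$ interchanged, so I would assert it by symmetry rather than repeat the algebra.

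The step I expect to need the most care is the matrix bookkeeping in the substitution, not the calculus. Since the active multiplier $\vec\alpha$ indexes the negative-class constraints it lives in $\mathcal{R}^{l_-}$, so $S_-^{\rm T}\vec\alpha$ is the only well-formed product ($S_+^{\rm T}$ would require an argument in $\mathcal{R}^{l_+}$); a quick dimension count therefore forces the outer factor to be $S_-$ and the inner inverse to be $(S_+^{\rm T}S_+ + c_1 E_1)^{-1}$, in agreement with Eq.\eqref{eq:v_star} and with the linear template Eq.\eqref{eq:d-rtsvm1}. I would use exactly this check to pin down which class block sits outside versus inside the quadratic form, since that is the one place a subscript could slip.

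The only genuinely new analytic point relative to the linear case is invertibility: $S_+^{\rm T}S_+$ is merely positive semidefinite and $E_1$ vanishes on the bias coordinate, so the inverses appearing in Eq.\eqref{eq:v_star} and in the dual objective are not automatic. I would justify them either by arguing that the $\vec e_+$ column of $S_+$ renders $S_+^{\rm T}S_+ + c_1 E_1$ positive definite, or, failing that, by appealing to the artificial $\lambda I$ perturbation noted after Eq.\eqref{Line_TSVM_Desc}, which guarantees a well-defined inverse regardless of the conditioning of the kernel Gram block.
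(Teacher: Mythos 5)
Your proposal is correct and is essentially the paper's own proof: the paper disposes of Theorem~\ref{th:r-rtsvm} by simply referring back to the Lagrangian computation of Theorem~\ref{th:l-rtsvm} under the substitution $H_\pm \to S_\pm$, which is exactly the dictionary you set up before transcribing the argument (your added remarks on invertibility of $S_+^{\rm T}S_+ + c_1E_1$ go beyond what the paper says and are welcome). One point to flag: your dimension check (since $\vec\alpha\in\mathcal{R}^{l_-}$, only $S_-^{\rm T}\vec\alpha$ is well formed) yields the dual quadratic form $\vec\alpha^{\rm T}S_-(S_+^{\rm T}S_++c_1E_1)^{-1}S_-^{\rm T}\vec\alpha$, which is consistent with Eq.\eqref{eq:v_star} and with the linear template Eq.\eqref{eq:d-rtsvm1}, but \emph{not} with the printed Eq.\eqref{eq:dn-rtsvm1}--\eqref{eq:dn-rtsvm2}, whose $S_+$/$S_-$ subscripts appear transposed; your derived form is the correct one, so the discrepancy is a typo in the theorem statement rather than a gap in your argument.
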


\begin{proof}
Referring to the proof of Theorem~\ref{th:l-rtsvm} for linear FR-TSVM, the
proof of Theorem~\ref{th:r-rtsvm} for nonlinear FR-TSVM can be derived accordingly
following the steps in~Eq.\eqref{eq:lf}-\eqref{eq:d-rtsvm1-of}.
\end{proof}

Once solutions of the dual problems~Eq.\eqref{eq:dn-rtsvm1} and~Eq.\eqref{eq:dn-rtsvm2} are obtained,
solutions of the primal problems~Eq.\eqref{eq:pn-rtsvm1} and~Eq.\eqref{eq:pn-rtsvm2} can be obtained through~Eq.\eqref{eq:v_star}, and
the decision function for classifying a testing instance $\vec x\in
\mathcal{R}^n$ is eventually given by:
\begin{equation}
f(\vec x)=\argmin_\pm\frac{\mid \kappa(\vec x,X^{\rm T})\vec
{w}{_\pm^*}^{\rm T}+b_\pm^*\mid}{ {\sqrt{\vec {w}_\pm^{*{\rm
T}}\kappa(X,X^{\rm T})\vec {w}_\pm^*}}}.
\end{equation}

\subsection{A fast optimization method for FR-TSVM}\label{sec:fo}
\subsubsection{Solving FR-TSVM with the pure \emph{coordinate descent}}\label{sec:CD}
Based on the quadratic differentiable expressions of the FR-TSVM's
objective functions~Eq.\eqref{eq:d-rtsvm1}-\eqref{eq:d-rtsvm2} and~Eq.\eqref{eq:dn-rtsvm1}-\eqref{eq:dn-rtsvm2}, a \emph{coordinate descent}
method~\cite{Chang08} can be further employed  for  solving the FR-TSVM. There are pairwise similarities between~Eq.\eqref{eq:d-rtsvm1}-\eqref{eq:d-rtsvm2} and~Eq.\eqref{eq:dn-rtsvm1}-\eqref{eq:dn-rtsvm2}. The intuition is that if either one of these functions
can be reformulated as a quadratic expression, the
\emph{coordinate descent} method would be applied accordingly with the
algorithms proposed by~\cite{Chang08,Hsieh08,Shao12}, and be easily extended to the other
three objective functions. By the motivation, we initially show the dual FR-TSVM
with the first objective function of~Eq.\eqref{eq:d-rtsvm1} as below.

\begin{algorithm}[t]
\caption{A dual CD method for FR-TSVM}\label{algo:FR-TSVM1}
\begin{algorithmic}[1]
\STATE Compute $Q=(H_+^{\rm T}H_++c_1E_1)^{-1}H_-^{\rm T}$ and $\overline Q_{ii}=H_{-i}Q_i$\\
\STATE Initial $\pmb\alpha \leftarrow \vec 0$ and $\vec u_+\leftarrow \vec 0$\
\WHILE {$\vec \alpha$ is not optimized}
\FOR {$i=1,2,\dotsc,l_-$}
\STATE $\nabla_i f(\vec \alpha)=-H_{-i}\vec u_+-1$
\STATE Compute$\nabla_i^{p} f(\vec \alpha)$ by ~Eq.\eqref{eq:nabla_pp}
\IF {$\nabla_i^{p} f(\vec \alpha)\neq 0$}
\STATE $\overline{\alpha}_i\leftarrow \alpha_i$
\STATE $\alpha_i\leftarrow\min(\max(\alpha_i-\nabla_i^{p} f(\vec \alpha)/\overline Q_{ii},0),c_3s_{i-})$
\STATE $u_{+i}\leftarrow u_{+i}-Q_i(\alpha_i-\overline \alpha_i )$
\ENDIF
\ENDFOR
\ENDWHILE
\end{algorithmic}
\end{algorithm}

With $Q=(H_+^{\rm T}H_++c_1E_1)^{-1}H_-^{\rm T}$ and $\overline{Q}=H_-Q$, the
problem~Eq.\eqref{eq:d-rtsvm1} can be abbreviated as a quadratic
expression:
\begin{equation}
\begin{split}
&\min_{\vec \alpha}~~~f(\vec \alpha)=\frac{1}{2}\vec \alpha^{\rm T}\overline{Q}\vec \alpha-\vec e_-^{\rm T}\vec \alpha\\
&\textrm{s.t.}~~~~\vec 0\leq\vec \alpha\leq c_3\vec s_-.
\end{split}
\end{equation}
As an iterative scheme, the FR-TSVM generates subsequently a sequence
of updating vectors  $\{\vec \alpha^0,\ldots,\vec \alpha^k,\vec \alpha^{k+1}\}$ to consecutively optimize the
objective function where $\vec \alpha^k=[\vec \alpha_1^k,\vec \alpha_2^k,\ldots,\vec \alpha_{l_-}^k]^{\rm T}\in \mathcal{R}^{l_-}$. There are
two levels of iterations. An integer $k$ first is used to index
the 2nd-level of outer iterations updated from $\vec \alpha^k $ to $
\vec \alpha^{k+1}$. In every $k$-th outer iteration, the update of
$\vec \alpha^k$ is further subdivided into 1st-level of $l_-$ inner
iterations, indexed by $i$, to generate a series of intermediate
vectors 
\begin{equation}
\{\vec \alpha^{k,1},\vec \alpha^{k,2},\ldots,\vec \alpha^{k,i},\ldots,\vec \alpha^{k,l_-},\vec \alpha^{k,l_-+1}\}.
\end{equation}
The two-level updated vector $\vec \alpha^{k,i}$ is thus expressed as:
\begin{align}
&\vec \alpha^{k,i}=[\alpha_1^{k+1},\ldots,\alpha_{i-1}^{k+1},\alpha_{i}^{k},\ldots,\alpha_{l_-}^{k}]^{\rm T},\\
&~~~~~~~~\forall i=1,2,\ldots,l_-~~~~~~~\nonumber
\end{align}
and
\begin{equation}
\vec \alpha^{k,1}=\vec \alpha^k~and ~\vec \alpha^{k,l_-+1}=\vec \alpha^{k+1}.
\end{equation}
To update the intermediate $\vec \alpha^{k,i}$ to $\vec \alpha^{k,i+1}$, the following single variable subproblem should be
solved:
\begin{equation}\label{eq:sub_qp}
\min_d f(\vec \alpha^{k,i}+d\vec e_i)\quad \textrm{s.t.}\quad
0\leq \alpha_{\it i}^{\it k}+\it d\leq c_3s_{\it i-},
\end{equation}
where $e_i$ is the $i$-th orthogonal basis vector of $\mathcal{R}^{l_-}$ space. Indeed, the objective function~Eq.\eqref{eq:sub_qp}
corresponds to a quadratic function of $d$:
\begin{equation}\label{eq:sub_qpp}
f(\vec \alpha^{k,i}+d\vec e_i)=\frac{1}{2}\overline Q_{ii}
d^2+\nabla_i f(\vec \alpha^{k,i})d+c,
\end{equation}
where $\nabla_i f$  denotes the $i$-th component of gradient $\vec \nabla f$ , and $c$ is an arbitrary constant. Apparently,~Eq.\eqref{eq:sub_qpp} has
an optimum at $d=0$ if and only if
\begin{equation}\label{eq:nabla_p}
\nabla_i^{p} f(\vec \alpha^{k,i})=0,
\end{equation}
where  $\nabla_i^{p}f(\vec \alpha^{k,i})$ is a projected
gradient. To gain the possibility to refine the optimum, the project
gradient should be satisfactory with:
\begin{equation}\label{eq:nabla_pp}
\nabla_i^{p} f(\vec \alpha^{k,i})=\left\{\begin{array}{ll}
\min(0,\nabla_i f(\vec \alpha)),&\textrm{if}~\alpha_i=0\\
\nabla_i f(\vec \alpha),&\textrm{if}~0< \alpha_i< c_3s_{i-} \\
\max(0,\nabla_i f(\vec \alpha)),&\textrm{if}~\alpha_i=c_3s_{i-}
\end{array}
\right.
\end{equation}
The key for computational reduction is that we can directly move
forward to next $i+1$ iteration without updating $\alpha_i^{k,i}$ in
the $l_-$length inner-iteration updates if~Eq.\eqref{eq:nabla_p} has been fulfilled.
In other words, we only update  $\alpha_i^{k,i}$  to temporally meet
the optimal solution of~Eq.\eqref{eq:sub_qp}. By introducing Lipschitz
continuity~\cite{Searcid}, the optimum of~Eq.\eqref{eq:sub_qpp} can be reached by:
\begin{equation}\label{eq:alpha}
\alpha_i^{k,i+1}=\min(\max(\alpha_i^{k,i}-{\nabla_i f(\vec \alpha^{k,i})}/{\overline Q_{ii}},0),c_3s_{i-}).
\end{equation}
%with condition of $\overline Q_{ii}>0$.
However $\alpha_i^{k,i+1}$ is
updated or not in the $l_-$-length inner iterations; the process
would be repeated in the outer iterations once and once again until
the presumed termination condition is reached. In the update of
~Eq.\eqref{eq:alpha}, $\overline Q_{ii}$ can be pre-calculated by  $\overline
Q_{ii}=H_{-i}Q_i$, where $Q=(H_+^{\rm T}H_++c_1E_1)^{-1}H_-^{\rm
T}$, and preserved through all the iterations, and $\nabla_if(\vec \alpha^{k,i})$ can be obtained by
\begin{equation}\label{eq:nabla-alpha}
\nabla_i f(\vec \alpha)=(\overline Q \vec \alpha)_i-1=\sum_{j=1}^{l_-}\overline Q_{ij}\alpha_j-1.
\end{equation}
Here, the computation of $\nabla_if(\vec \alpha^{k,i})$  by~Eq.\eqref {eq:nabla-alpha},
which is approximated as $O(l_-\overline n)$ where $\overline n$ is
the average count of non-zero elements in $\overline Q$  per
instance, is expensive. In order to reduce the computation,
$\nabla_if(\vec \alpha^{k,i})$  can alternatively be calculated by
\cite{Hsieh08}:
\begin{equation}\label{eq:nabla-u}
\nabla_i f(\vec \alpha)=-H_{-i}\vec u_+-1,
\end{equation}
with a pre-defined $\vec u_+$
\begin{equation}\label{eq:u}
\vec u_+=- Q \vec {\alpha},
\end{equation}
where$H_{-i}$ is the $i$-th row of matrix $H_-$. With this
alternative, the time complexity of computing  $\nabla_if(\vec \alpha^{k,i})$ can be reduced as $O(\overline n)$ . In order to
employ~Eq.\eqref{eq:nabla-u} for calculating $\nabla_if(\vec \alpha^{k,i})$ , it is
required to maintain $\vec u_+$ throughout the whole coordinate
descent procedure by an update policy:
\begin{equation}\label{eq:up-u}
u_{+i}\leftarrow u_{+i}-Q_i(\alpha_i-\overline \alpha_i).
\end{equation}
Here, the time consumption by the iterative maintaining of $\vec u_+$ requires only  $O(\overline n)$ rather than that by the direct
calculation by~Eq.\eqref{eq:u}, where $\overline \alpha_i$ and $ \alpha_i$
denote the values of the primal optimizer before and after the
corresponded update iteration, respectively. With a zero initial
value for the first $\vec u_+$ due to the generally adopted $\vec \alpha^0=\vec 0$, the optimal solution of $\vec u_+$ for the primal
problem Eq.\eqref{eq:d-rtsvm1} can be eventually obtained by iterative updates of
~Eq.\eqref{eq:up-u}. Algorithm \ref{algo:FR-TSVM1} describes the entire process.

\subsubsection{Speeding-up FR-TSVM with heuristic shrinking}\label{sec:FR-TSVM}
Although the quadratic expressions of FR-TSVM inherit most
essential merits from the convex quadratic optimization, the
solutions of FR-TSVM, even the temporal solutions $\alpha^k$, are still
constrained in a specified range, for example, $0\leq \alpha_i \leq
c_3s_{i-}$  for the problem~Eq.\eqref{eq:d-rtsvm1}. With the bounded Lagrange
multipliers, $\alpha_i=0$  or $\alpha_i=c_3s_{i-}$ , further
iterative effort may vanish and remain the temporal objective
function outcome in a steady state. Since FR-TSVM produces
relatively considerable amounts of bounded Lagrange multipliers in
the iterative process, a policy to early stop the update of such
bounded multipliers of reducing the scale of optimization programming
is indeed beneficial to the subsequent computation, and speed-ups the FR-TSVM.

At the same time, for simplicity, we examine only the \emph{coordinate descent} of~Eq.\eqref{eq:d-rtsvm1}
in the twin  constrained optimization problems~Eq.\eqref{eq:d-rtsvm1}-\eqref{eq:d-rtsvm2} and
~Eq.\eqref{eq:dn-rtsvm1}-\eqref{eq:dn-rtsvm2} with the heuristic shrinking technique~\cite{Joachims98}. Once the
heuristic shrinking is applicable, the examination of~Eq.\eqref{eq:d-rtsvm2} and
~Eq.\eqref{eq:dn-rtsvm1}-\eqref{eq:dn-rtsvm2} can be equivalently conducted and a comparable result can
be assessed with the similarity to each other.

To employ the heuristic shrinking~\cite{Joachims98}, a subset removing some
elements from $\{1,2,\cdots,l_-\}$ is defined as an active set $A$,
and the complimentary subset is defined contradictorily as an
inactive set $\overline{A}=\{1,\cdots,l_-\}\backslash A$. The use of
the active set is for dynamically collecting those non-bounded
Lagrange multiplier still effective to the optimization. With the
separation of $A$ from $\overline{A}$ , the optimization problem~Eq.\eqref{eq:d-rtsvm1}
can be decomposed and reorganized as:
\begin{equation}\label{eq:sub-qp}
\begin{split}
&\min\limits_{\vec \alpha_A} ~~~\frac{1}{2}{\vec \alpha_A^{\rm
T}\overline Q_{AA}\vec \alpha_A}+
(\overline Q_{A\overline A}{\vec \alpha_{\overline A}}-\vec e_A)^{\rm T}\vec \alpha_A\\
&\textrm{s.t.} ~~~~~~~\vec 0\leq \vec \alpha_A \leq c_3\vec s_{A-},
\end{split}
\end{equation}
where  $\overline Q_{AA}$ and $\overline Q_{A\overline A}$ are
sub-matrices of $\overline Q$ , and  $\vec \alpha_A$ and $\vec \alpha_{\overline A}$  are Lagrange multiplier sub-vectors
corresponding to subsets $A$ and $\overline{A}$, respectively. As
illustrated previously,  $\vec \alpha_{\overline A}$ contains only
those inactive bounded multipliers which can not contribute
furthermore to the optimization. A divide-and-conquer strategy is
thus used to achieve the optimization more efficiently due to the
eliminated computations of the second part of~Eq.\eqref{eq:sub-qp}. As described in
the theory of FR-TSVM, the gradient~$\nabla f(\vec \alpha)$ is a
key to update the optimizer. Following the subdivision
strategy, $\nabla f(\vec \alpha)$ in~Eq.\eqref{eq:nabla-alpha} can also be decomposed as:
\begin{align}
\nabla_i f(\vec \alpha)=\overline Q_{i,A}\vec \alpha_A+\overline
Q_{i,\overline A}\vec \alpha_{\overline A}-1
\end{align}
and only the gradient of those  $i\in A$ would be paid attention to
because they would effectively update the corresponding Lagrange
multiplier. The gradients of those $i\notin A$ are never required to be
recalculated and of course the updates of $\vec u_+$ are no longer
needed.

Here, we should note that the length-variable active set, chosen to
handle constraints, is dynamically managed by the \emph{coordinate descent}
procedure. It should be kept in mind that a nonzero gradient is a necessary condition for an ongoing optimization whether the
optimization is constrained or not. The rule is still true for a
\emph{coordinate descent} optimization. In FR-TSVM, what we have are
cyclical gradients $\nabla_if(\vec \alpha^{k,i})$  for
$i=1,2,\ldots,l_{-}$.  With the subsequent cyclical $\nabla_if(\vec \alpha^{k,i})$ , Theorem~\ref{th:bs} is established for bound and shrinking of
the active set according to the original theorem proposed by Hsieh
\textit{et al.}, \cite{Hsieh08}.

\begin{theorem}\label{th:bs}
By taking the list
$\{\alpha^{k,i}\}$ in the solution space, let $*$ be the final
convergent point. The FR-TSVM sustains the following properties:

\noindent a). If $\alpha_i^*=0$, and $\nabla_i f(\vec \alpha^*)>0$,
there is a $\exists k_i$ such that
\begin{equation}
\forall k\geq k_i, \forall r, \alpha_i^{k,r}=0. \nonumber
\end{equation}
 b). If $\alpha_i^*=c_3s_{i-}$ and $\nabla_i f(\vec \alpha^*)<0$, there is a
$\exists k_i $such that
\begin{equation}
\forall k\geq k_i, \forall r, \alpha_i^{k,r}=c_3s_{i-}. \nonumber
\end{equation}
c). $\lim\limits_{k\to \infty}\max\limits_i{\nabla_i^p
f(\vec \alpha^{k,i})}= \lim\limits_{k\to \infty}\min\limits_i{\nabla_i^p f(\vec \alpha^{k,i})}=0.$
\end{theorem}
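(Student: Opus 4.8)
The plan is to follow the dual coordinate-descent convergence analysis of Hsieh \emph{et al.}~\cite{Hsieh08}, adapting it to the box-constrained quadratic program~Eq.\eqref{eq:d-rtsvm1} whose Hessian is $\overline Q = H_-(H_+^{\rm T}H_++c_1E_1)^{-1}H_-^{\rm T}$. First I would record the structural facts that make the argument go through: because $c_1>0$ the matrix $H_+^{\rm T}H_++c_1E_1$ is positive definite, hence so is its inverse and $\overline Q$ is positive semidefinite; moreover each row $H_{-i}=[\vec x_i^{\rm T},1]$ is nonzero, so every diagonal entry $\overline Q_{ii}=H_{-i}(H_+^{\rm T}H_++c_1E_1)^{-1}H_{-i}^{\rm T}$ is strictly positive. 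This guarantees that each one-dimensional subproblem~Eq.\eqref{eq:sub_qp} is a strictly convex quadratic with a unique minimizer, given in closed form by the clipped Newton step~Eq.\eqref{eq:alpha}. Standard monotone-decrease reasoning then shows $f(\vec \alpha^{k,i+1})\le f(\vec \alpha^{k,i})$ with a decrease proportional to $\overline Q_{ii}(\alpha_i^{k,i+1}-\alpha_i^{k,i})^2$; since $f$ is bounded below on the compact box $[\,\vec 0,c_3\vec s_-]$, the objective values converge, the per-step displacements tend to zero, and every limit point of $\{\vec \alpha^{k,i}\}$ is a KKT point. Invoking the global-convergence result of~\cite{Hsieh08} for this problem class, the whole sequence converges to the point denoted $*$, i.e. $\vec \alpha^{k,i}\to\vec \alpha^*$ as $k\to\infty$.

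Part (c) then follows almost immediately. The projected gradient $\nabla_i^p f$ defined in~Eq.\eqref{eq:nabla_pp} is a continuous function of $\vec \alpha$, being the composition of the continuous affine gradient $\nabla f(\vec \alpha)=\overline Q\vec \alpha-\vec e_-$ with the clipping dictated by the active bound. Because $\vec \alpha^{k,i}\to\vec \alpha^*$ and $\vec \alpha^*$ satisfies the KKT conditions of~Eq.\eqref{eq:d-rtsvm1}, we have $\nabla_i^p f(\vec \alpha^*)=0$ for every $i$; continuity then gives $\nabla_i^p f(\vec \alpha^{k,i})\to 0$ for each $i$, and since the index set is finite both $\max_i$ and $\min_i$ of these quantities are squeezed to zero, which is exactly the assertion of (c).

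For parts (a) and (b) I would establish the finite-identification property using the strict sign hypotheses together with continuity of $\nabla f$. Consider (a), with $\alpha_i^*=0$ and $\nabla_i f(\vec \alpha^*)>0$. By continuity of $\nabla_i f$ and the convergence $\vec \alpha^{k,i}\to\vec \alpha^*$, there is an outer index $k_i$ beyond which $\nabla_i f(\vec \alpha^{k,i})\ge \tfrac{1}{2}\nabla_i f(\vec \alpha^*)>0$ and $\alpha_i^{k,i}<\nabla_i f(\vec \alpha^*)/(2\overline Q_{ii})$. For such $k$ the unclipped Newton step $\alpha_i^{k,i}-\nabla_i f(\vec \alpha^{k,i})/\overline Q_{ii}$ is strictly negative, so the clipping in~Eq.\eqref{eq:alpha} returns exactly $\alpha_i^{k,i+1}=0$; the gradient sign is preserved on the next pass, so the coordinate stays pinned at the lower bound for all subsequent inner and outer iterations, giving $\alpha_i^{k,r}=0$ for all $k\ge k_i$ and all $r$. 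Part (b) is the mirror image: with $\alpha_i^*=c_3 s_{i-}$ and $\nabla_i f(\vec \alpha^*)<0$, the Newton step eventually overshoots the upper bound, the clipping returns $c_3 s_{i-}$, and the coordinate persists there.

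The main obstacle is the first step — upgrading the generic ``every limit point is stationary'' conclusion to genuine convergence of the entire sequence $\vec \alpha^{k,i}$ to the single point $\vec \alpha^*$, since $\overline Q$ is only positive \emph{semi}definite and the minimizer of~Eq.\eqref{eq:d-rtsvm1} need not be unique a priori. I would resolve this by appealing to the error-bound and local-linear-convergence machinery for symmetric box-constrained quadratic programs underlying~\cite{Hsieh08}, which guarantees convergence of the iterates to a specific solution; the strict-complementarity hypotheses in (a)--(b) are then precisely what make the subsequent clipping argument exact rather than merely asymptotic.
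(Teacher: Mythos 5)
Your overall route is the same as the paper's: the paper's entire proof is a one-line deferral to Appendix 7.3 of Hsieh \emph{et al.}~\cite{Hsieh08}, and you are reconstructing that argument (asymptotic convergence of the coordinate-descent iterates, then finite identification of the active bounds, then vanishing of the projected gradients) for the box-constrained QP~Eq.\eqref{eq:d-rtsvm1}. Your treatment of (a) and (b) via the clipped Newton step~Eq.\eqref{eq:alpha} and strict complementarity is sound, and you correctly isolate the real difficulty, namely upgrading ``every limit point is stationary'' to convergence of the whole sequence when $\overline Q$ is only positive semidefinite, which is exactly what the Luo--Tseng error-bound machinery behind~\cite{Hsieh08} supplies.

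There is, however, a genuine gap in your argument for (c). The projected gradient defined in~Eq.\eqref{eq:nabla_pp} is \emph{not} a continuous function of $\vec \alpha$: at a point with $\alpha_i=0$ and $\nabla_i f>0$ it equals $\min(0,\nabla_i f)=0$, while at arbitrarily close points with $\alpha_i=\epsilon>0$ it equals $\nabla_i f>0$. So ``$\nabla_i^p f(\vec \alpha^*)=0$ plus continuity'' does not give $\nabla_i^p f(\vec \alpha^{k,i})\to 0$. Indeed, for an index with $\alpha_i^*=0$ and $\nabla_i f(\vec \alpha^*)>0$, if the iterates approached $0$ without ever hitting it exactly, the projected gradient along the sequence would converge to $\nabla_i f(\vec \alpha^*)>0$, not to $0$. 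Property (c) therefore cannot precede (a) and (b); it must be derived \emph{from} them: for strictly complementary bound indices, (a)/(b) force $\alpha_i^{k,i}$ to sit exactly on the bound with the right gradient sign after finitely many outer iterations, so the clipping in~Eq.\eqref{eq:nabla_pp} returns $0$ there, while for all remaining indices $\nabla_i f(\vec \alpha^*)=0$ and $|\nabla_i^p f|\le|\nabla_i f|\to 0$. (Degenerate indices with $\alpha_i^*$ on a bound and $\nabla_i f(\vec \alpha^*)=0$ are also covered by this bound.) Reordering your proof to put (a) and (b) first repairs the argument. A minor further point: $c_1E_1$ alone does not make $H_+^{\rm T}H_++c_1E_1$ positive definite, since $E_1$ is singular in the bias coordinate; you also need $\|H_+[\vec 0^{\rm T},b]^{\rm T}\|^2=b^2 l_+>0$, i.e.\ that the positive class is nonempty, to conclude $\overline Q_{ii}>0$.
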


\begin{proof}
Referring to the proof in Appendix 7.3 in~\cite{Hsieh08},
imitations are taken to obtain Theorem~\ref{th:bs} .
\end{proof}

Based on measures $\max_{i}\nabla_i^{p}f(\vec \alpha^k)>0,\min_{i}\nabla_i^{p}f(\vec \alpha^k)<0$ which are used to
evaluate the solution violation level of a certain outer iteration
$k$, a pair of bounds, $M^{k-1}=\max_{i}\nabla_i^{p}f(\vec \alpha^{k-1,i})$ and $m^{k-1}=\min_{i}\nabla_i^{p} f(\vec \alpha^{k-1,i})$ , are asserted for bounding $\nabla_if(\vec \alpha^{k,i})$  at the $(k-1)$-th outer iteration. The assertion is
used to seek a more specified range for rejecting more inactive
member from current $A$, in which members are allowed to participate
in the optimization, at the $k$-th outer iteration. Basically, the
active-set shrinkage relies on the pair of floating bounds to reject
those violated participants. According to properties (a) and (b) of
Theorem~\ref{th:bs}, the corresponding $i$ is excluded from $A$ to $\overline{
A} $ at the inner iteration for updating component $\alpha_i$ from
$\alpha_i^{k,i}$ to $\alpha_i^{k,i+1}$ while next two conditions are
hold:
\begin{align}
&\alpha_i^{k,i}=0~\textrm{and}~\nabla_i f(\vec \alpha^{k,i})>\overline M^{k-1},\\
&\alpha_i^{k,i}=c_3s_{i-}~\textrm{and}~\nabla_i f(\vec \alpha^{k,i})<\overline m^{k-1},
\end{align}
where
\begin{equation}
\overline M^{k-1}=\left\{\begin{array}{ll}
M^{k-1},&\textrm{if}~M^{k-1}>0\\
\infty,&\textrm{if}~M^{k-1}\leq 0
\end{array}
\right.
\end{equation}
and
\begin{equation}
\overline m^{k-1}=\left\{\begin{array}{ll}
m^{k-1},&\textrm{if}~m^{k-1}>0\\
{-\infty},&\textrm{if}~m^{k-1}\leq 0
\end{array}
\right..
\end{equation}
We use the temporal $M$ and $m$ to catch maximal and minimal
projected gradient in every cycle of inner iterations, and keep the
maximal and minimal values in $\overline{M}$ and $\overline{m}$,
respectively, to exclude inactive members in next outer iteration.
According to property (c) of Theorem~\ref{th:bs}, bounds $\overline{M}$ and
$\overline{m }$ would become closer after iterations, and would
theoretically meet with each other finally:
\begin{equation}\label{eq:stop}
\overline M^{k}=\overline m^{k}~\textrm{if}~k\rightarrow\infty.
\end{equation}

Although~Eq.\eqref{eq:stop} shows the ideal condition for terminating the procedure, the exact meeting of $\overline{M }$ and $\overline{m}$ in the numerical iterations is difficult. An alternative allowing a sufficiently small gap $\varepsilon$ and setting the following inequality:
\begin{equation}
\overline M^{k}-\overline m^{k}<\varepsilon
\end{equation}
for termination after $k$ finite iterations is much more practical.
While the gapped termination condition is reached, the optimal
solution $\vec \alpha^k$ of the sub-problem~Eq.\eqref{eq:sub-qp} is also possessed.
Actually, this optimal $\vec \alpha^k$ is only optimized for the
sub-problem~Eq.\eqref{eq:sub-qp}, not for the full problem~Eq.\eqref{eq:d-rtsvm1}. Hence, we ignore the current active set and set it to the full set $\{1,\dotsc,l_-\}$ to get back all the $\alpha_i$ from the cached historical $\alpha_i^{k,i}$ in the final pass at the end of the procedure to ensure the recomposed $\vec \alpha^*$ to fulfill~Eq.\eqref{eq:sub-qp}.

The heuristic shrinking might raise a failure risk with the mismatch
$\overline{M }$ and $\overline{m }$ even with a tolerable gapped
mismatch, for example, $\overline{M }\leq 0$, or $\overline{m }\geq
0$. If such a condition happens, the whole FR-TSVM of~Eq.\eqref{eq:sub-qp} should
be re-optimized with a different set of initial guests of $\vec \alpha$.
Additionally, the shrinking is in general applied heuristically in a
fixed sequence of the $l_-$-dimensional gradients. A random update scheme performed a more rapid convergent rate
than a sequential update scheme~\cite{Chang08}.

\begin{algorithm}[t]
\caption{The optimization of FR-TSVM}\label{algo:FR-TSVM2}
\begin{algorithmic}[1]
\STATE Compute $Q=(H_+^{\rm T}H_++c_1E)^{-1}H_-^{\rm T}$ and $\overline Q_{ii}=H_{-i}Q_i$
\STATE Let $A\leftarrow\{1,\dotsc,l_-\}$
\STATE Given $\epsilon$ and initialized $\vec \alpha\leftarrow \vec 0$, $\vec u_+\leftarrow \vec 0$
\STATE Let $\overline M\leftarrow\infty$ and $\overline m\leftarrow -\infty$
\WHILE {}
\STATE Let $ M\leftarrow -\infty$,$m\leftarrow \infty$
\FORALL {$i\in A$ (a randomly and exclusively selected)}
\STATE $\nabla_i f(\vec \alpha)=-H_{-i}\vec u_+-1$
\STATE $\nabla_i^{p} f(\vec \alpha)\leftarrow 0$
\IF {$\alpha_i=0$}
\STATE \textbf{if} {$\nabla_i^{p} f(\vec \alpha)>\overline M$}, \textbf{then} $A=A\backslash\{i\}$ \textbf{end if}
\STATE \textbf{if} $\nabla_i^{p} f(\vec \alpha)<0$, \textbf{then} $\nabla_i^{p} f(\vec \alpha)\leftarrow \nabla_i f(\vec \alpha)$ \textbf{end if}
\ELSIF {$\alpha_i=c_3s_{i-}$}
\STATE \textbf{if}  $\nabla_i^{p} f(\vec \alpha)<\overline m$, \textbf{then} $A=A\backslash\{i\}$ \textbf{end if}
\STATE \textbf{if} $\nabla_i^{p} f(\vec \alpha)>0$, \textbf{then} $\nabla_i^{p} f(\vec \alpha)\leftarrow \nabla_i f(\vec \alpha)$ \textbf{end if}
\ELSE
\STATE $\nabla_i^{p} f(\vec \alpha)\leftarrow \nabla_i f(\vec \alpha)$
\ENDIF
\STATE $M\leftarrow\max({M,\nabla_i^{p} f(\vec \alpha)})$
\STATE $m\leftarrow\min({m,\nabla_i^{p} f(\vec \alpha)})$
\IF {$\nabla_i^{p} f(\vec \alpha)\neq 0$}
\STATE $\overline{\alpha}_i\leftarrow \alpha_i$
\STATE $\alpha_i\leftarrow\min(\max(\alpha_i-\nabla_i f(\vec \alpha)/\overline Q_{ii},0),c_3s_{i-})$
\STATE $u_{+i}\leftarrow u_{+i}-Q_i(\alpha_i-\overline \alpha_i )$
\ENDIF
\ENDFOR
\IF {$M-m<\epsilon$}
\STATE \textbf{if} $A=\{1,\dotsc,l_-\}$, \textbf{break}
\ELSE
\STATE $A\leftarrow \{1,\dotsc,l_-\},\overline M\leftarrow \infty,\overline m\leftarrow -\infty$
\ENDIF
\STATE \textbf{if} $M\leq 0$, \textbf{then} $\overline M\leftarrow \infty$. \textbf{else} $\overline M\leftarrow M$ \textbf{end if}
\STATE \textbf{if} $M\geq 0$, \textbf{then} $\overline m\leftarrow -\infty$. \textbf{else} $\overline m\leftarrow m$ \textbf{end if}
\ENDWHILE
\end{algorithmic}
\end{algorithm}

According to the separation of active from inactive set, $u_+$
defined in~Eq.\eqref{eq:u} can be re-expressed as:
\begin{equation}
\vec u_+=-(Q_A\vec \alpha_A+Q_{\overline A}\vec \alpha_{\overline
A}),
\end{equation}
which means that some elements coincided in the
$\alpha_{i\overline{A}}$ and $\overline{\alpha}_{i\overline{A}}$ would
remain the same before and after the update iteration $i$, and can
be prevented in the update of $u_{+i}\leftarrow
u_{+i}-Q_i(\alpha_i-\overline { {\alpha}_i})$. Algorithm \ref{algo:FR-TSVM2} describes the update procedure of
Algorithm~\ref{algo:FR-TSVM1} with active set shrinking in a random scheme.

\section{Experiments}\label{sec:exp}
To validate the learning efficiency and generalization ability of FR-TSVM, some experiments are implemented on artificial datasets and publicly available benchmark datasets. All experiments are conducted in MATLAB~(R2014a) on a PC with an Intel Core i7 processor (3.6GHz) with 32GB RAM. The execution time and classification accuracy are fairly compared. Since the quadratic programming of SVM, TSVM, or FSVM has similar corresponding dual form, a Matlab optimization toolbox~\cite{matlab} is equally adapted for optimization. The proposed FR-TSVM is optimized by the coordinate decent with heuristic shrinking~\ref{algo:FR-TSVM2}. The Matlab code of FR-TSVM is released at~\url{https://github.com/gaobb/FR-TSVM}. 

The fuzzy membership assignments are different by utilizing~Eq.\eqref{eq:liner-fmv} for linear kernel case and~Eq.\eqref{eq:noliner-fmv} for nonlinear kernel case. In addition, the fuzzy parameter $u=0.1$ is set in~Eq.\eqref{eq:liner-fmv} and~Eq.~\ref{eq:noliner-fmv}. Under nonlinear case, Gaussian kernel $\kappa(\vec x_1,\vec x_2)=\exp(-\|\vec x_1-\vec x_2\|^2/g^2)$ is taken as kernel function, which in general outperforms other kinds of kernel functions. The model parameters $c_i(i=1,2,3,4)$ are carefully searched in the grids $\{2^i|i=-8,-7,\dotsc,8\}$ by setting $c_1=c_2$ for TSVM, and $c_1=c_2$, $c_3=c_4$ for FR-TSVM. The grid-searching is conducted in 10-folds cross-validations, randomly selecting  30\% of the whole samples for learning with the equivalent conditions
mentioned above.

\subsection{Experiments on artificial datasets}
To intuitively validate the FR-TSVM's classification performance, we firstly implement experiments on two 2-dimensional  artificial datasets and compare the proposed method with standard SVM, FSVM and TSVM.

The first dataset is artificial-generated~\emph{Ripleys synathetic}~\cite{Ripley}. The \emph{Ripleys synathetic} comprise 250 training instances and 1000 testing instances. We visualize the distribution of fuzzy membership value for training instances under linear and nonlinear case in Fig.~\ref{fig:fmd}, respectively. As shown in Fig.~\ref{fig:fmd}, compared to those instances locating near the class center, the fuzzy membership values of the instances which are far from their class center are always smaller.
\begin{figure*}
    \centering
 \subfloat[``+1" instances]
 {\includegraphics[height= 0.17\textwidth]{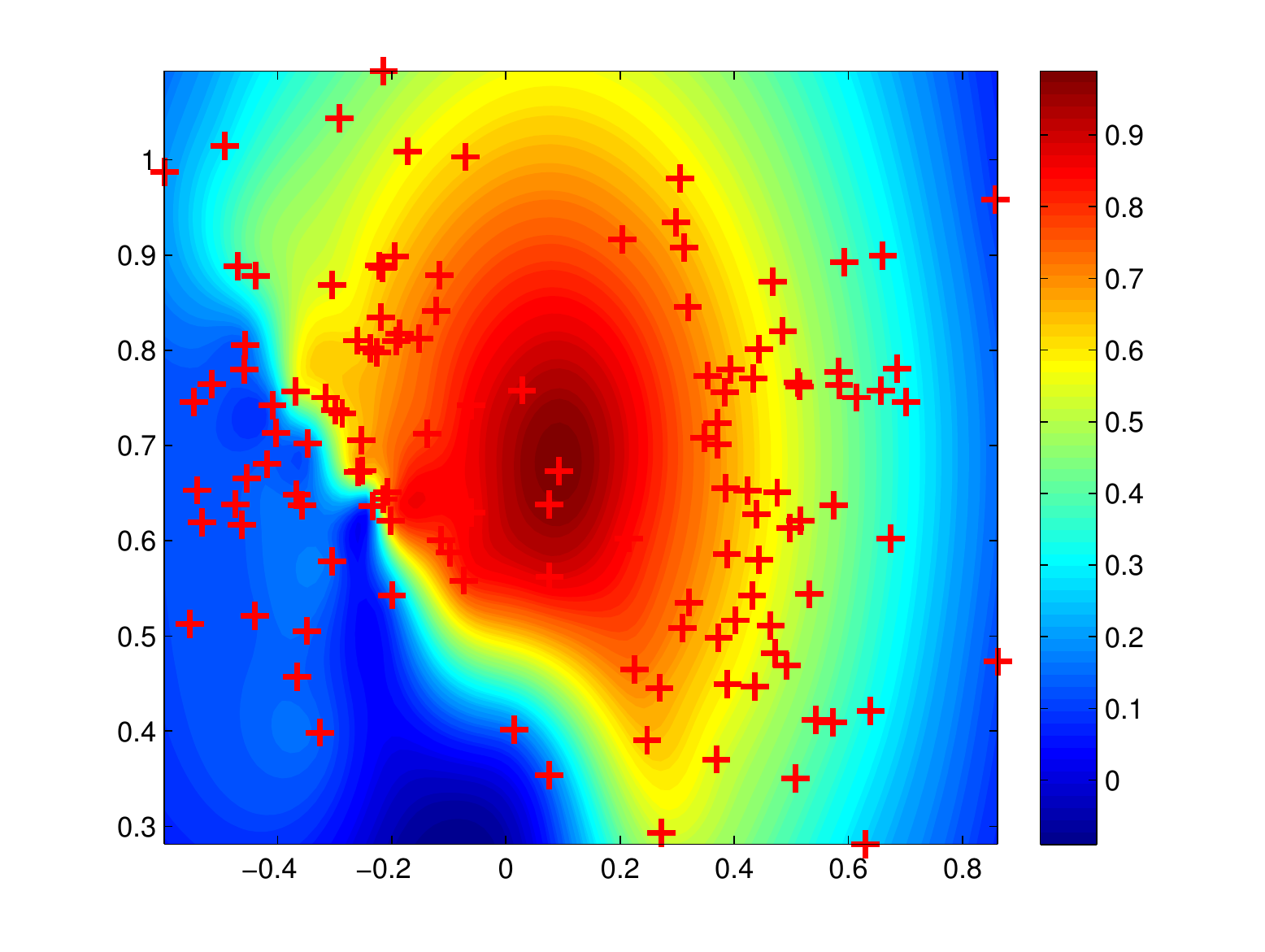}\label{fig:subfig:a}}
    \subfloat[``-1" instances]
    {\includegraphics[height= 0.17\textwidth]{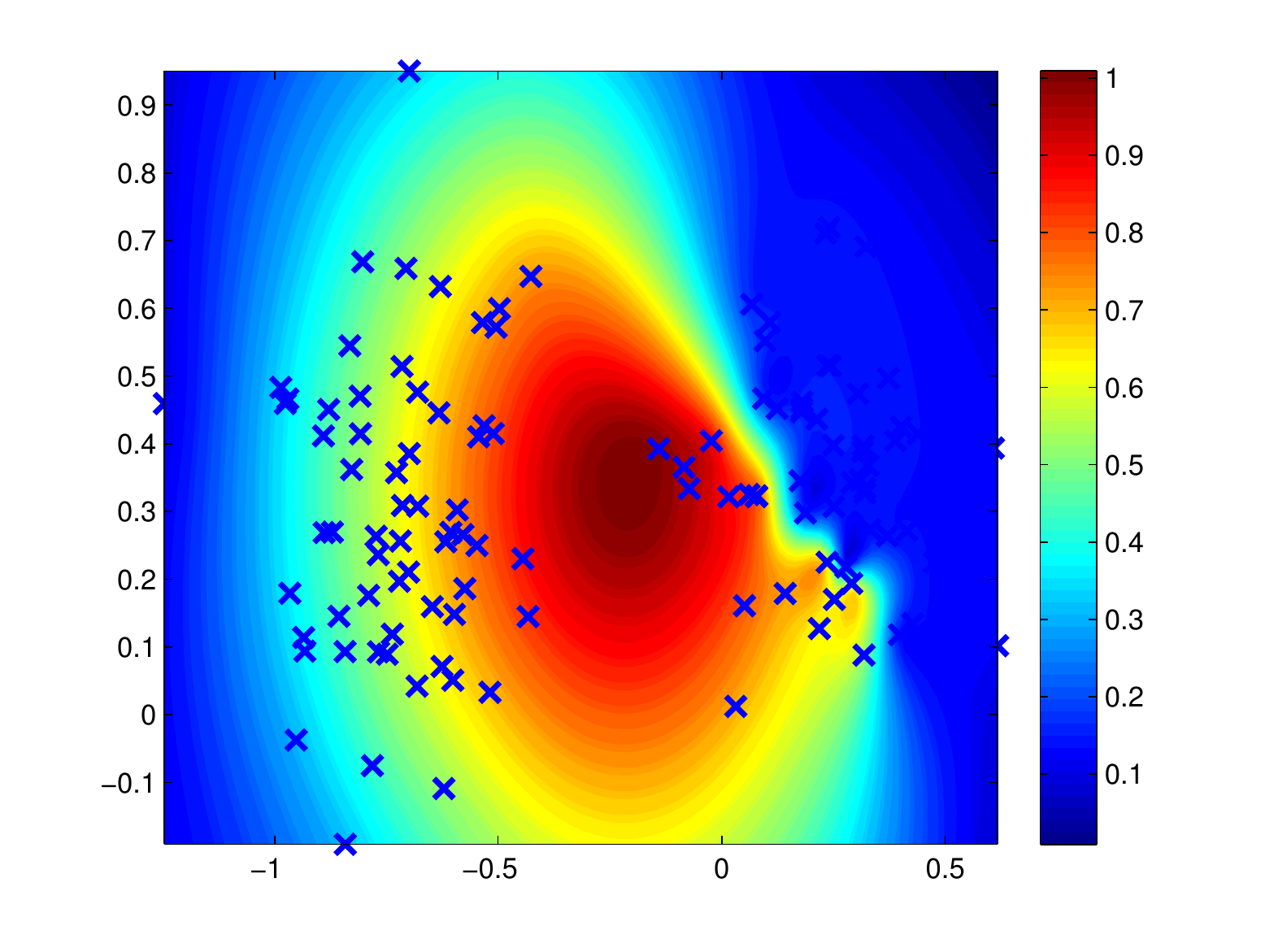}\label{fig:subfig:b}}
 \subfloat[``+1" instances]
 {\includegraphics[height= 0.17\textwidth]{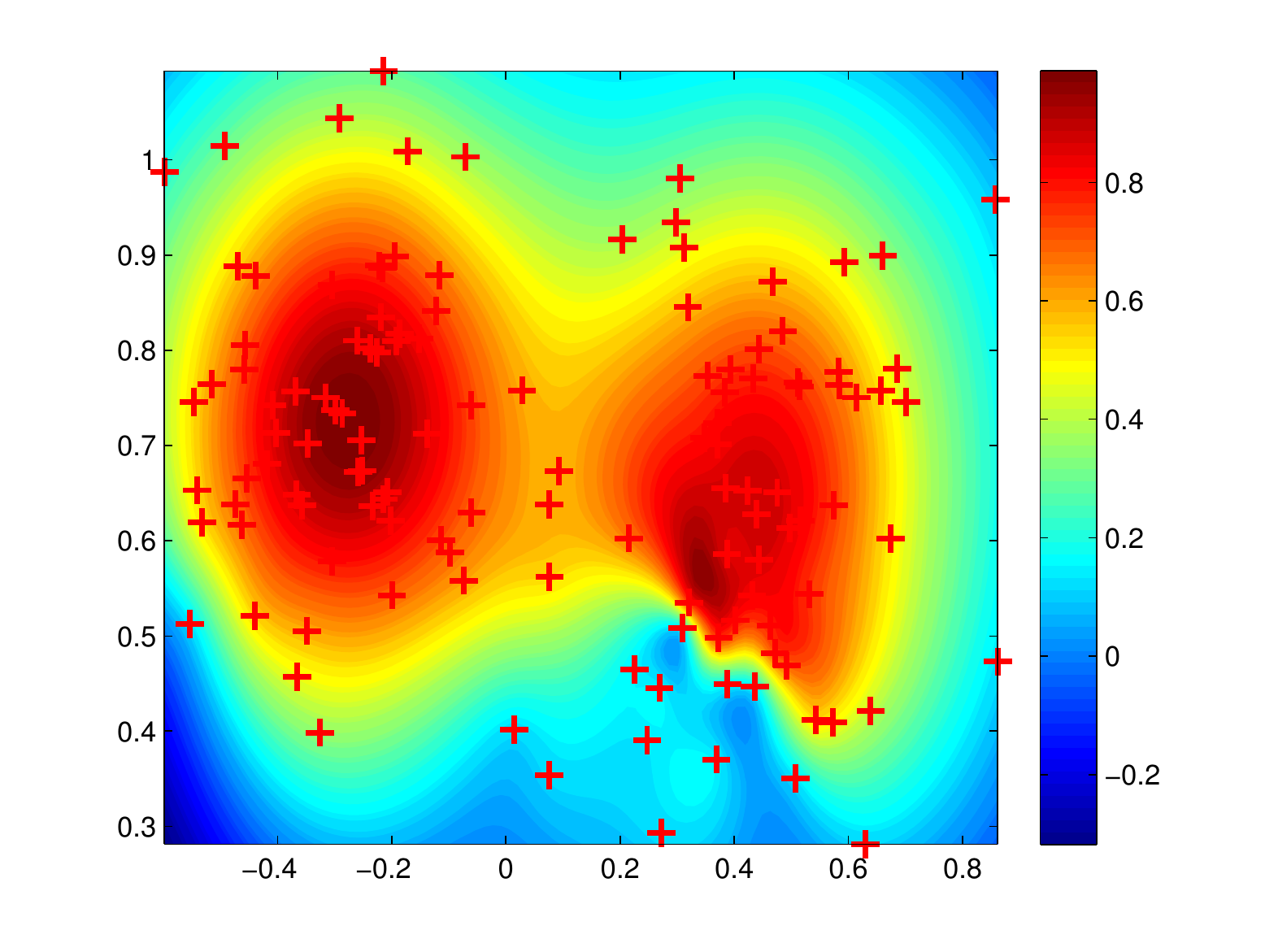} \label{fig:subfig:c}}
 \subfloat[``-1" instances]
 {\includegraphics[height= 0.17\textwidth]{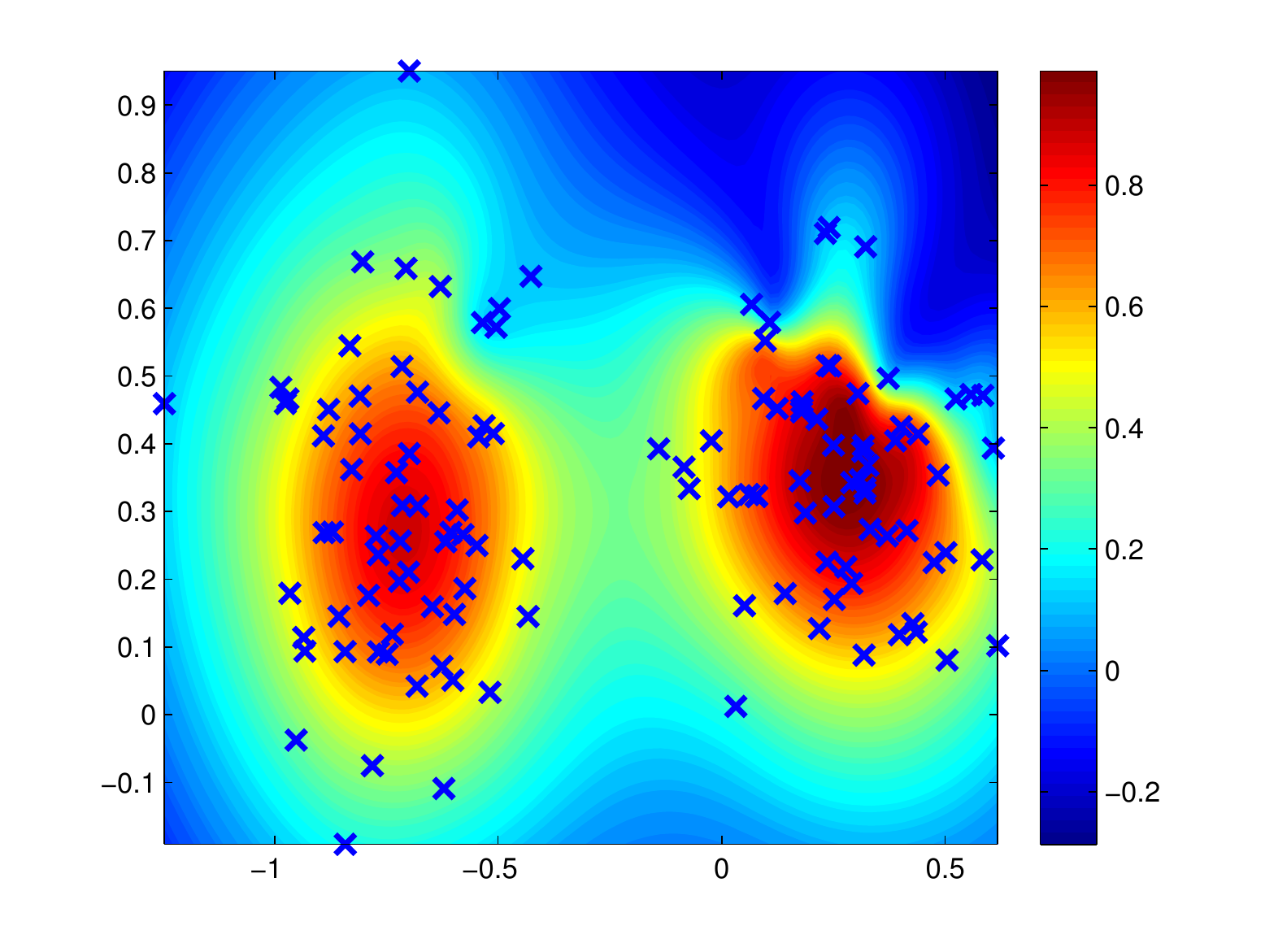} \label{fig:subfig:d}}
 \caption{Fuzzy membership distribution of training instances on Ripley dataset. (a) and (b) for linear case, (c) and (d) for nonlinear case.}\label{fig:fmd}
\end{figure*}
\begin{figure*}[t]
 \centering
    \vspace{-10pt}
    \subfloat[SVM]
    {\begin{tabular}{c}
 {\includegraphics[height= 0.17\textwidth]{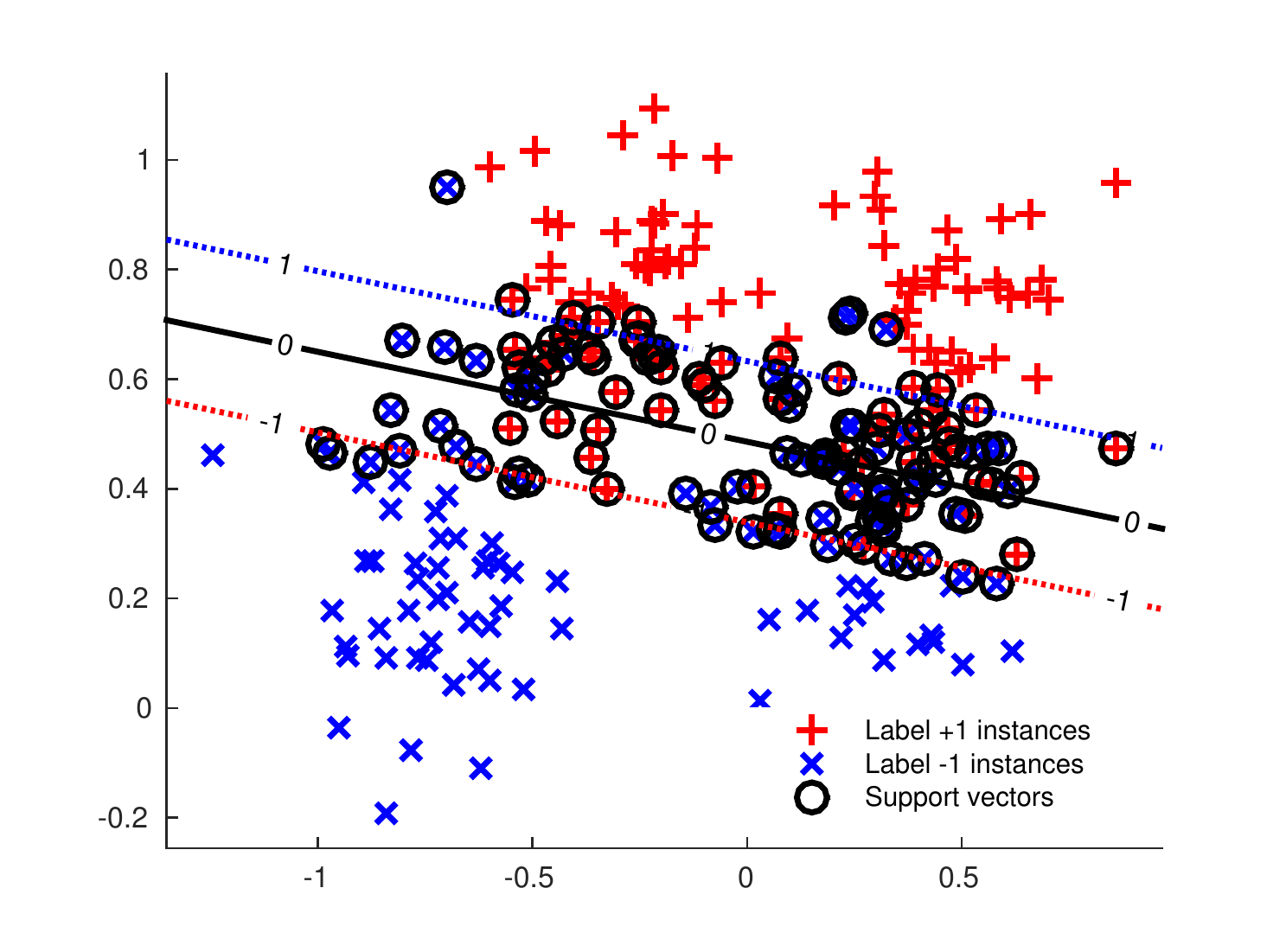}}\\
    {\includegraphics[height= 0.17\textwidth]{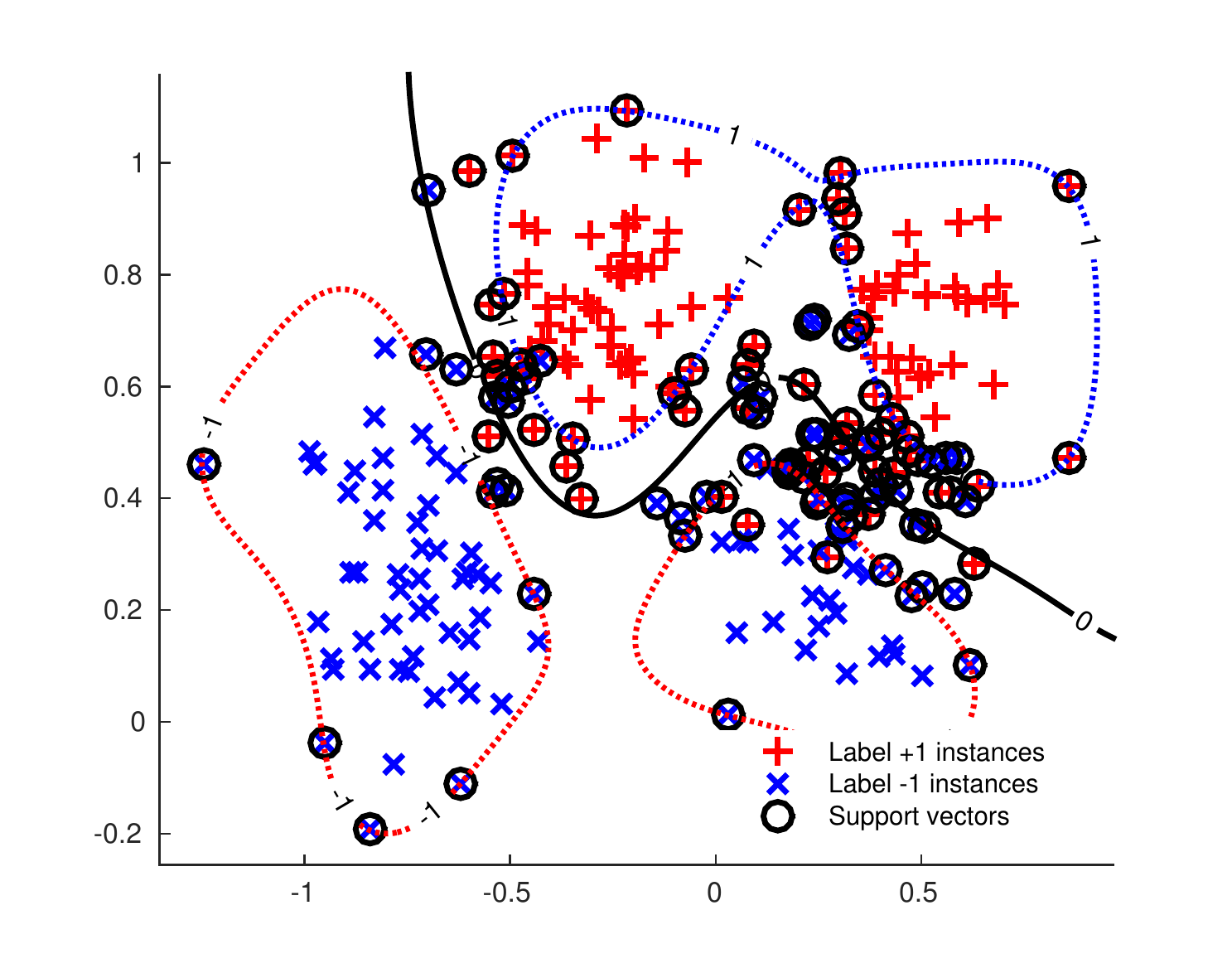}\label{fig:lra}}\end{tabular}}\hspace{-12pt}
    \subfloat[FSVM]
    {\begin{tabular}{c}
    {\includegraphics[height= 0.17\textwidth]{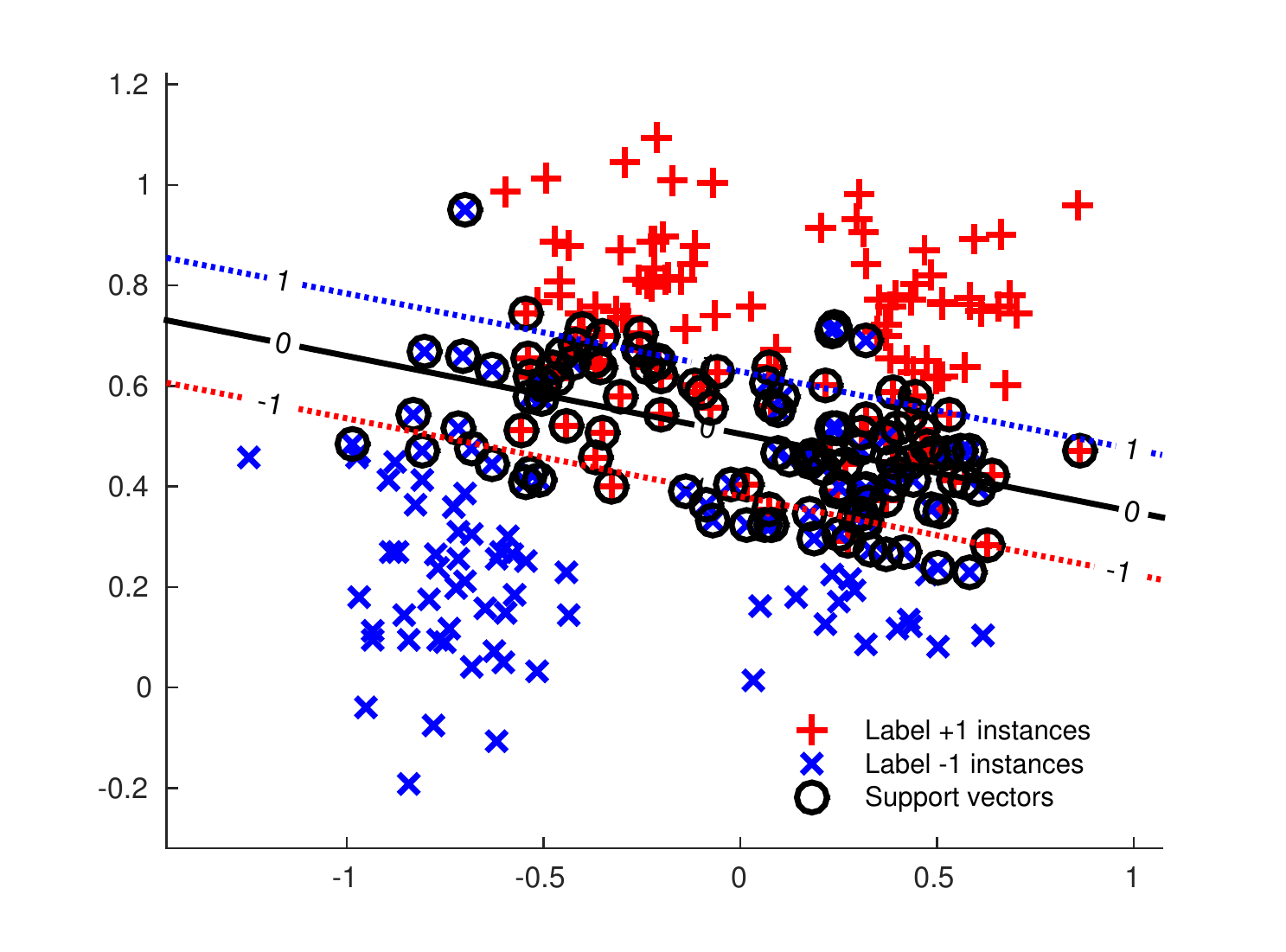}}\\
 {\includegraphics[height= 0.17\textwidth]{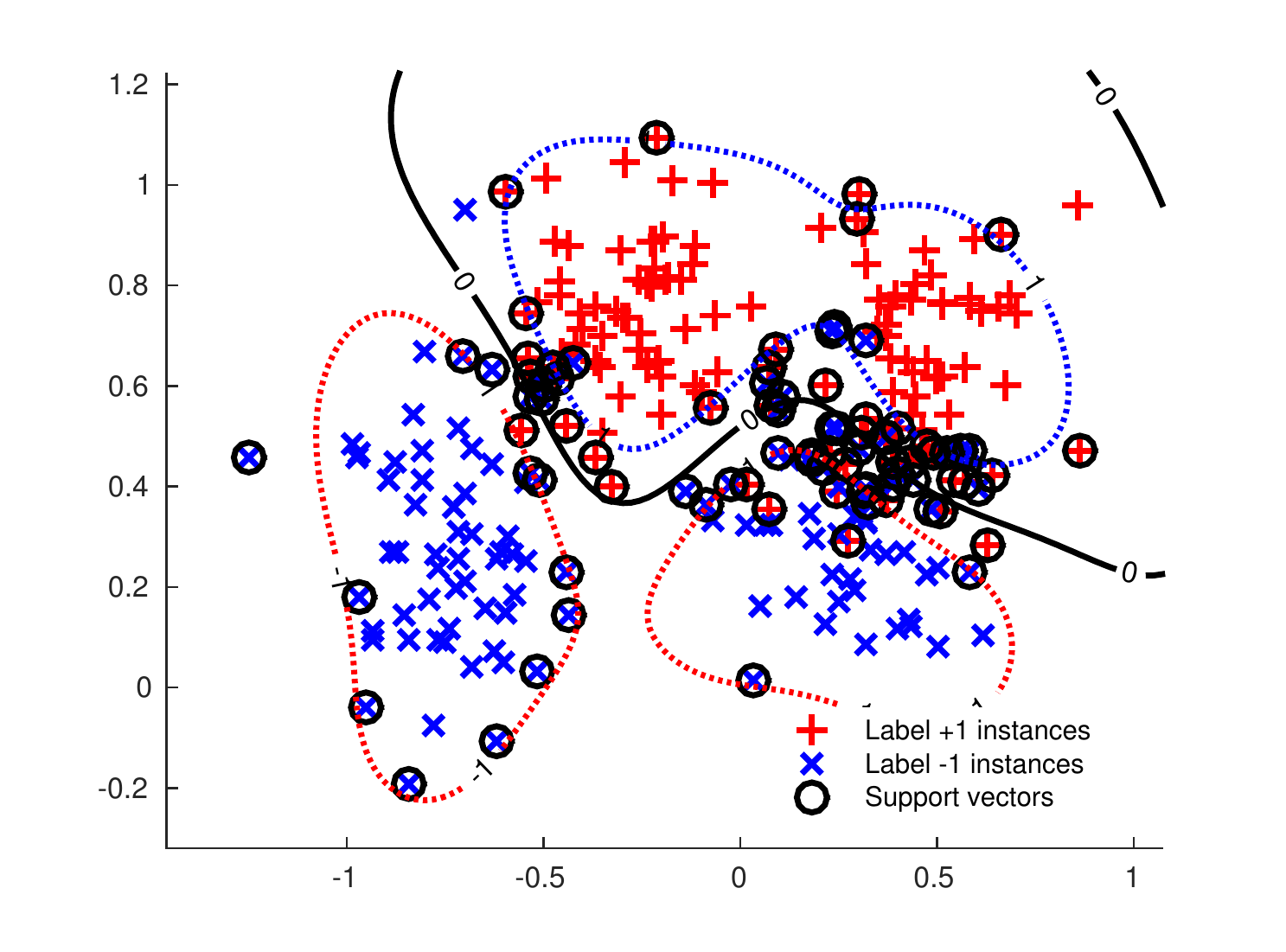}\label{fig:lrb}}\end{tabular}}\hspace{-12pt}
 \subfloat[TSVM]
    {\begin{tabular}{c}
    {\includegraphics[height= 0.17\textwidth]{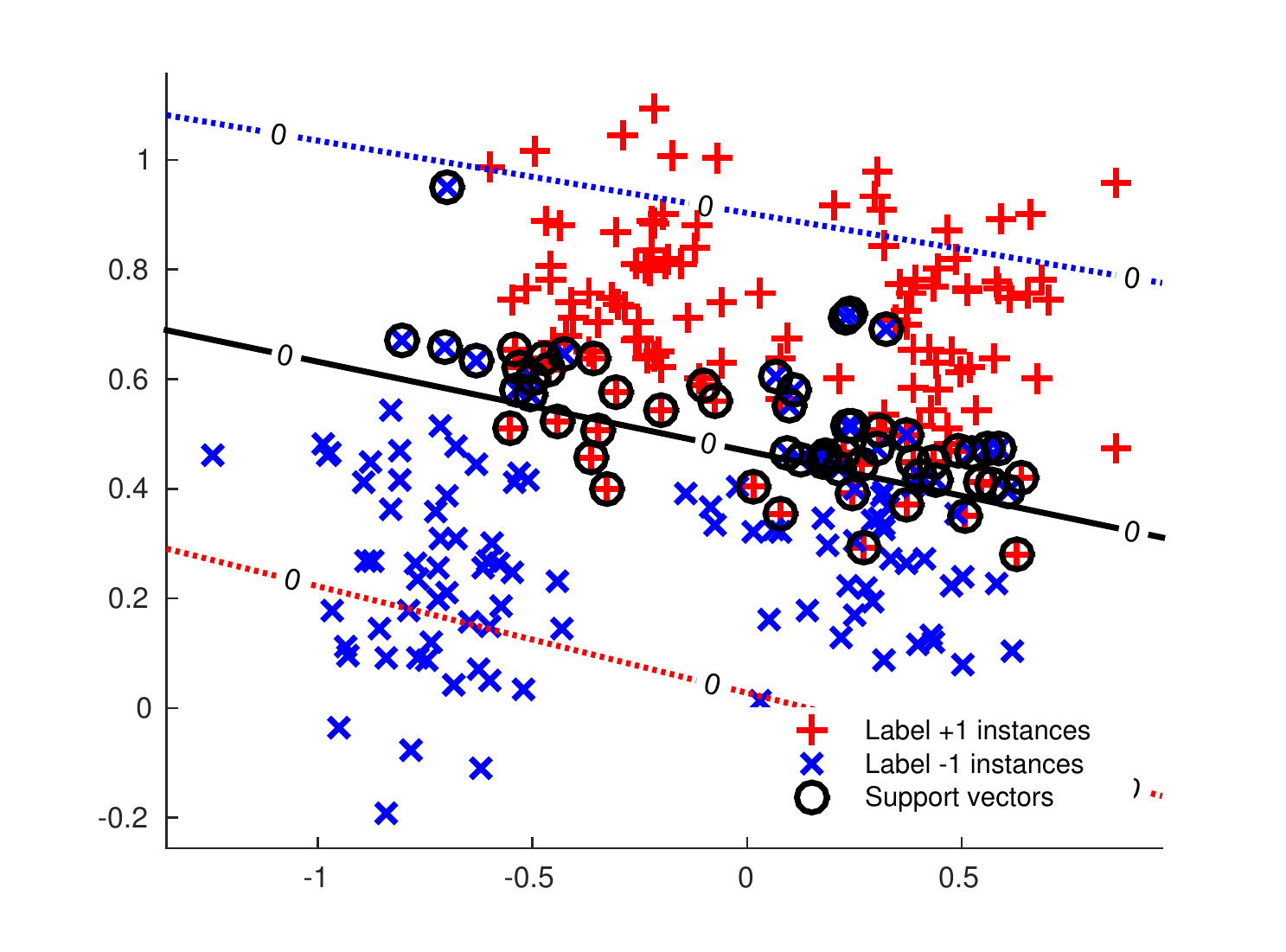}}\\
 {\includegraphics[height= 0.17\textwidth]{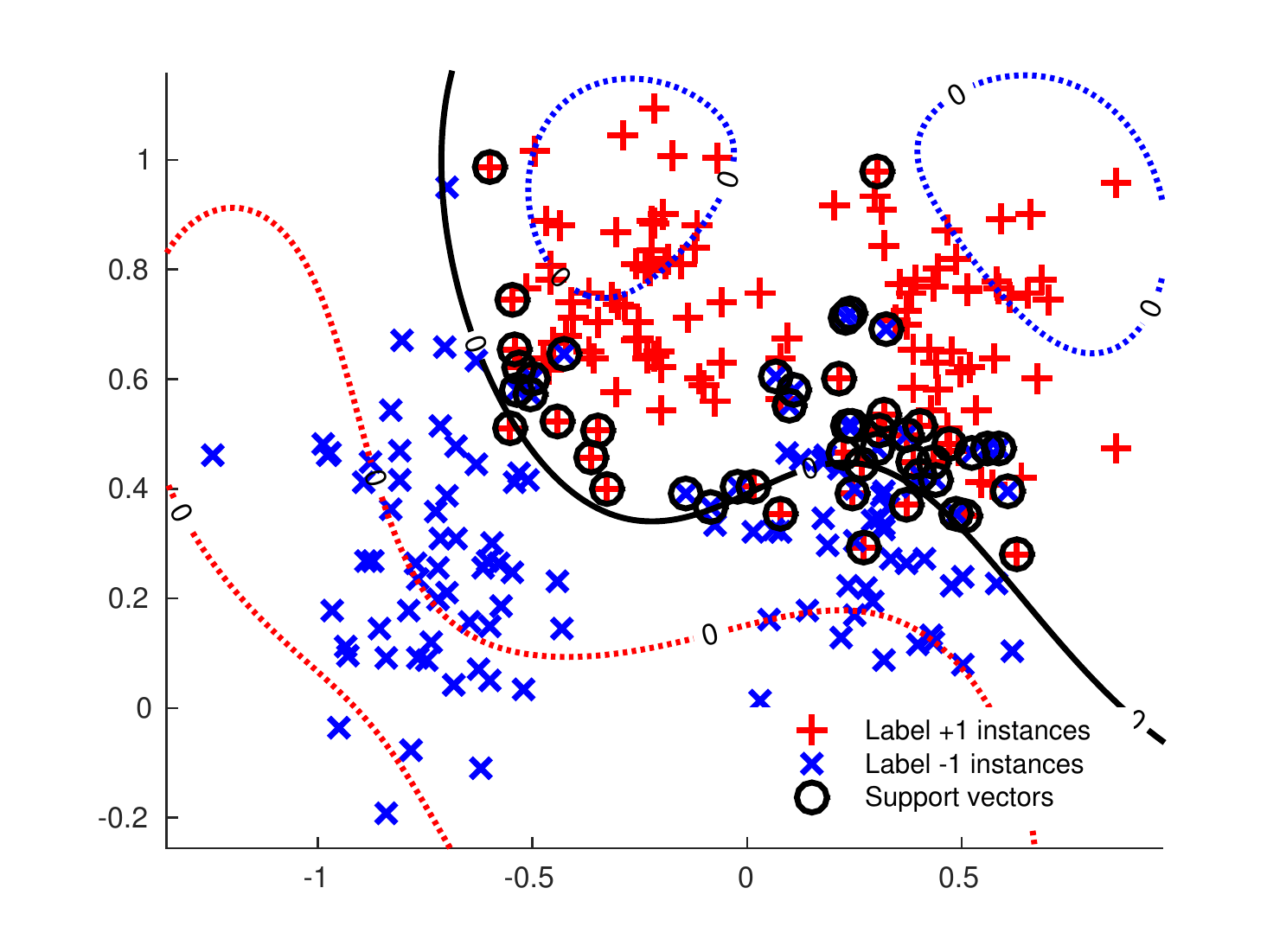}\label{fig:lrc}}\end{tabular}}\hspace{-12pt}
    \subfloat[FR-TSVM]
    {\begin{tabular}{c}
    {\includegraphics[height= 0.17\textwidth]{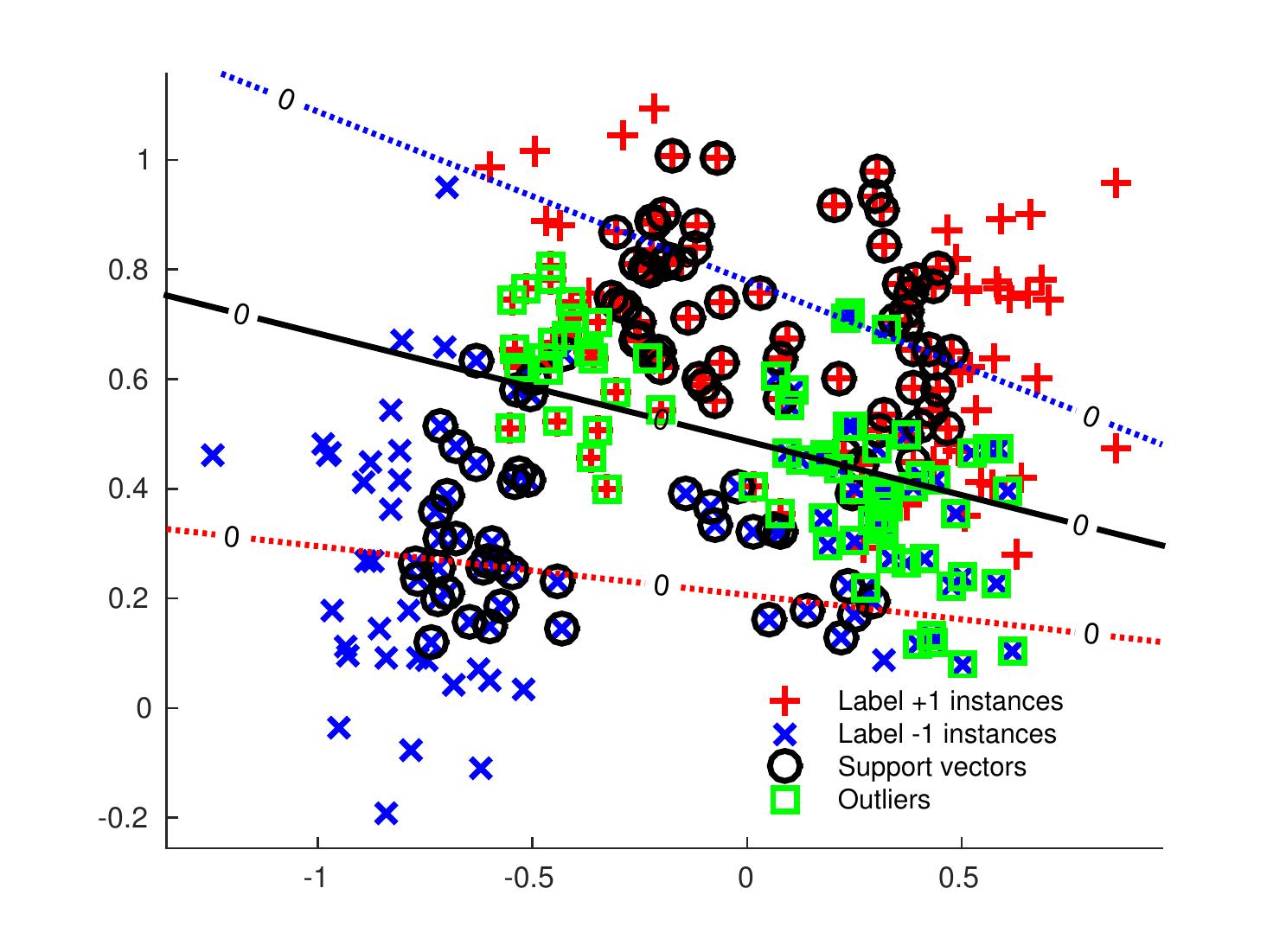}}\\
 {\includegraphics[height= 0.17\textwidth]{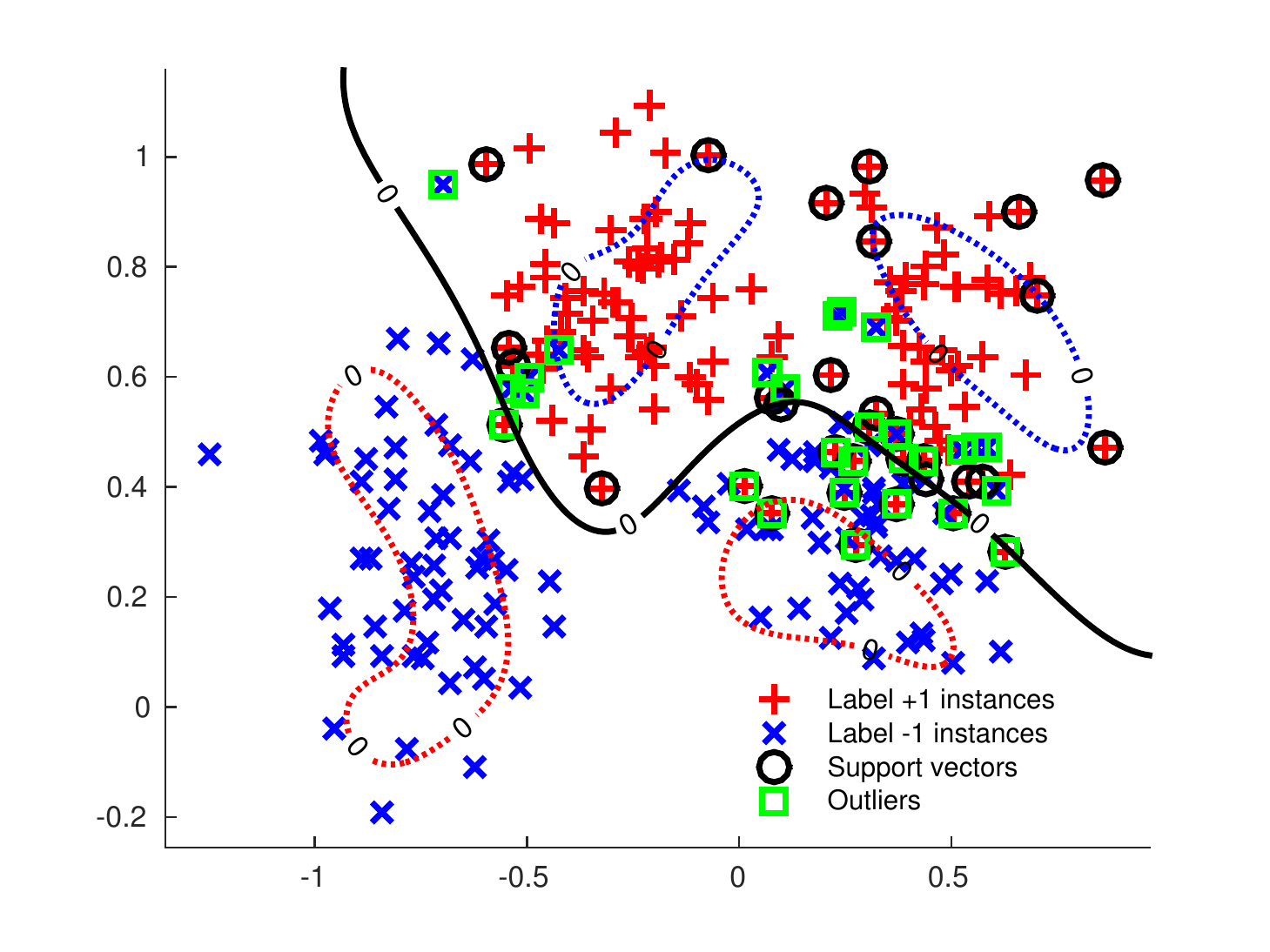}\label{fig:lrd}}\end{tabular}}
 \caption{Results of linear and nonlinear SVM, FSVM, TSVM and FR-TSVM on the first artificial dataset~(\emph{Ripleys synathetic}). The first and second  row show the results used linear kernel and Gaussian kernel, respectively. The blank straight line or curve represent the best decision hyperplane with corresponding method. \label{fig:ar-ri}} 
\end{figure*}

Table~\ref{table:linear_UCI} reports testing accuracies and learning CPU time of all methods with linear and Gaussian kernel. We can see that linear SVM and nonlinear FR-TSVM achieve the highest prediction accuracy under linear and nonlinear case, respectively. The reason for the outperformance of linear SVM is more likely from the
dataset itself than from the classifier because there may be no noise existing on the~\emph{Ripleys synathetic} dataset. The noiseless fact of this testing dataset suppresses the outstanding ability of FR-TSVM in the experiment. The ability of FR-TSVM is confirmed if we examine the accuracy in the nonlinear classification. In terms of execution time, FR-TSVM shows its excellence in computational efficiency for both linear and nonlinear learning, especially linear case. The excellence manifests the remarkable potential of employing FR-TSVM for fast classification.

\begin{table}[t]
\centering
\caption{Classification performance comparison on
artificial Ripleys dataset.}
\label{table:ar-ri}
\resizebox{0.48\textwidth}{!}{
%\begin{tabular}{|l|r|r|r|r|r|}
\begin{tabular}{@{\,}c@{\,}@{\,}l@{\,}@{\,}c@{\,}@{\,}c@{\,}@{\,}c@{\,}@{\,}c@{\,}}
\hline
Methods  &Description   &{~~SVM~~}   &{~FSVM~} &{~TSVM~~}&{FR-TSVM}\\
\hline
\multirow{2}{*}{Linear} &Acc($\%$)$\uparrow$  &$\textbf{89.70}$ &88.80 &89.20  &89.10\\
 &Time(s)$\downarrow$  &~1.46 &~2.00 &~0.28 &\textbf{~0.21}\\
\hline
\multirow{2}{*}{Nonlinear} &Acc($\%$)$\uparrow$  &90.40 &91.10 &90.50&\textbf{91.30}\\
&Time(s)$\downarrow$  &~1.56 &~1.79 &~0.60 &\textbf{~0.24}\\
\hline
\end{tabular}}
\end{table}

Fig.~\ref{fig:ar-ri} shows linear and nonlinear decision hyperplanes produced by comparative methods with equivalent settings. In these panels, while standard SVM and FSVM produce only a single hyperplane (Fig.~\ref{fig:lra} and~\ref{fig:lrb}), TSVM and FR-TSVM produce a pair of proximal hyperplanes (Fig.~\ref{fig:lrc} and \ref{fig:lrd}) for class separation. Instead of single decision hyperplane in standard SVM and FSVM, a pair of nonparallel decision hyperplanes are used in TSVM and FR-TSVM. Comparing FR-TSVM with TSVM, the final decision hyperplane of FR-TSVM is more exact than one of TSVM. The fact reveals that FR-TSVM is more capable of producing an unbiased prediction than TSVM.

The second dataset is also a 2-dimensional artificial-generated dataset. In this dataset, the positive class of instances are generated by a uniform distribution satisfying $x_1 \in [-\pi/2,2\pi]$, and $\sin(x_1)-0.25\leq x_2\leq \sin(x_1)+0.25$, while the negative class of instances consist of uniform points satisfying $x_1 \in [-\pi/2,2\pi]$, and $0.6\sin(x_1/1.05+0.5)-1.3\leq x_2\leq 0.6\sin(x_1/1.05+0.5)+0.8$, where $\vec x =[x_1,x_2]$. In our experiments, we generate 3000 instances according to the above rules. 600 instances are randomly chosen for training, and the remaining 2400 for testing. 

\begin{figure*}[t]
 \centering
    \vspace{-19pt}
    \subfloat[SVM]{
    \begin{tabular}{c}
 {\includegraphics[height= 0.17\textwidth]{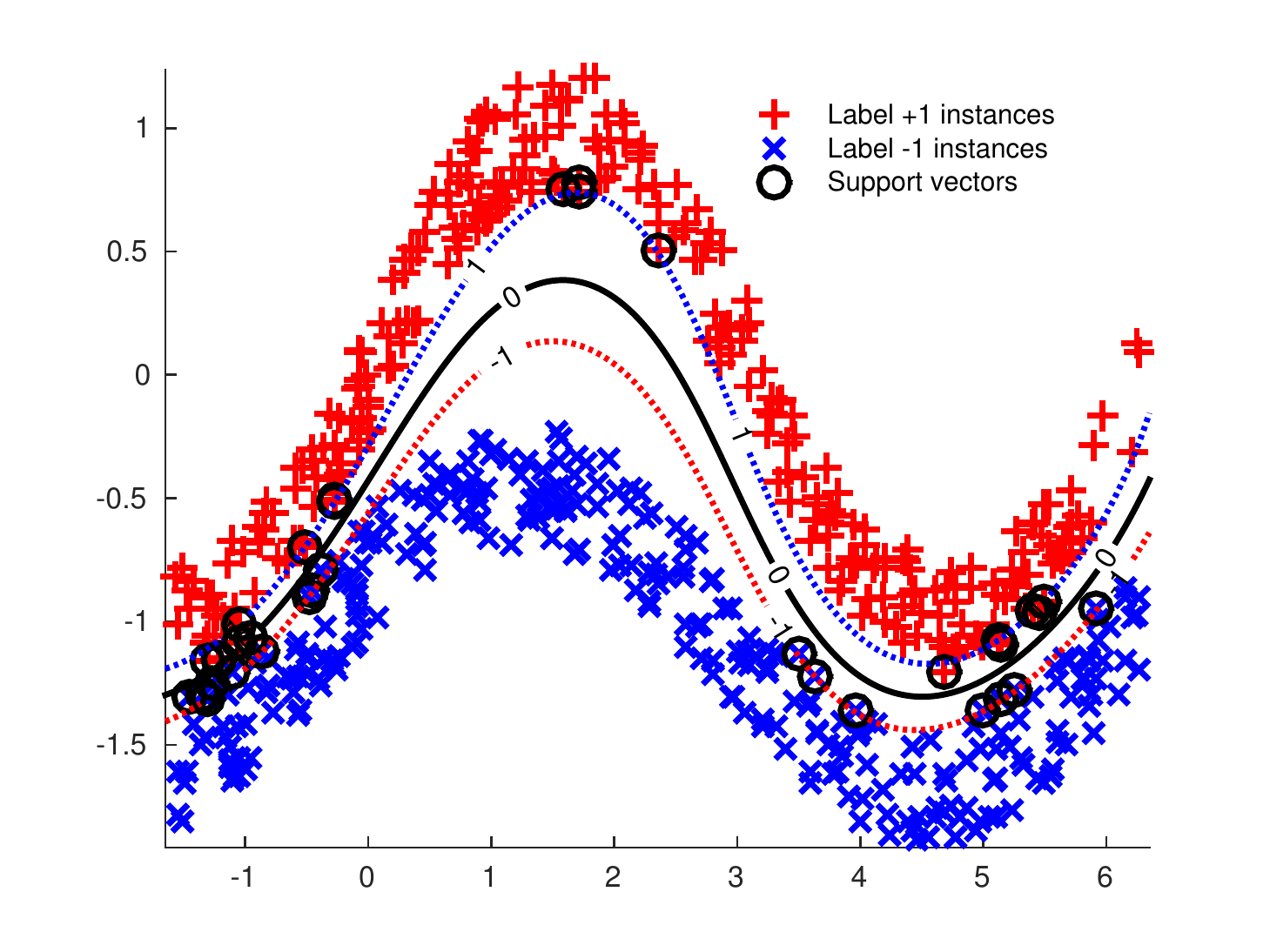}}\\
    {\includegraphics[height= 0.17\textwidth]{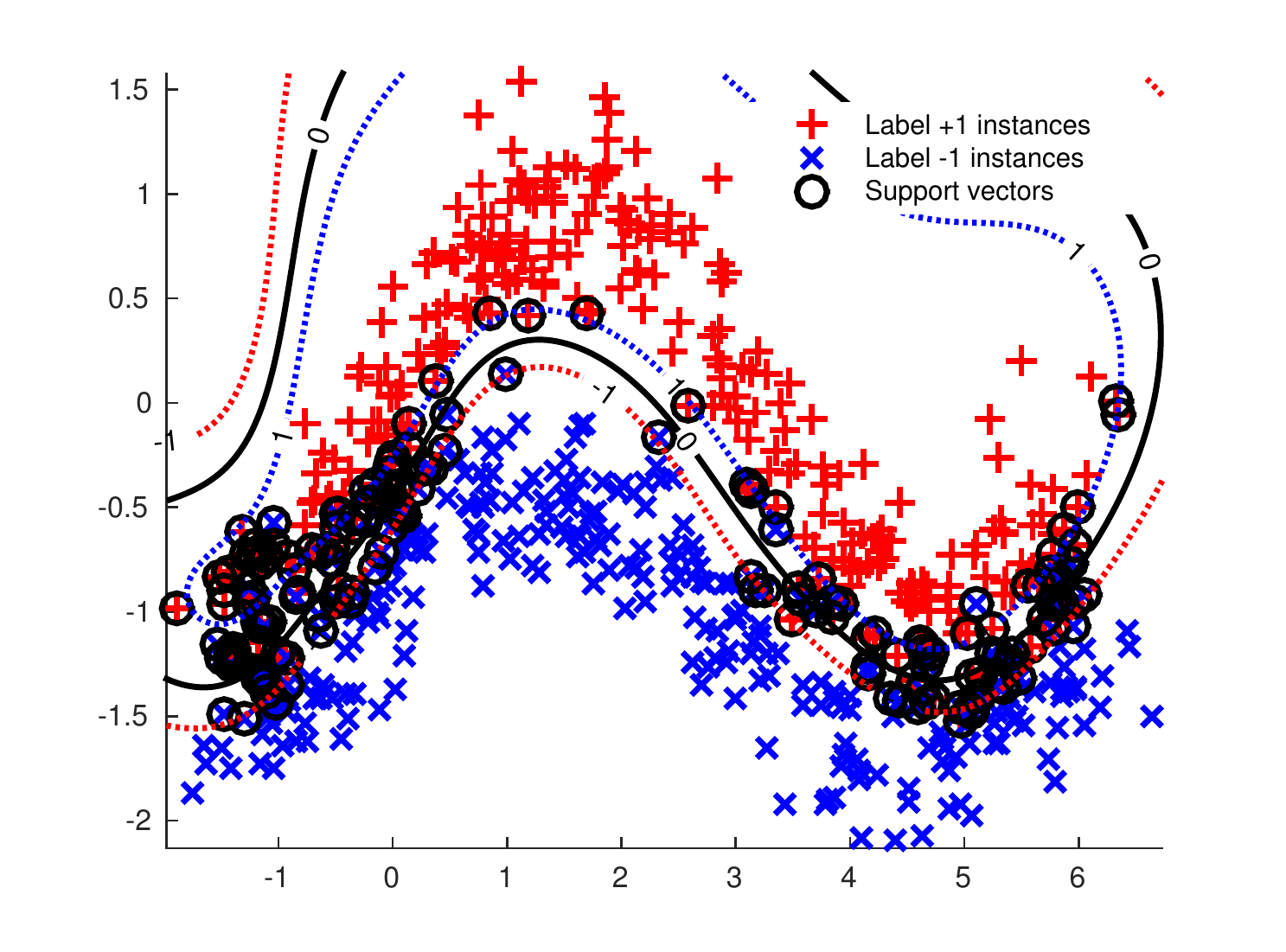}}\end{tabular}}\hspace{-12pt}
    \subfloat[FSVM]
    {\begin{tabular}{c}
    {\includegraphics[height= 0.17\textwidth]{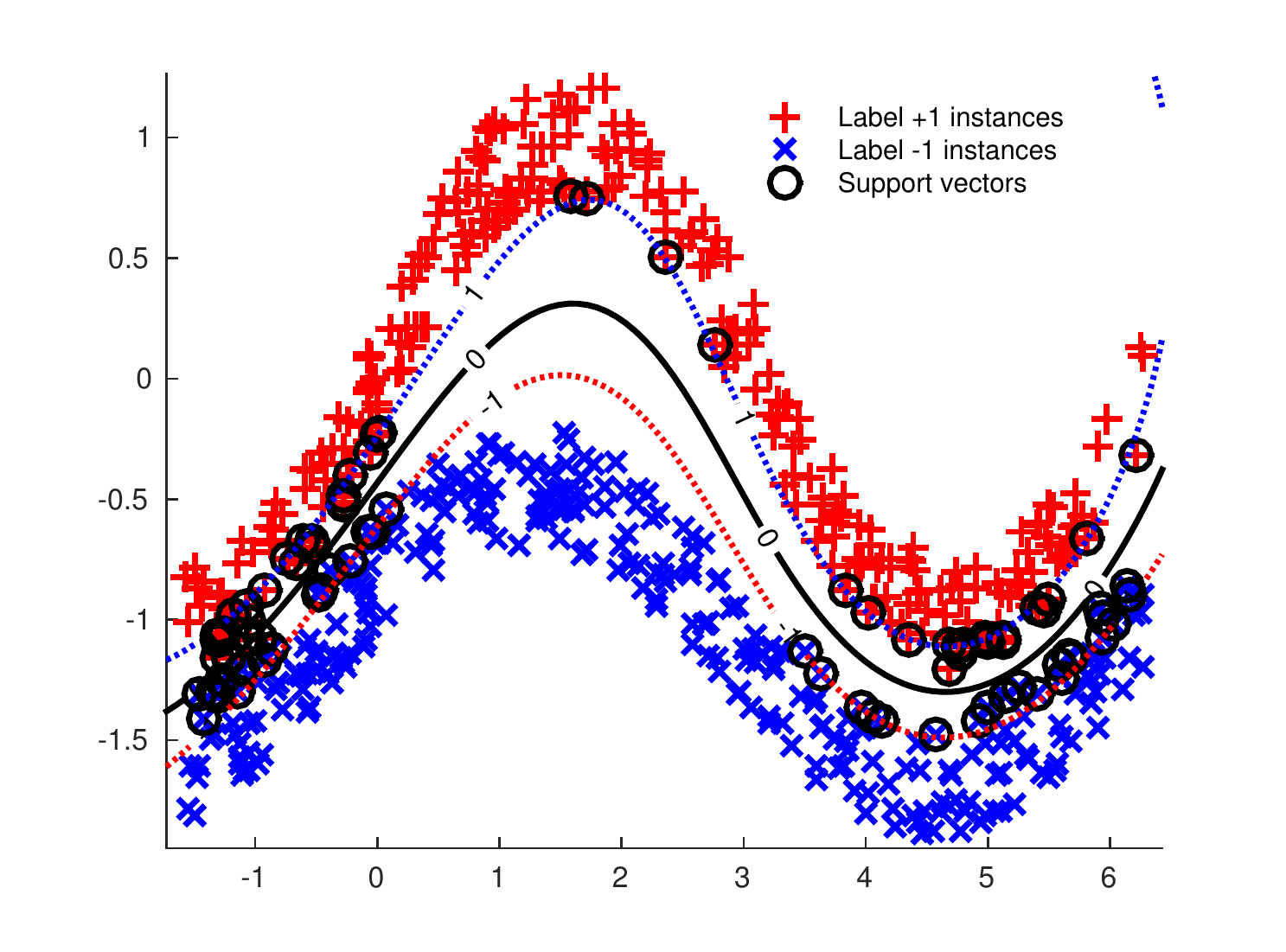}}\\
 {\includegraphics[height= 0.17\textwidth]{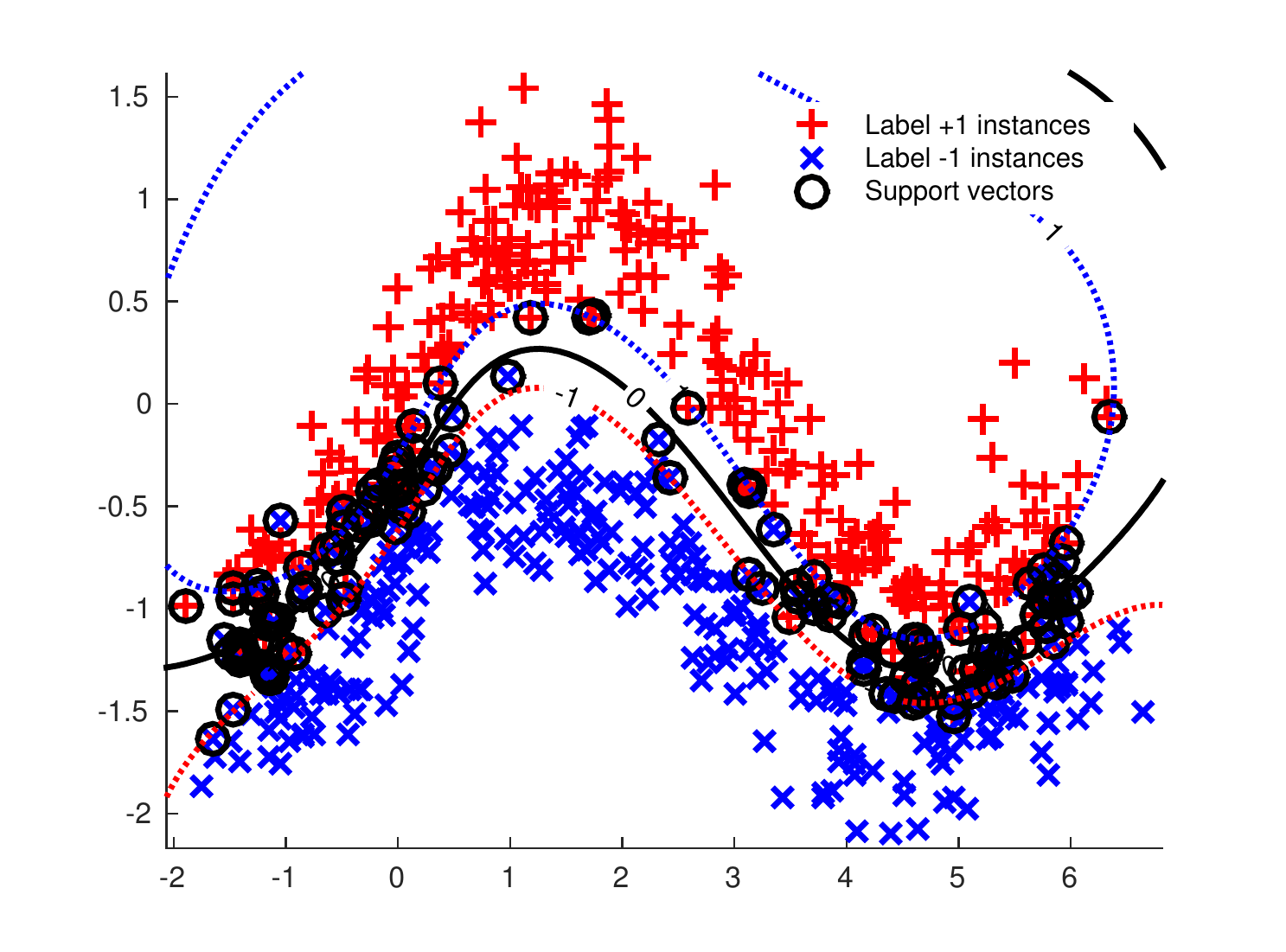}}\end{tabular}}\hspace{-12pt}
 \subfloat[TSVM]
    {\begin{tabular}{c}
    {\includegraphics[height= 0.17\textwidth]{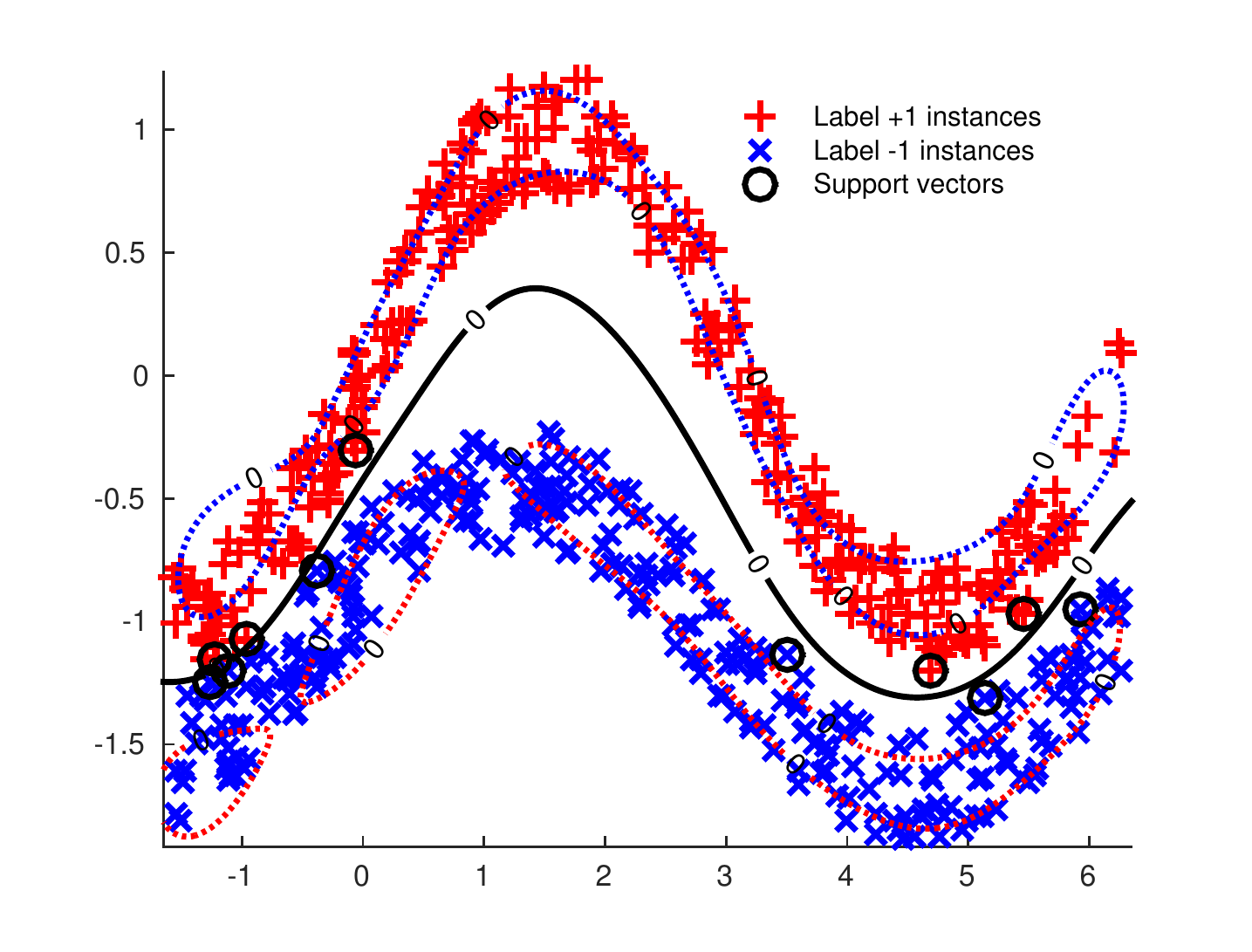}}\\
 {\includegraphics[height= 0.17\textwidth]{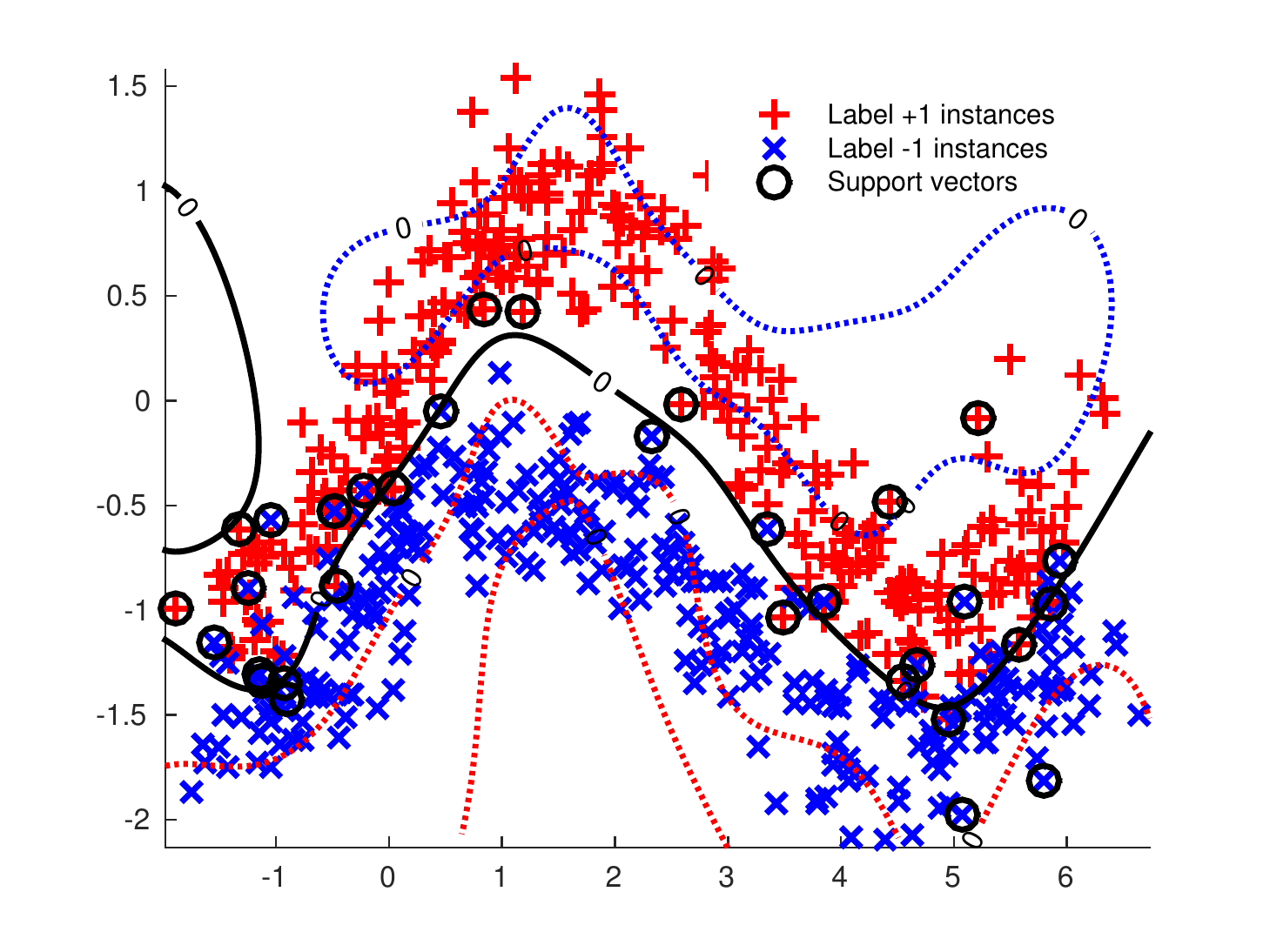}}\end{tabular}}\hspace{-12pt}
    \subfloat[FR-TSVM]
    {\begin{tabular}{c}
    {\includegraphics[height= 0.17\textwidth]{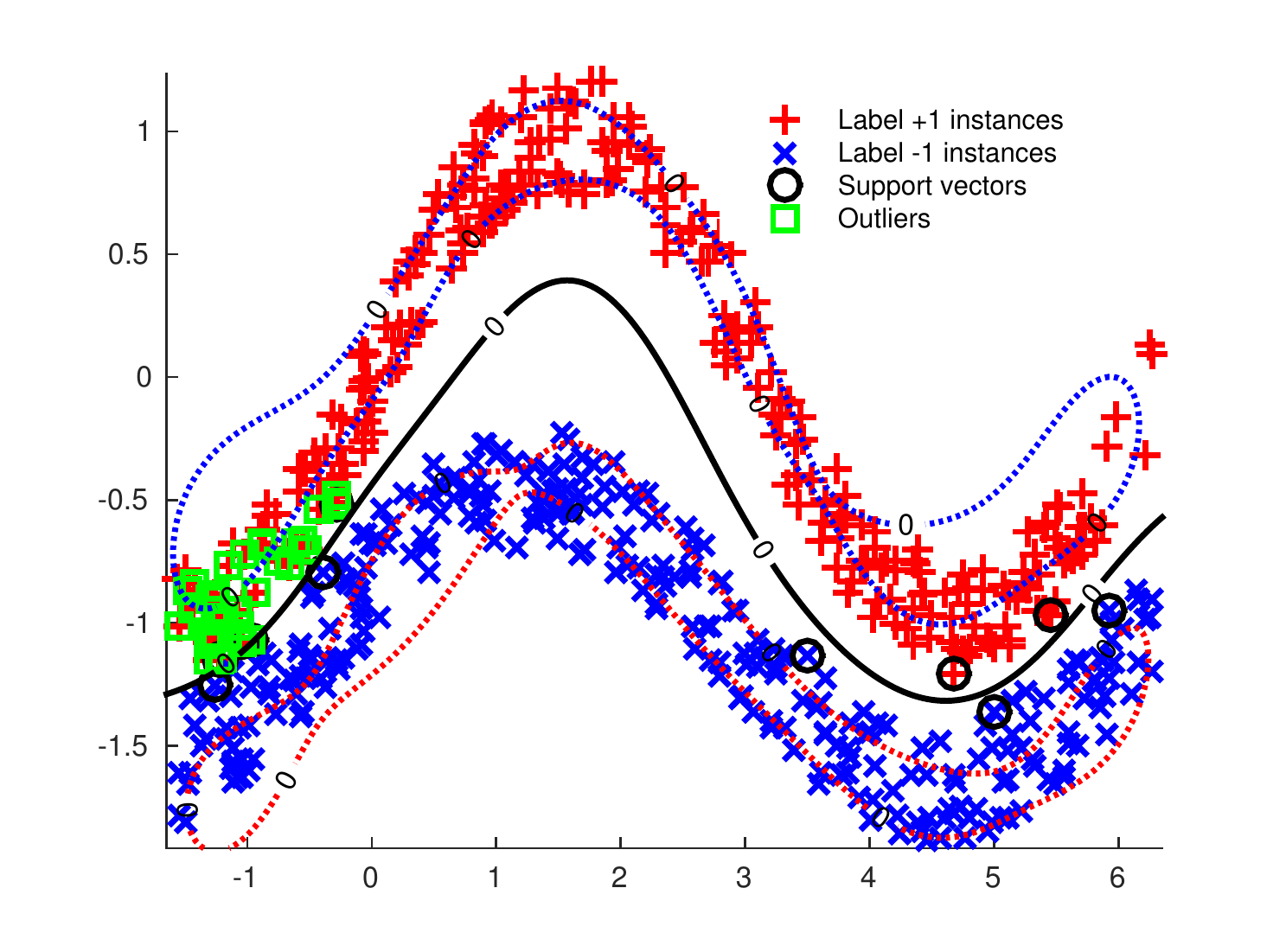}}\\
 {\includegraphics[height= 0.17\textwidth]{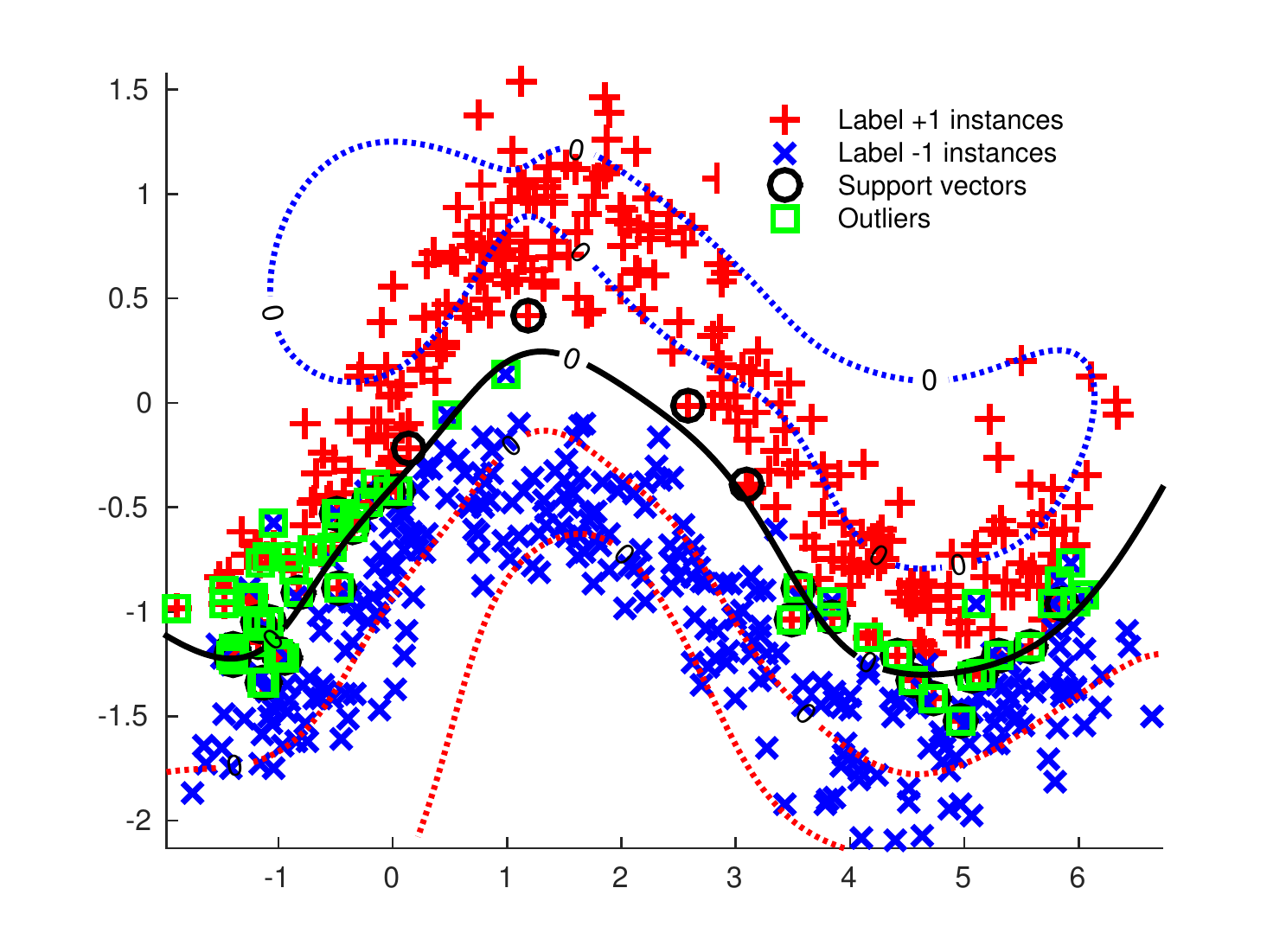}}\end{tabular}}
 \caption{Results of nonlinear SVM, FSVM, TSVM and FR-TSVM on the second artificial dataset. Comparison of decision hyperplane with/without noises. \label{fig:ar-sin}}
\end{figure*}

\begin{table}[t]
\centering
\caption{{Classification performance comparison on
the second artificial dataset.}}
\label{table:ar-sin}
\resizebox{0.48\textwidth}{!}{
%\begin{tabular}{|l|r|r|r|r|r|}
\begin{tabular}{@{\,}c@{\,}@{\,}l@{\,}@{\,}c@{\,}@{\,}c@{\,}@{\,}c@{\,}@{\,}c@{\,}}
\hline
Methods  &Description   &{~~SVM~~}   &{~FSVM~} &{~TSVM~~}&{FR-TSVM}\\
\hline
\multirow{2}{*}{nonoise} &Acc($\%$)$\uparrow$  &99.92  &99.88 &99.92 &\textbf{100.0}\\
  &Time(s)$\downarrow$ &16.77 &17.11 &~5.03 &\textbf{~0.30}\\
\hline
\multirow{2}{*}{noise}     &Acc($\%$)$\uparrow$           &99.37 &\textbf{99.63}        &97.00         &99.33~\\
&Time(s)$\downarrow$              &16.43 &16.74 &~4.93 &\textbf{~0.72}\\
\hline
\end{tabular}}
\end{table}

In Fig.~\ref{fig:ar-sin}, the first row shows the results of the nonlinear SVM, FSVM, TSVM and the proposed FR-TSVM on this dataset. We can see that the separating hyperplanes of these classifiers obtain similar results. To further validate the robust performance of the FR-TSVM, we add a noise $\varepsilon \in \mathcal{N}(0,0.2)$ for each training instance of this dataset, where $\mathcal{N}(0,0.2)$ is a normal distribution with mean 0 and variance 0.2. In Fig.~\ref{fig:ar-sin}, the second row shows the result of the nonlinear SVM, FSVM, TSVM and the proposed FR-TSVM on this dataset with noise. It can be seen that the proposed FR-TSVM can still effectively determine the separating hyperplane compared with the other three methods, especially TSVM. This indicates that the FR-TSVM can effectively alleviate the effect of noise points. 

Table~\ref{table:ar-sin} reports prediction results including classification accuracy and training time with/without noises. We can see that all the methods obtain similarity test accuracies without noises, while our FR-TSVM achieves the highest test accuracy. Under noise case, FR-TSVM and FSVM obtain better classification accuracies than TSVM and SVM, respectively. This indicates that fuzzy membership takes an important role in training model with noisy instances.  It is noteworthy that TSVM's test accuracy is reduced by almost 3\% (from 99.88\% to 97.00\%), whereas FR-TSVM has only a slight slowdown~(less than 1\%). It suggests that FR-TSVM  is more robust than TSVM when training instances consist of noisy data. As for the training time, FR-TSVM and TSVM are more efficient than SVM and FSVM, while FR-TSVM is the fastest among all methods. 

\subsection{Experiments on benchmark datasets}
To further examine the performance of FR-TSVM, 13 common datasets are gathered from the public UCI machine learning datasets~\cite{UCI2011}. To adjust values measured on different scales to a notionally common scale in the input space, all input features are normalized and scaled-down within $[0,1]$. The examination mainly focuses on binary classification. Two adversary classes are formed in every dataset. For the multi-classification datasets, we convert them into binary classification datasets by taking the majority class as the first class and gathering all the remainders together as the adversary class. Table~\ref{table:UCI} lists the statistics of datasets.

Table~\ref{table:linear_UCI} and~\ref{table:rbf_UCI}  report the learning results of these algorithms with linear and Gaussian kernels. To show the optimization cost, a particular quadratic programming time is recorded. To assess the performance, 10-folds cross-validation, as that in Section~\ref{sec:exp}, is taken. It means every classifier is repeatedly validated in the datasets with a ratio of 90\%/10\% for respective training/testing phase. Ten characteristics values are collected and averaged for the assessment. In the experiments, a standard deviation of the ten classification accuracies is provided in addition to the average to reflect the classification robustness.

\begin{table}[t]
\centering
\caption{Detailed characteristics of the benchmark datasets. $l$, $l_+$ and $l_-$ are the number of all instances, positive instances and negative instances, respectively. $n$ is the dimension of feature.\label{table:UCI}}
\resizebox{0.30\textwidth}{!}{
\begin{tabular}{lrrrr}
\hline
\multirow{2}{*}{Dataset} &\multicolumn{4}{c}{Data statistics}\\
 &$\#l$ &$\#l_+$ &$\#l_-$ &$\#n$\\
\hline
Breast     &106   &22  &84  &9\\
Ionosphere &351   &126 &225 &34\\
Iris       &150   &100 &50  &4\\
Australian &690   &307 &383 &14\\
WDBC       &569   &357 &212 &30\\
Wine       &178   &119 &59  &13\\
Hepatitis  &155   &123 &32  &19\\
WPBC       &198   &46  &148 &33\\
Bupa       &345   &200 &145 &6 \\
Sonar      &208   &111 &97  &60\\
Glass      &214   &138 &76  &10\\
Heart      &270   &120 &150 &13\\
Pima       &768   &268 &500 &8 \\
\hline
\end{tabular}}
\end{table}

\begin{table}[t]
\caption{Comparison results~(mean$\pm$std) of linear SVM, FSVM, TSVM and FR-TSVM on benchmark datasets. The best result of each row is marked in bold.}\label{table:linear_UCI}
\centering
\resizebox{0.45\textwidth}{!}{
\begin{tabular}{@{\,}l@{\,}@{\,}l@{\,}@{\,}l@{\,}@{\,}l@{\,}@{\,}l@{\,}}
%\begin{tabular}{|c||l|l|l|l|}
\firsthline
{Dataset}    &{SVM}   &{FSVM} &{TSVM}&{FR-TSVM}\\
\hline
 Breast       &96.33$\pm$4.77          &97.17$\pm$4.58          &97.17$\pm$4.58          &97.17$\pm$4.58\\
 Ionosphere   &83.53$\pm${6.48}        &85.75$\pm$4.06          &82.33$\pm$5.18          &\fontb{87.19$\pm$4.22}\\
 Iris         &100.0$\pm${0.00}        &100.0$\pm${0.00}        &100.0$\pm${0.00}        &100.0$\pm$0.00\\
 Australian   &84.92$\pm$4.53          &85.50$\pm$4.59          &85.07$\pm$4.77          &\fontb{85.93$\pm$4.39}\\
 WDBC         &95.34$\pm${5.17}        &95.87$\pm$3.21          &93.84$\pm$5.86          &\fontb{96.39$\pm$3.52}\\
 Wine       &{98.89}$\pm${2.34}      &{98.89}$\pm$2.34        &98.33$\pm$2.68          &98.89$\pm$2.34\\
 Hepatitis    &79.26$\pm$8.76          &84.94$\pm$7.29          &75.58$\pm$12.50         &\fontb{85.77$\pm$7.21}\\
 WPBC         &\fontb{79.93$\pm$9.49} &74.24$\pm$10.35         &76.88$\pm$7.01           &77.96$\pm${10.03}\\
 Bupa         &66.36$\pm$6.04          &\fontb{67.51$\pm$7.36} &61.72$\pm$5.96           &64.38$\pm${6.24}\\
 Sonar        &74.08$\pm$8.96          &77.46$\pm$7.14          &72.15$\pm$7.48          &\fontb{78.34$\pm$8.29}\\
 Glass        &70.41$\pm$10.01         &74.18$\pm$11.01         &67.64$\pm${11.02}       &\fontb{81.17$\pm$13.56}\\
 Heart        &82.22$\pm$5.18          &82.59$\pm${3.51}        &84.07$\pm$4.95          &84.07$\pm$6.06\\
 Pima         &\fontb{77.21$\pm$3.75}&75.65$\pm$4.22            &76.95$\pm$3.37          &75.13$\pm${3.78}\\
\lasthline
\end{tabular}}
\end{table}

\begin{table}[t]
\caption{Comparison results~(mean$\pm$std) of nonlinear SVM, FSVM, TSVM and FR-TSVM on benchmark datasets. The best result of each row is marked in bold.
\label{table:rbf_UCI}}
\centering
\resizebox{0.45\textwidth}{!}{
\begin{tabular}{@{\,}l@{\,}@{\,}l@{\,}@{\,}l@{\,}@{\,}l@{\,}@{\,}l@{\,}}
%\begin{tabular}{|c||l|l|l|l|}
\hline
{Dataset}    &{SVM}   &{FSVM } &{TSVM }&{FR-TSVM}\\
\hline
 Breast        &97.26$\pm$4.43~      &97.17$\pm$4.58~      &96.33$\pm$4.77~    &\fontb{98.09$\pm$4.03}~\\
 Ionosphere    &94.84$\pm${4.01}     &94.59$\pm$4.31      &92.61$\pm$6.12    &\fontb{95.41$\pm$4.93}\\
 Iris          &100.0$\pm${0.00}     &100.0$\pm${0.00}    &100.0$\pm${0.00}  &100.0$\pm$0.00\\
 Australian    &85.50$\pm$4.53       &85.50$\pm$4.59      &85.50$\pm$4.59    &\fontb{86.81$\pm$4.84}\\
 WDBC          &94.84$\pm${4.23}     &95.34$\pm$3.80      &95.34$\pm$3.80    &\fontb{96.39$\pm$3.52}\\
 Wine          &99.44$\pm${1.76}     &98.89$\pm$2.34      &{100.0}$\pm$0.00  &{100.0}$\pm${0.00}\\
 Hepatitis     &80.51$\pm$8.32       &84.28$\pm$7.42      &82.00$\pm$6.84    &\fontb{84.48$\pm${6.86}}\\
 WPBC          &81.51$\pm$7.13       &77.88$\pm$9.43      &75.30$\pm$7.93    &\fontb{82.51$\pm${8.05}}\\
 Bupa          &70.68$\pm$8.28       &\fontb{72.71$\pm$7.93} &71.86$\pm$5.71&71.84$\pm$5.67\\
 Sonar         &89.42$\pm$5.41       &88.92$\pm$6.95      &89.42$\pm$5.41    &\fontb{89.44$\pm${5.31}}\\
 Glass         &\fontb{97.68$\pm$3.95}&96.77$\pm$4.38    &94.87$\pm$5.08    &97.21$\pm$3.93\\
 Heart         &84.07$\pm$5.25       &82.59$\pm$4.29      &80.74$\pm$7.16    &\fontb{84.81$\pm$4.08}\\
 Pima          &75.65$\pm$3.80       &75.26$\pm$2.91      &\fontb{77.34$\pm$5.16}   &76.17$\pm${2.68}\\
\hline
\end{tabular}}
\end{table}

Comparing to baselines including SVM, FSVM and TSVM, the FR-TSVM produces more competitive and robust performances~(\emph{e.g}, high mean and low deviation of accuracies) on most datasets in Table~\ref{table:linear_UCI} and~\ref{table:rbf_UCI}. 
This suggests it is important to improve the classification performance introducing the fuzzy membership. Moreover, the tables show a higher improvement using nonlinear kernel than that of a linear kernel. This is due to the intrinsic difference between the kernels. As we know, a nonlinear Gaussian kernel commonly pertains a superior classification ability due to its flexibility. It meets the general concept of the classification. In addition, FR-TSVM obtains slightly lower performance than other methods on a few datasets~(\emph{e.g}, \emph{Bupa} and \emph{Pima}). A possible reason is that these datasets may not have outliers. Fig.~\ref{fig:UCI_T} shows the time cost of training on 13 UCI datasets. It is seen that the training time of TSVM and FR-TSVM are shorter than those of SVM and FSVM. This result is not surprising because TSVM and FR-TSVM are solved by two small QPPs rather than one large QPP in SVM and FSVM. Compared to the training time of TSVM, our FR-TSVM are faster. In short, the results in Table~\ref{table:linear_UCI} and~\ref{table:rbf_UCI} indicate, FR-TSVM effectively improves the classification accuracy and efficiently reduces training time compared to the traditional methods.  The excellence strongly reflects that the classifier is potential for future applications.

\begin{figure*}[t]
 \centering
    \subfloat[]{
    \begin{tabular}{c}
 {\includegraphics[height= 0.09\textwidth]{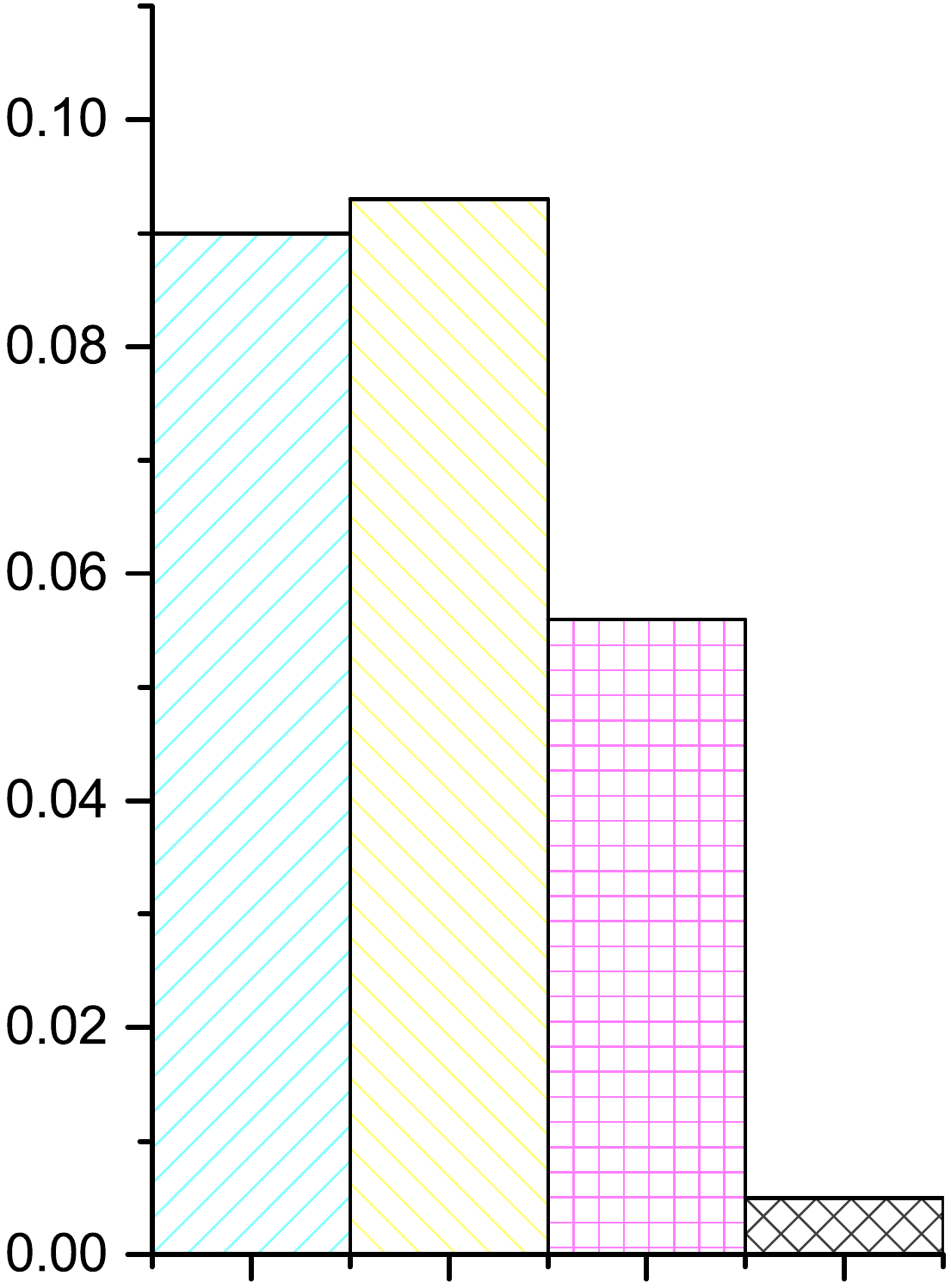}}\\
    {\includegraphics[height= 0.09\textwidth]{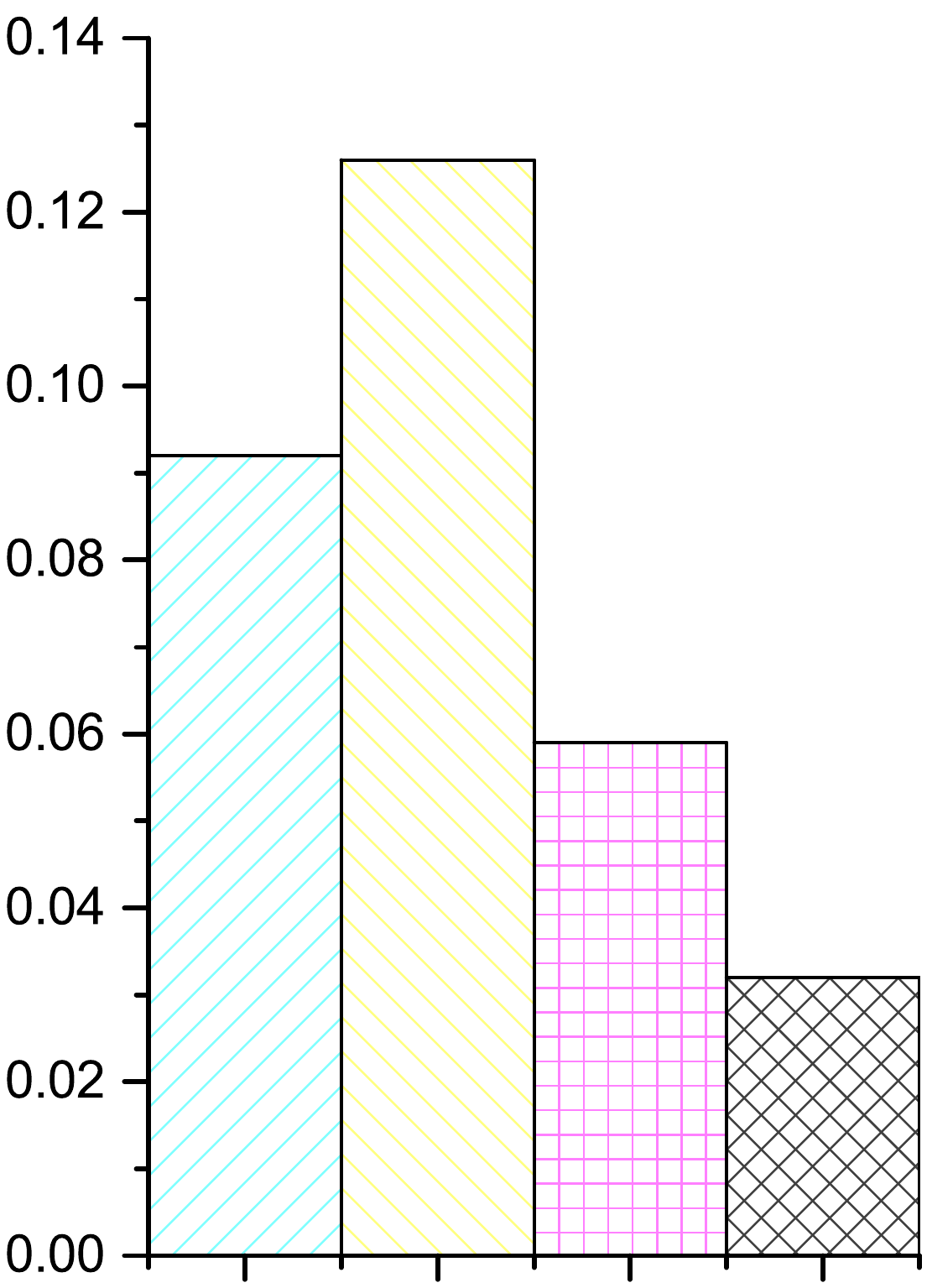}}\end{tabular}}\hspace{-12pt}
    \subfloat[]
    {\begin{tabular}{c}
    {\includegraphics[height= 0.09\textwidth]{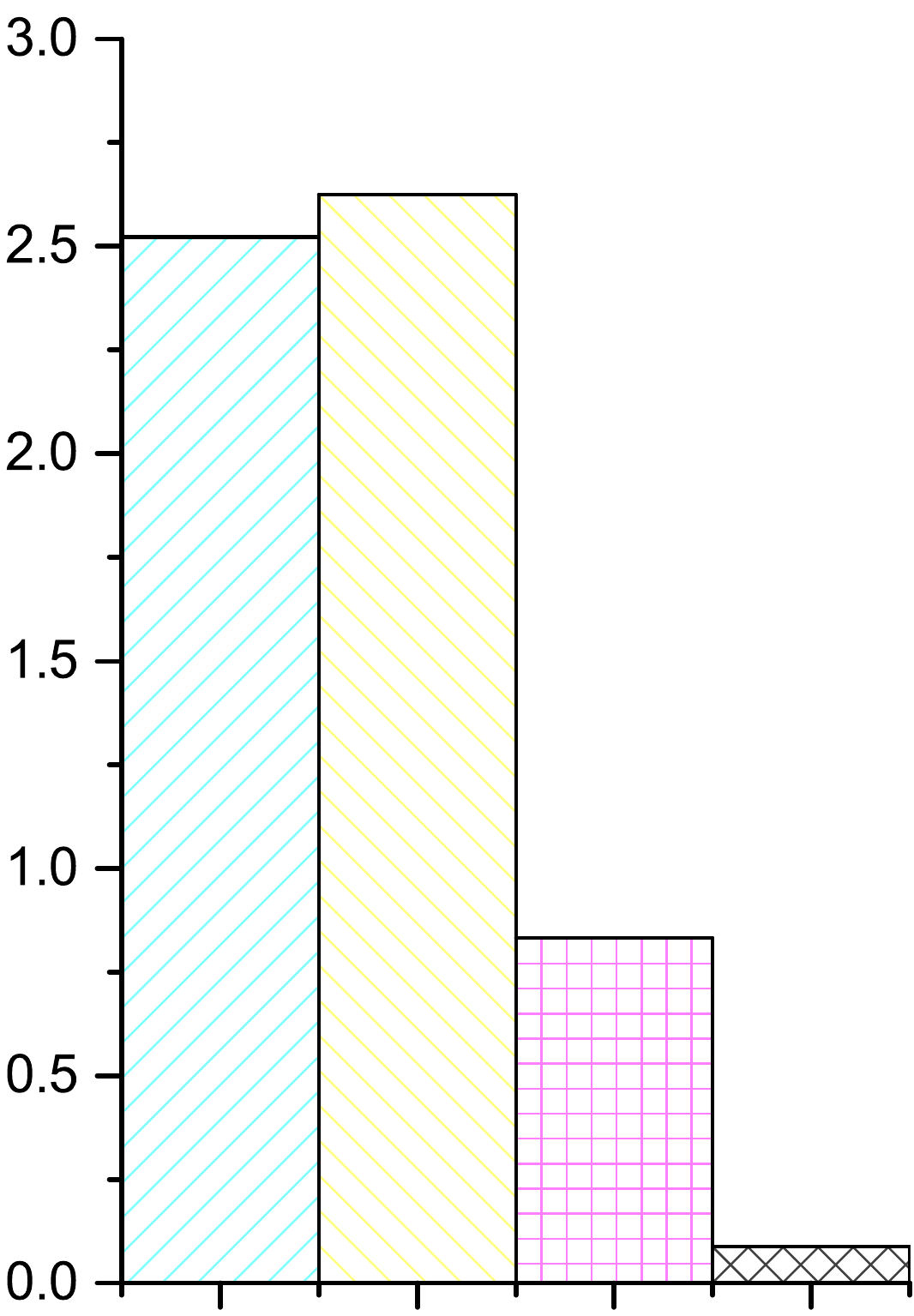}}\\
 {\includegraphics[height= 0.09\textwidth]{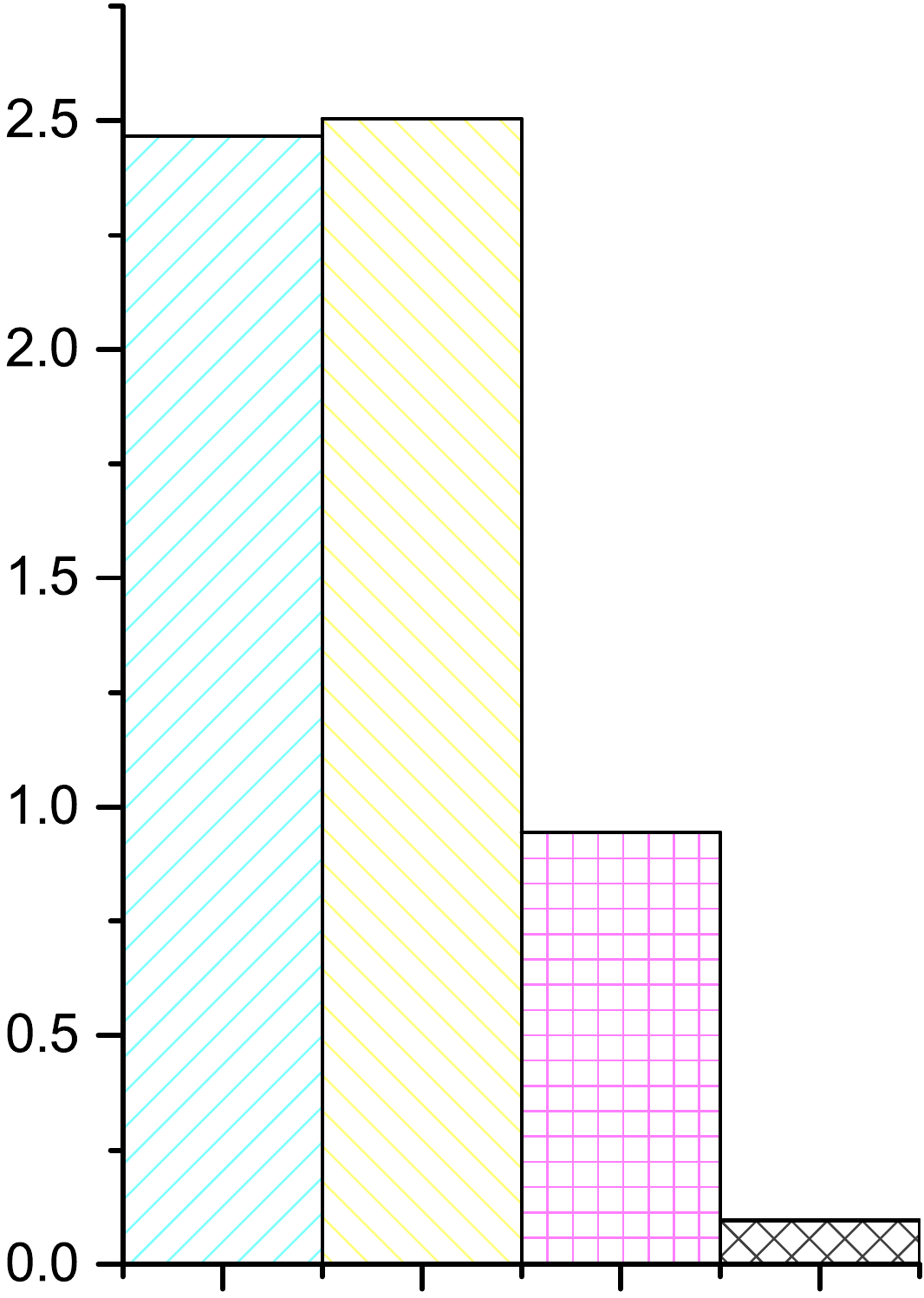}}\end{tabular}}\hspace{-12pt}
 \subfloat[]
    {\begin{tabular}{c}
    {\includegraphics[height= 0.09\textwidth]{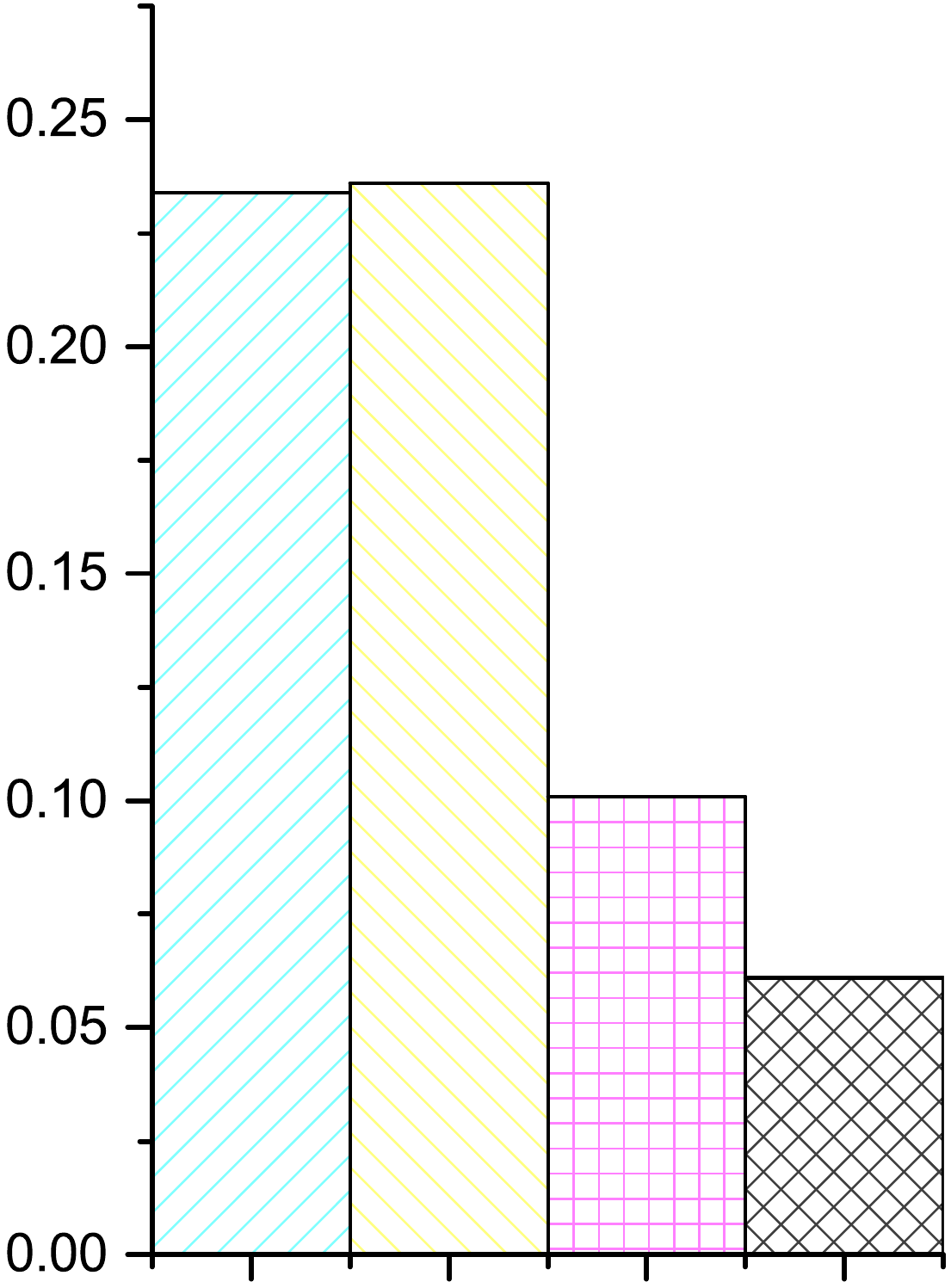}}\\
 {\includegraphics[height= 0.09\textwidth]{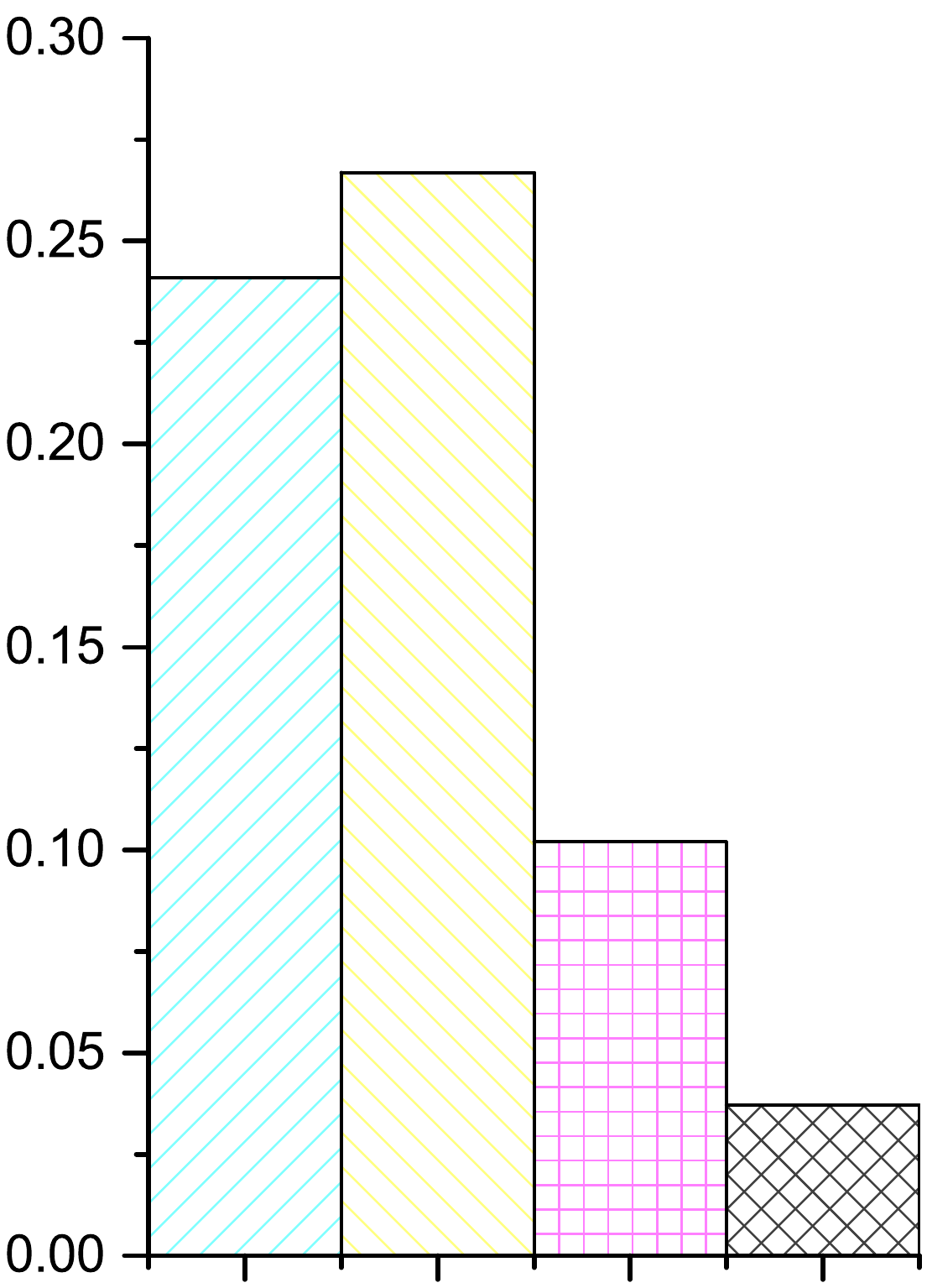}}\end{tabular}}\hspace{-12pt}
    \subfloat[]
    {\begin{tabular}{c}
    {\includegraphics[height= 0.09\textwidth]{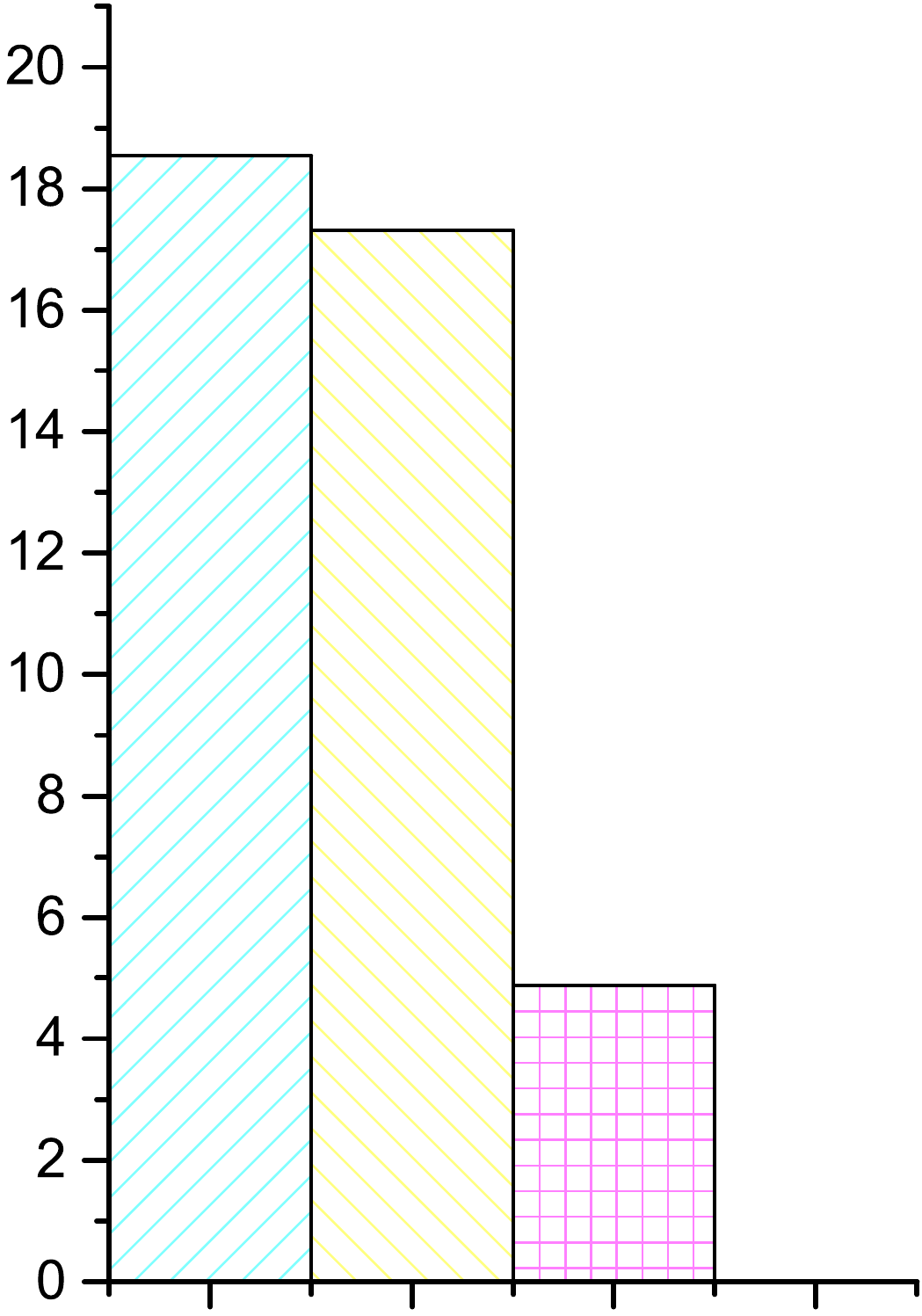}}\\
 {\includegraphics[height= 0.09\textwidth]{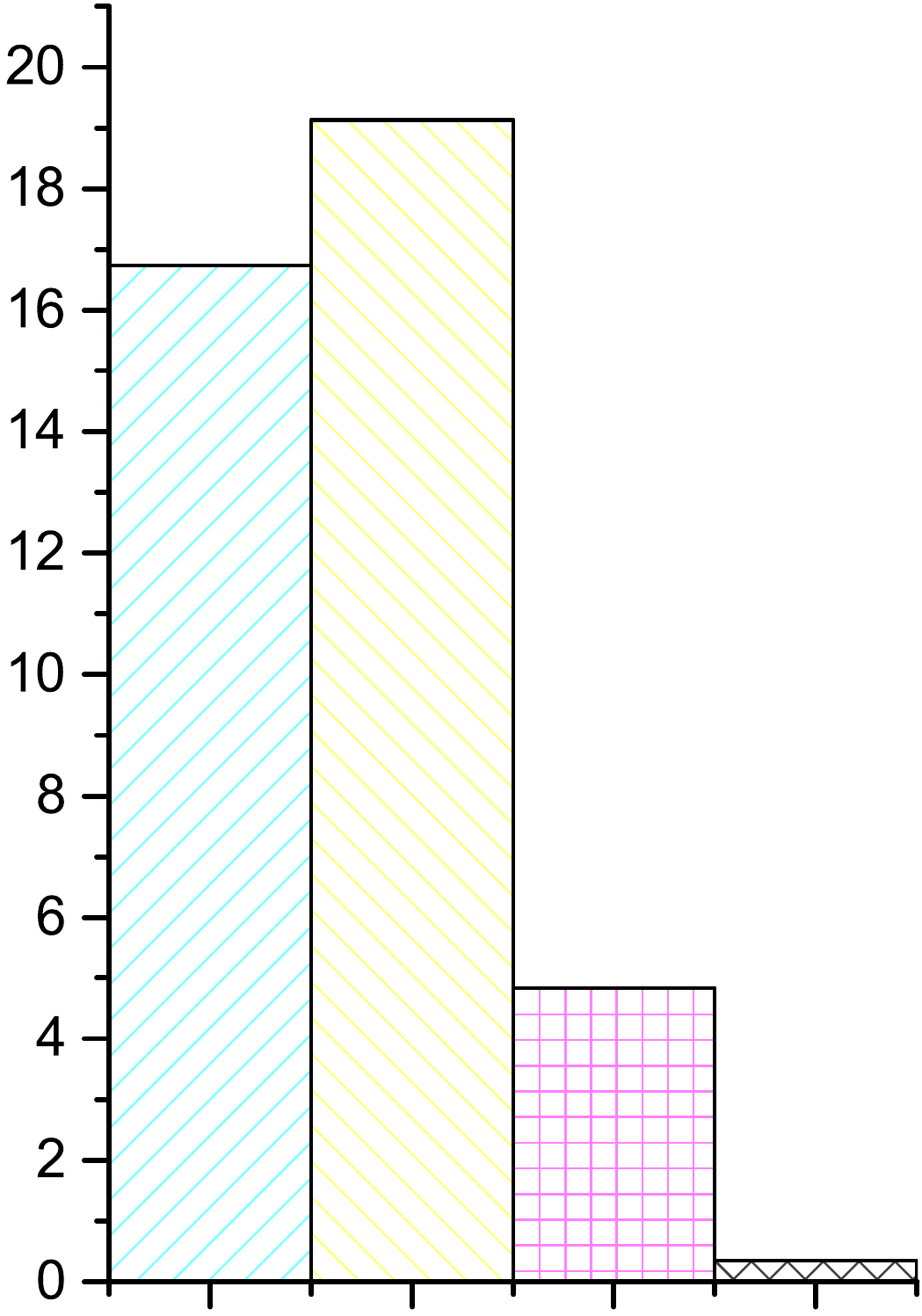}}\end{tabular}}\hspace{-12pt}
 \subfloat[]
    {\begin{tabular}{c}
    {\includegraphics[height= 0.09\textwidth]{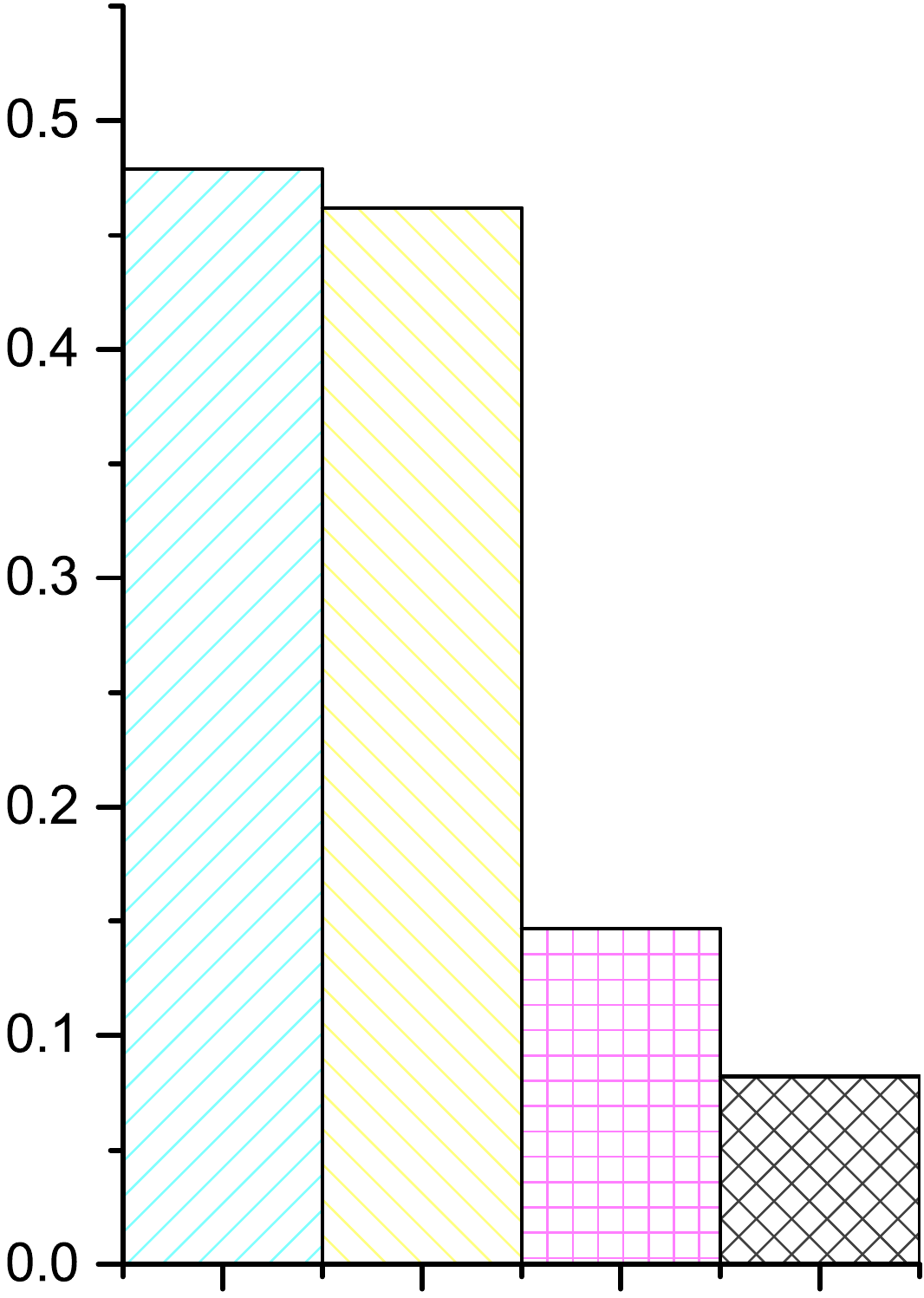}}\\
 {\includegraphics[height= 0.09\textwidth]{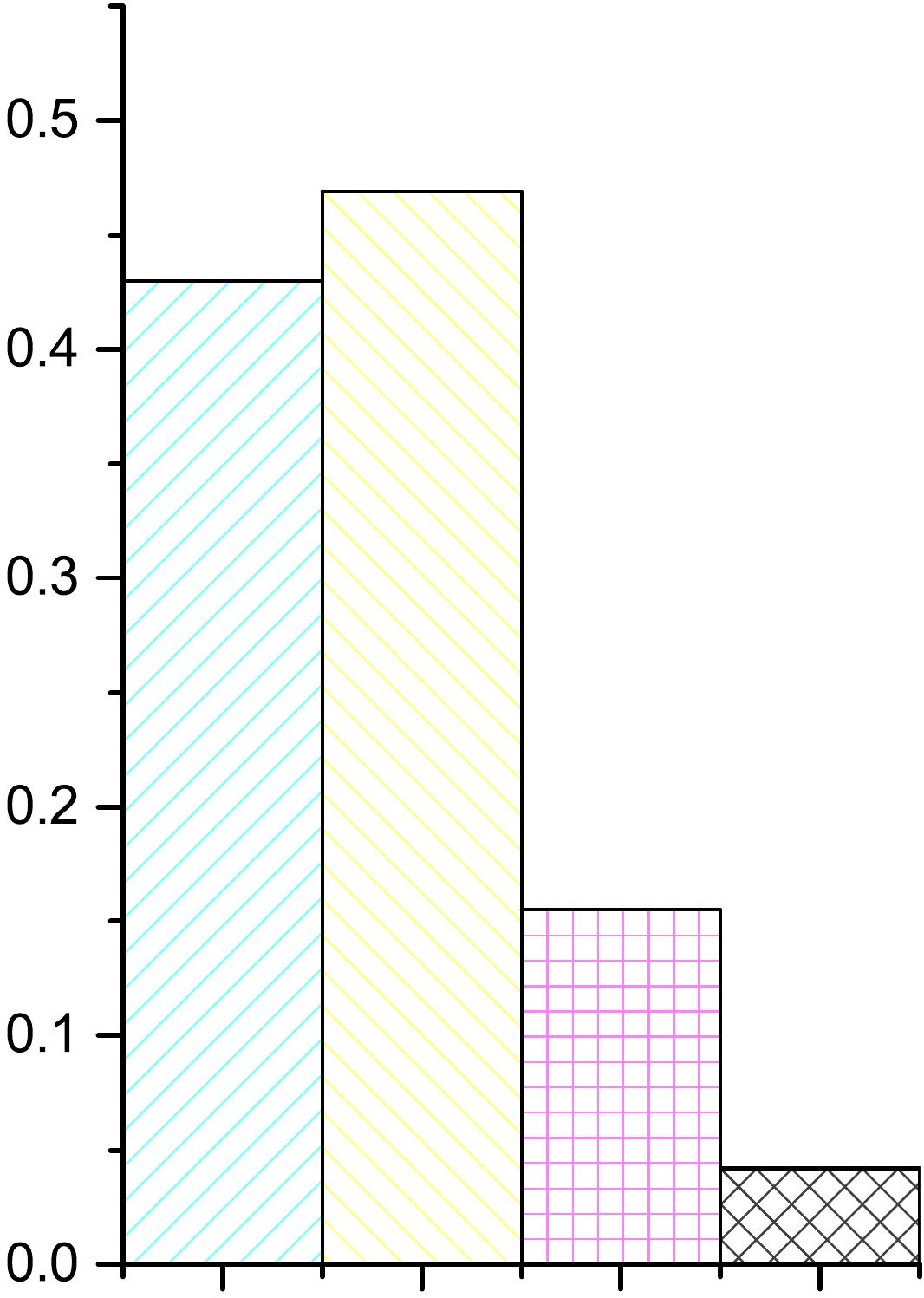}}\end{tabular}}\hspace{-12pt}
 \subfloat[]
    {\begin{tabular}{c}
    {\includegraphics[height= 0.09\textwidth]{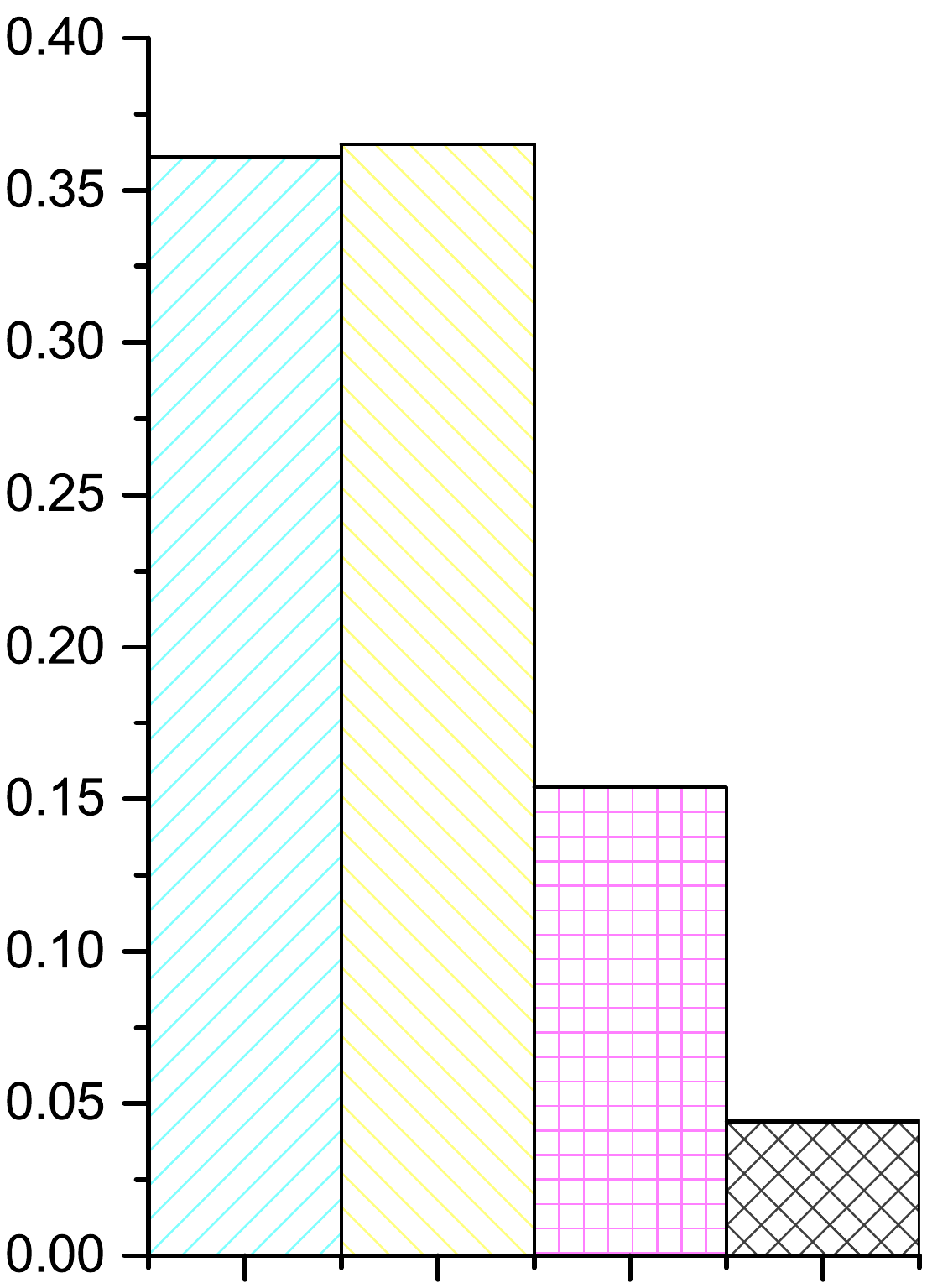}}\\
 {\includegraphics[height= 0.09\textwidth]{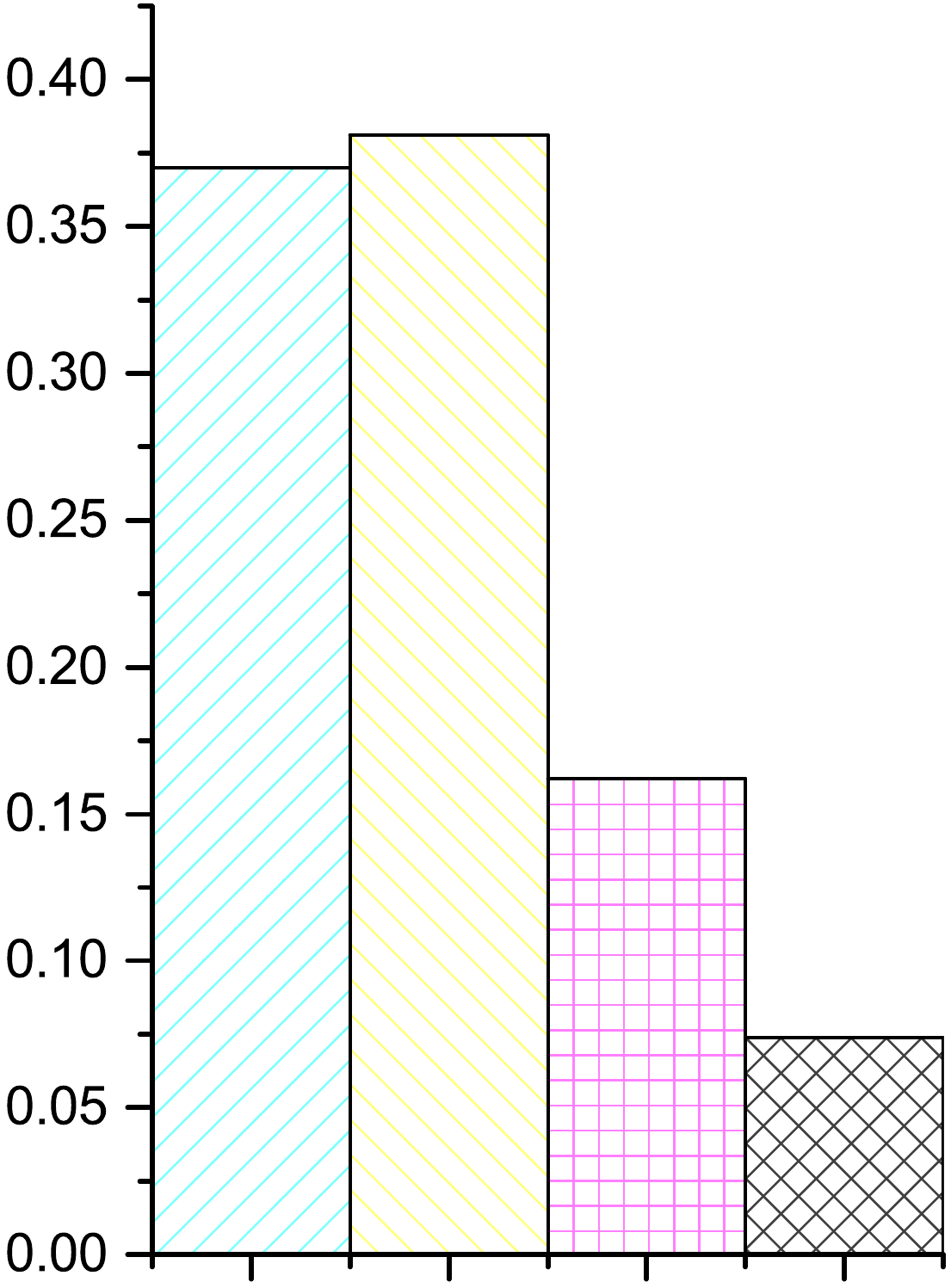}}\end{tabular}}\hspace{-12pt}
 \subfloat[]
    {\begin{tabular}{c}
    {\includegraphics[height= 0.09\textwidth]{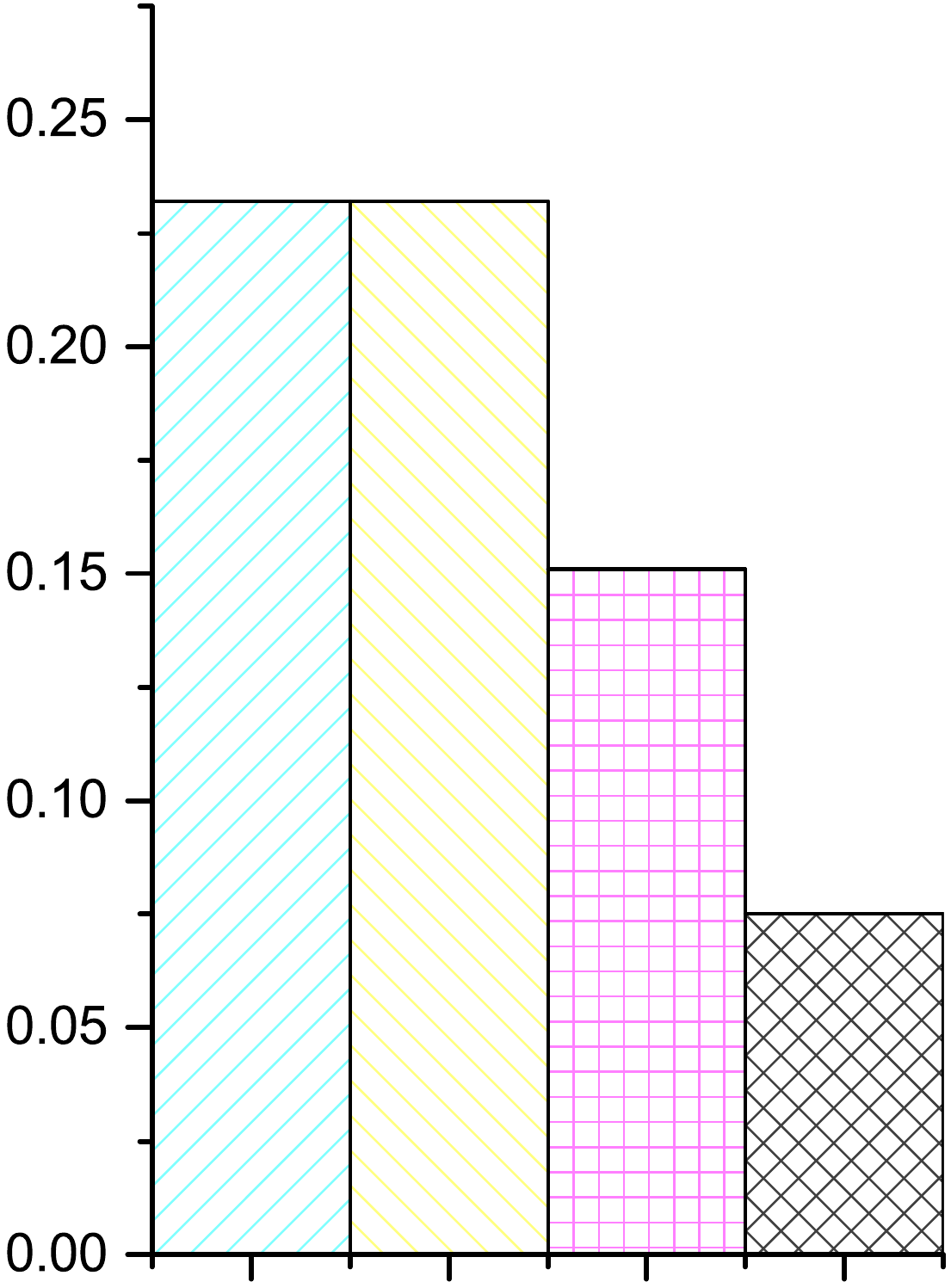}}\\
 {\includegraphics[height= 0.09\textwidth]{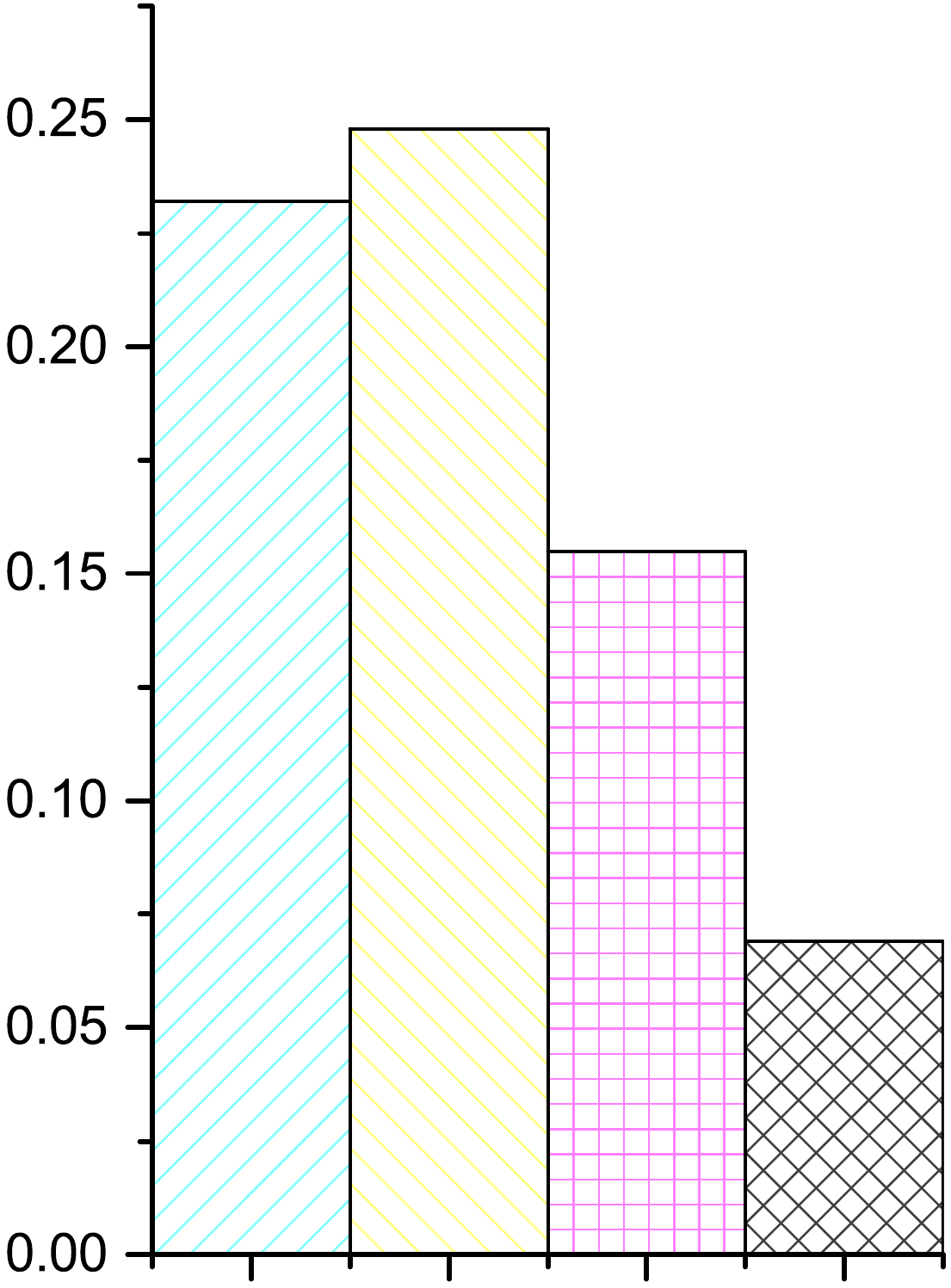}}\end{tabular}}\hspace{-12pt}
    \subfloat[]
    {\begin{tabular}{c}
    {\includegraphics[height= 0.09\textwidth]{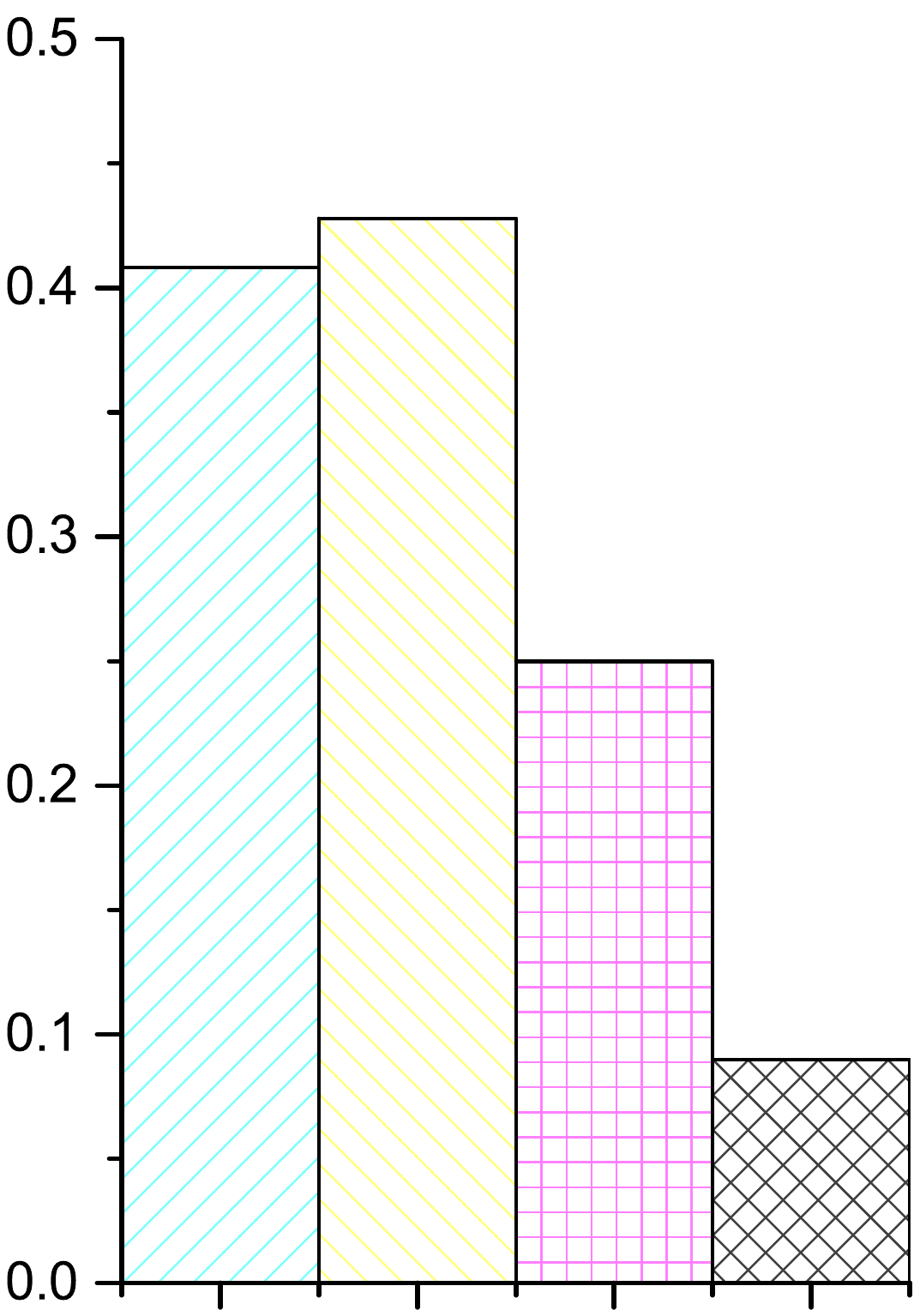}}\\
 {\includegraphics[height= 0.09\textwidth]{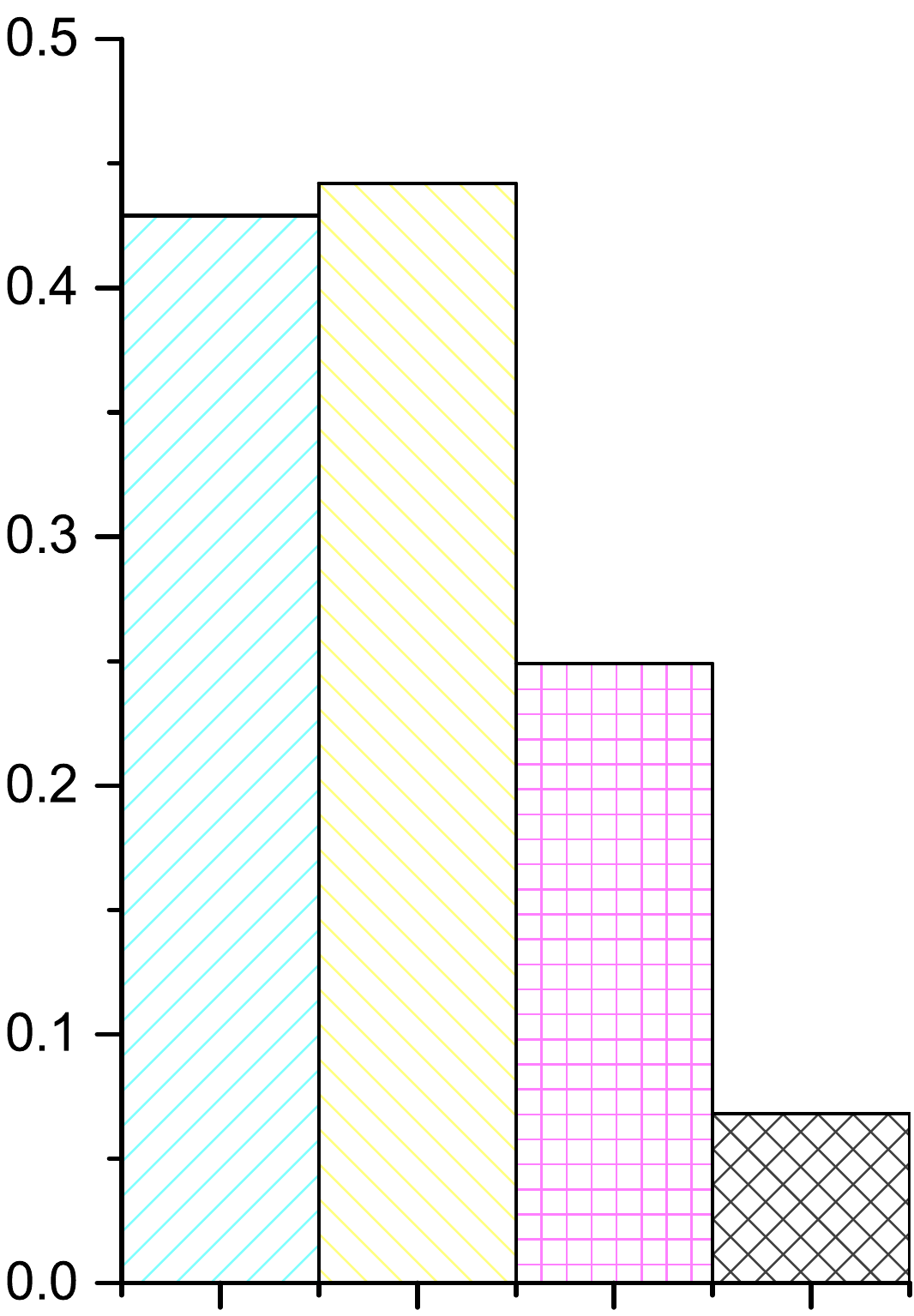}}\end{tabular}}\hspace{-12pt}
 \subfloat[]
    {\begin{tabular}{c}
    {\includegraphics[height= 0.09\textwidth]{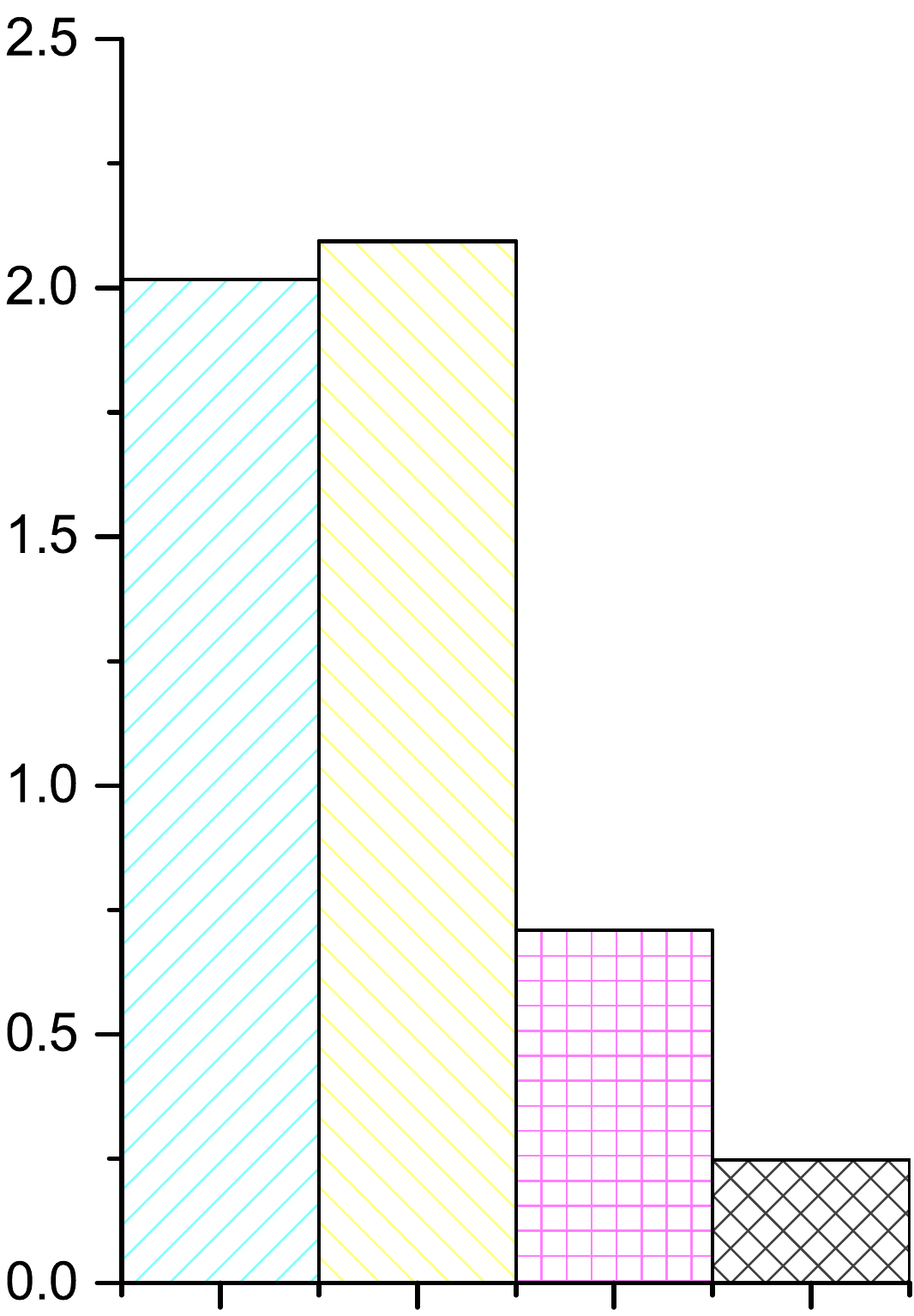}}\\
 {\includegraphics[height= 0.09\textwidth]{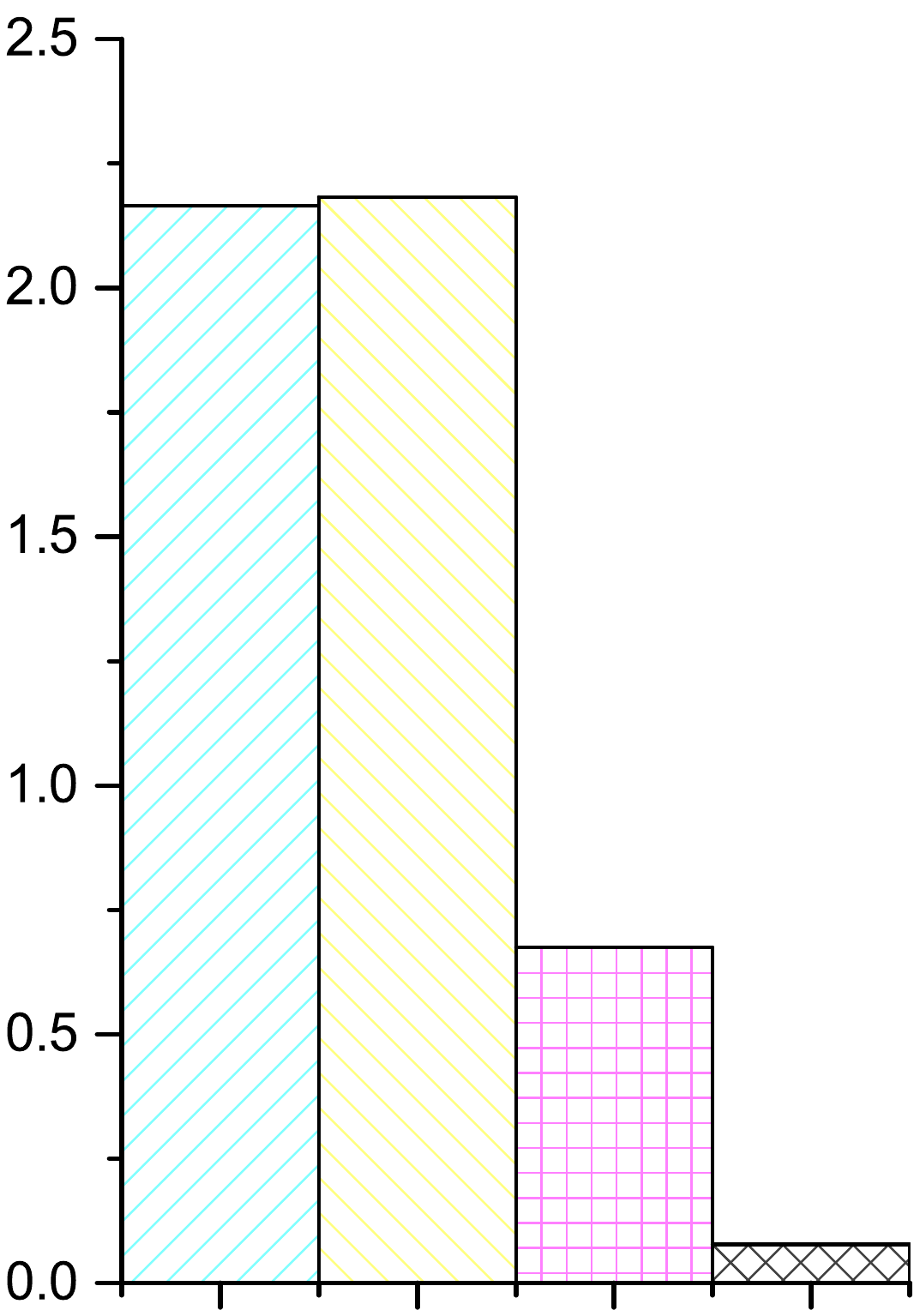}}\end{tabular}}\hspace{-12pt}
 \subfloat[]
    {\begin{tabular}{c}
    {\includegraphics[height= 0.09\textwidth]{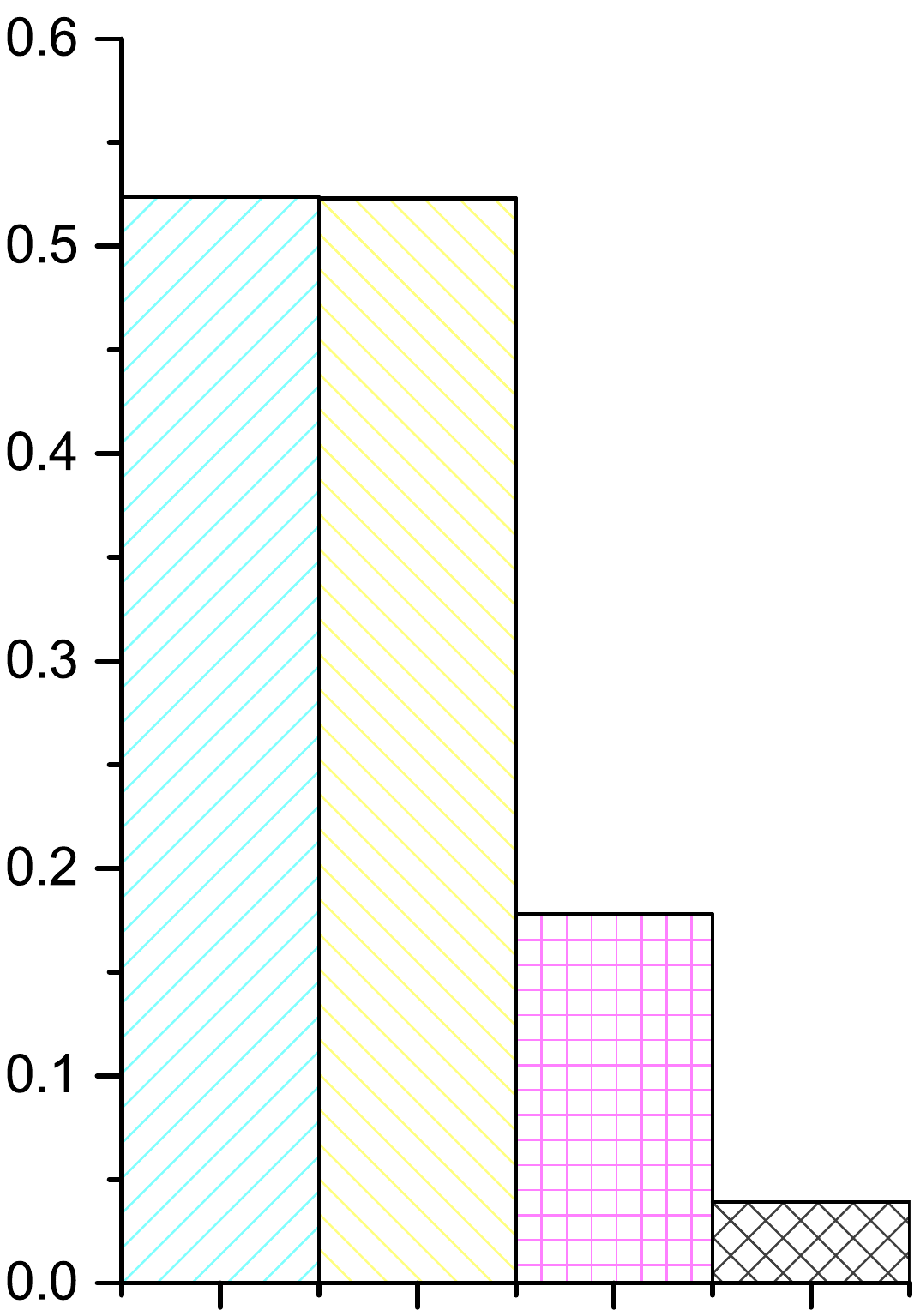}}\\
 {\includegraphics[height= 0.09\textwidth]{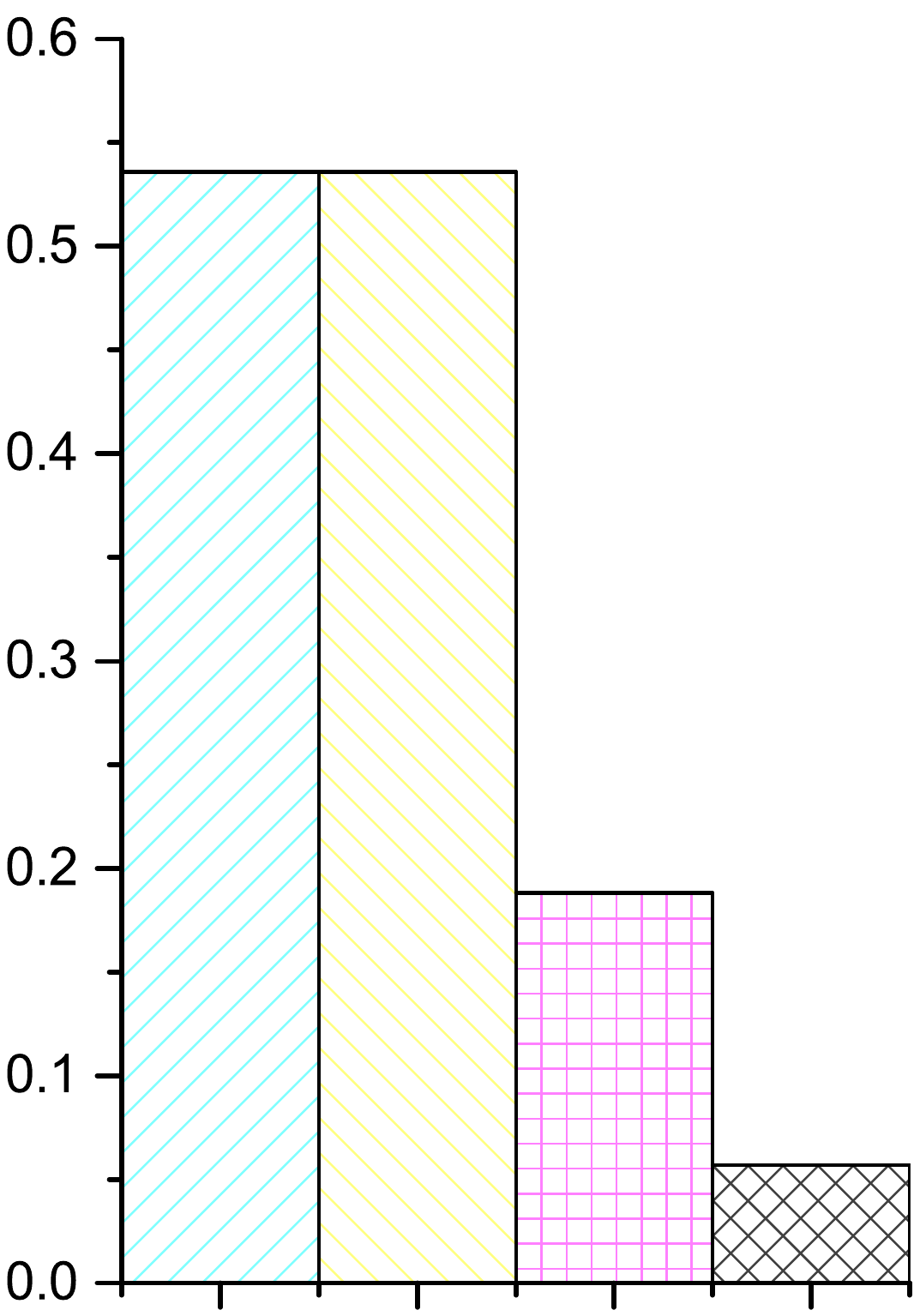}}\end{tabular}}\hspace{-12pt}
 \subfloat[]
    {\begin{tabular}{c}
    {\includegraphics[height= 0.09\textwidth]{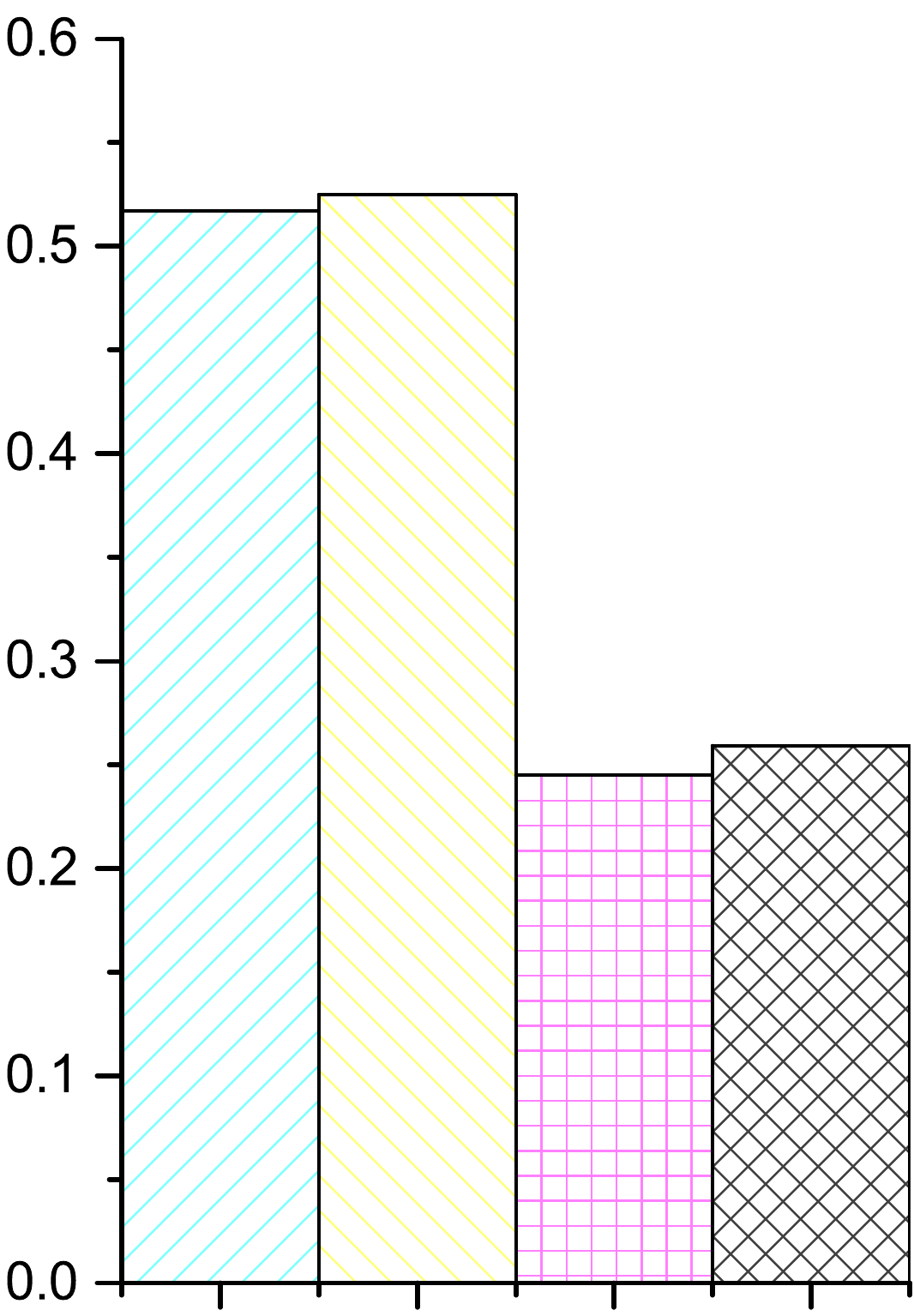}}\\
 {\includegraphics[height= 0.09\textwidth]{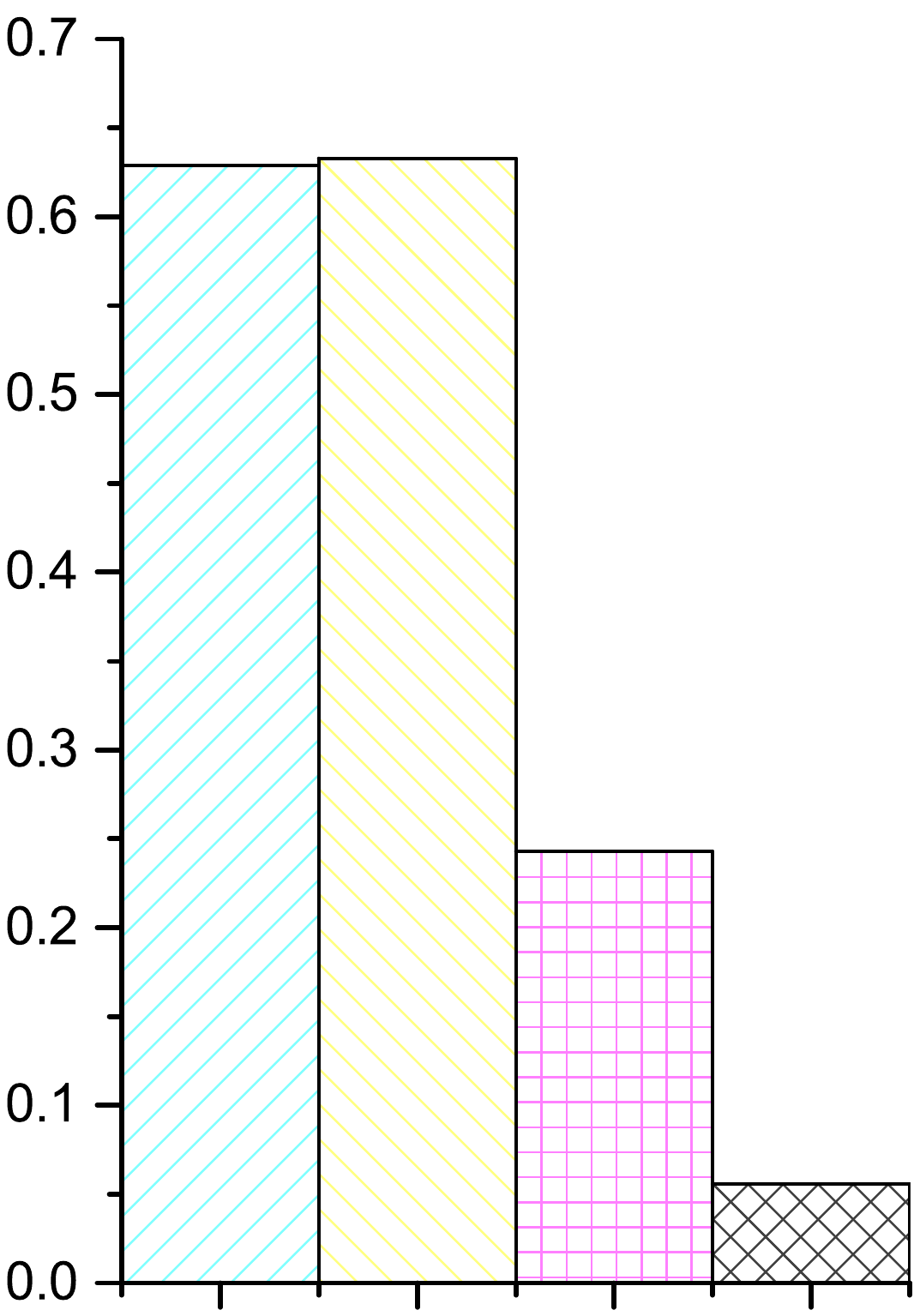}}\end{tabular}}\hspace{-12pt}
    \subfloat[]
    {\begin{tabular}{c}
    {\includegraphics[height= 0.09\textwidth]{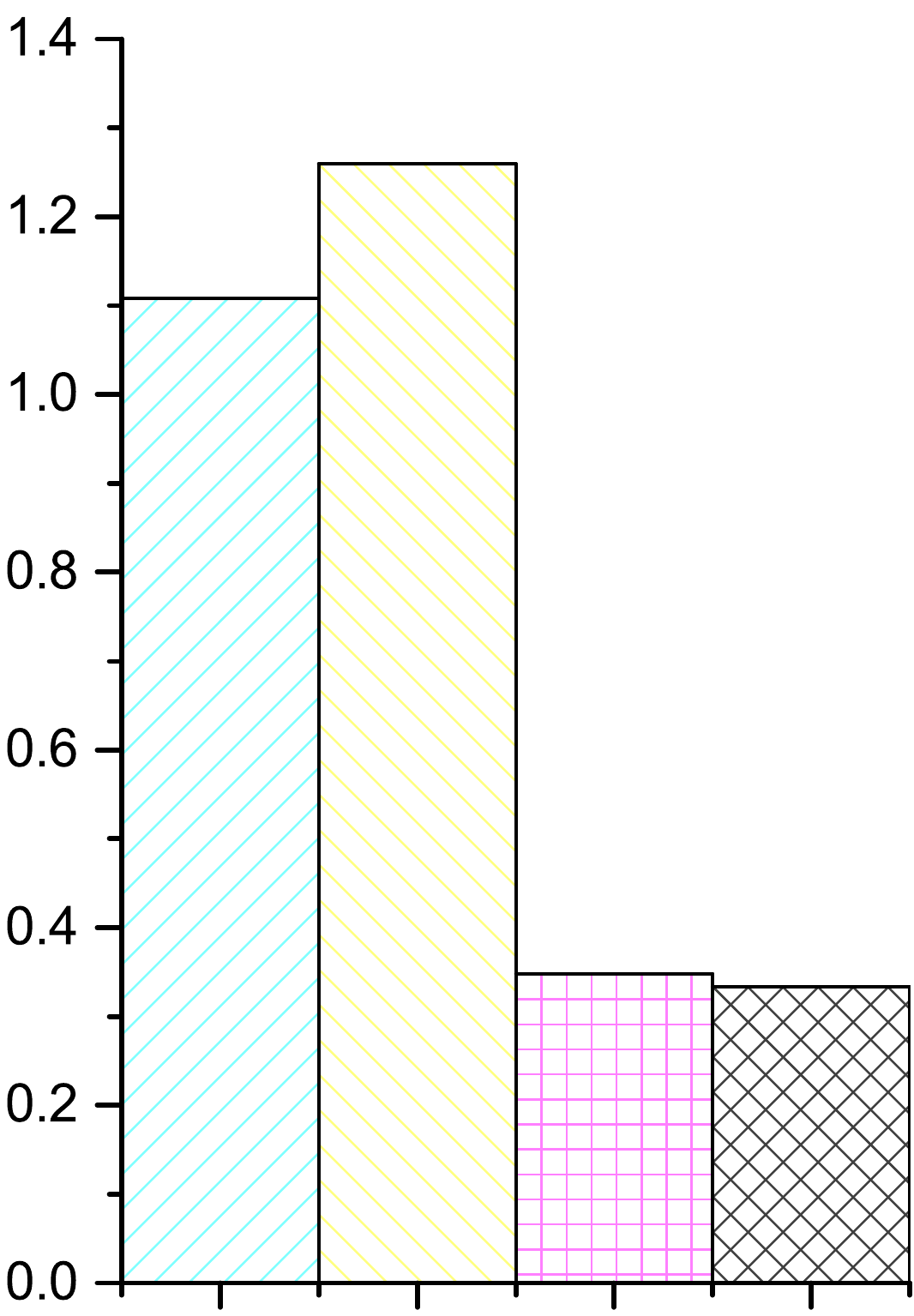}}\\
 {\includegraphics[height= 0.09\textwidth]{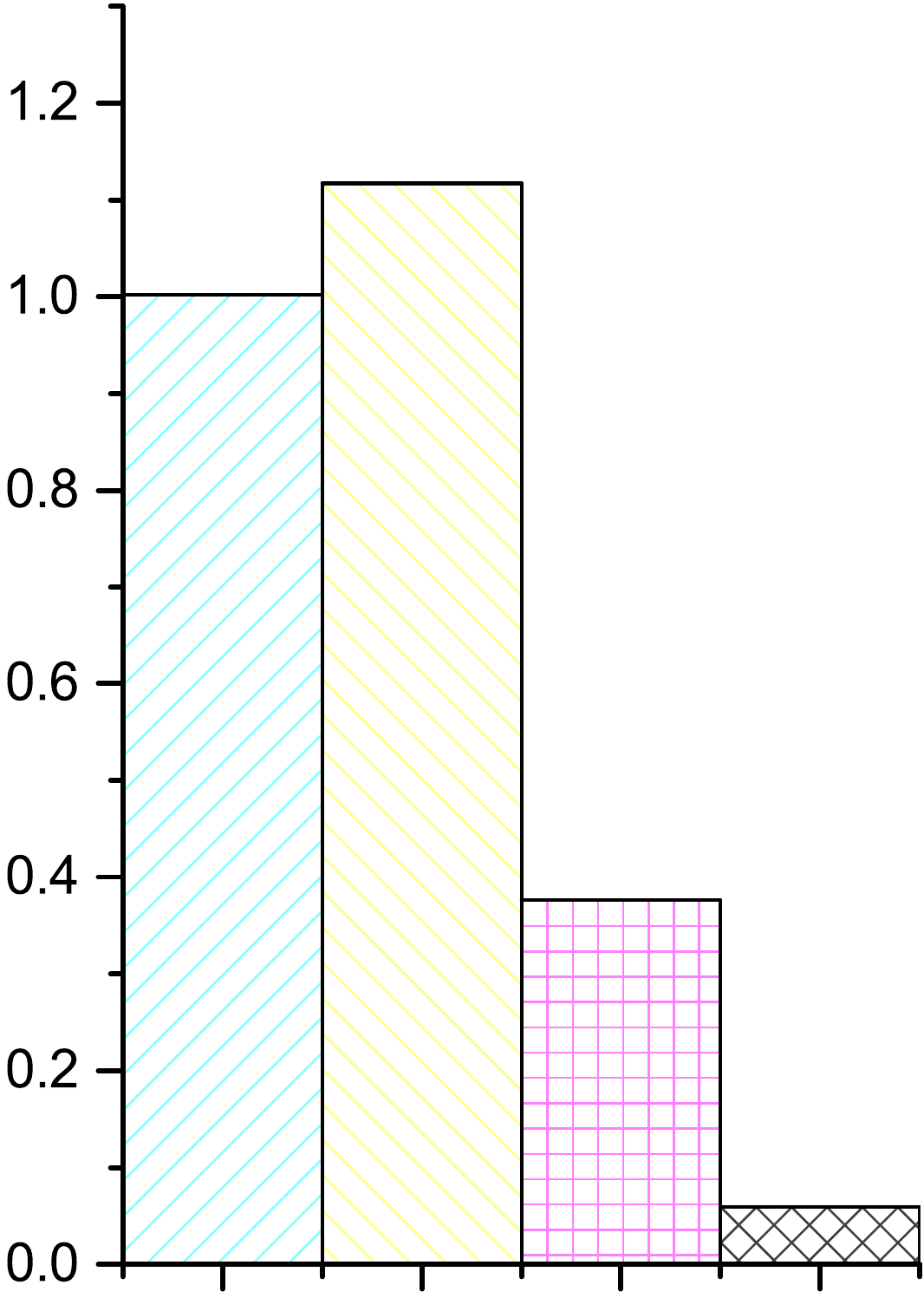}}\end{tabular}}\hspace{-12pt}
 \subfloat[]
    {\begin{tabular}{c}
    {\includegraphics[height= 0.09\textwidth]{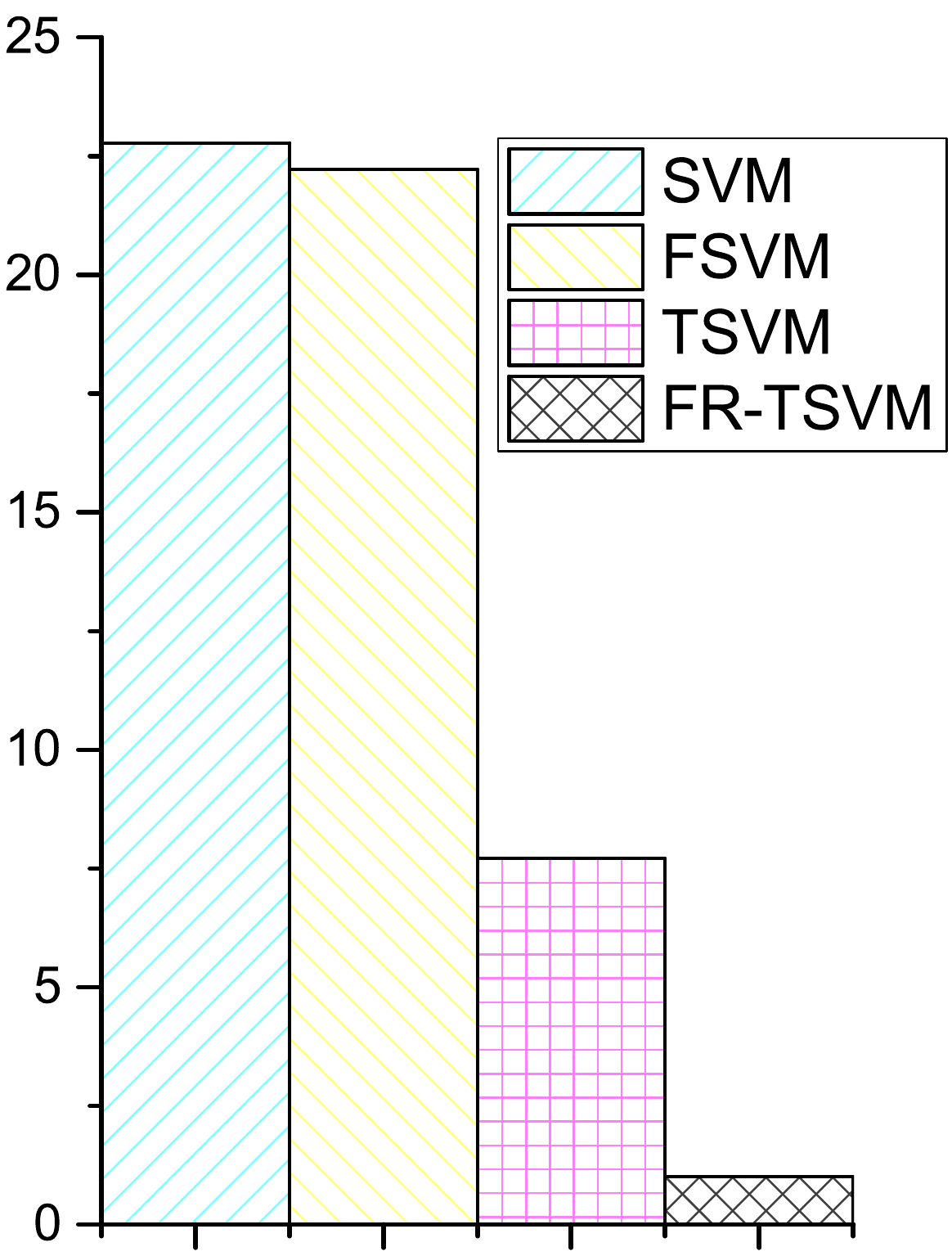}}\\
 {\includegraphics[height= 0.09\textwidth]{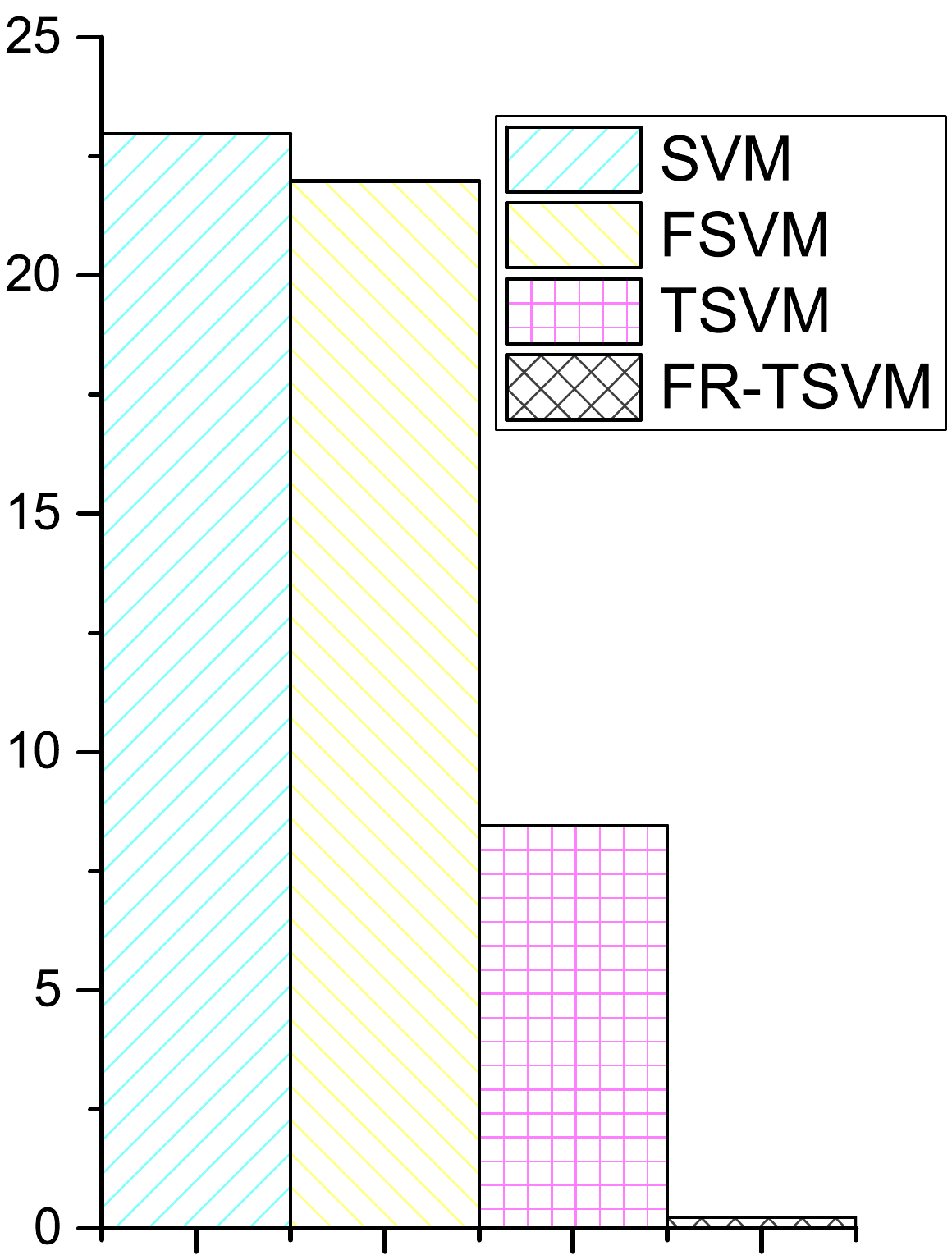}}\end{tabular}}
 \caption{Comparison of \textbf{training time} on thirteen benchmark datasets including 
  (a)Breast (b)Ionosphere (c)Iris (d)Australian (e)WDBC (f)Wine (g)Hepatitis (h)WPBC (i)Bupa (j)Sonar (k)Glass (l)Heart and (m)Pima.\label{fig:UCI_T}}
\end{figure*}

\subsection{Parameter analysis}

\begin{figure*}
    \centering
 \subfloat[$\mu=0.1$, $g=1$]
 {\includegraphics[width= 0.25\textwidth]{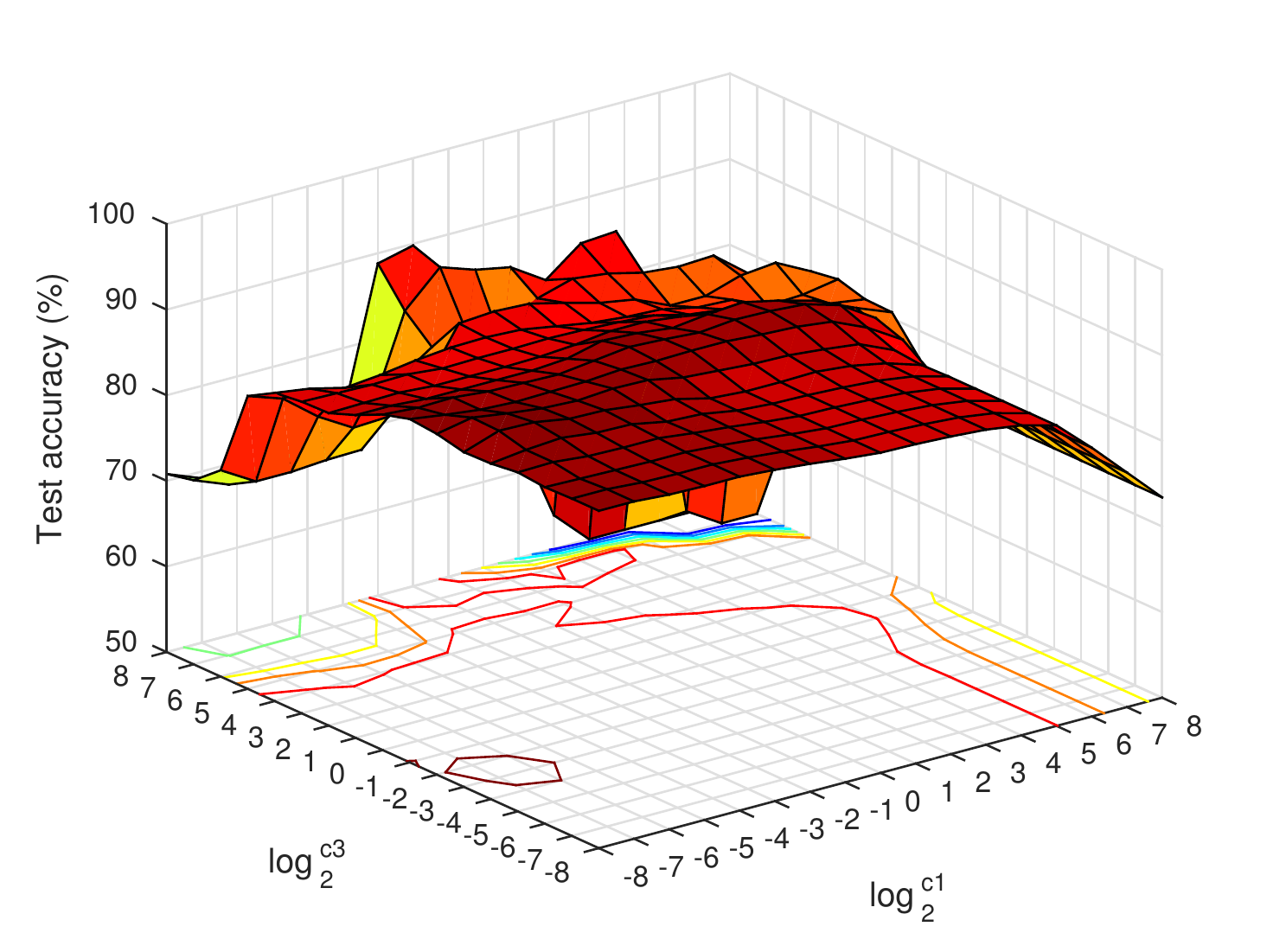}\label{fig:IP:a}}
    \subfloat[$\mu=0.1$, $c_1=32$]
    {\includegraphics[width= 0.25\textwidth]{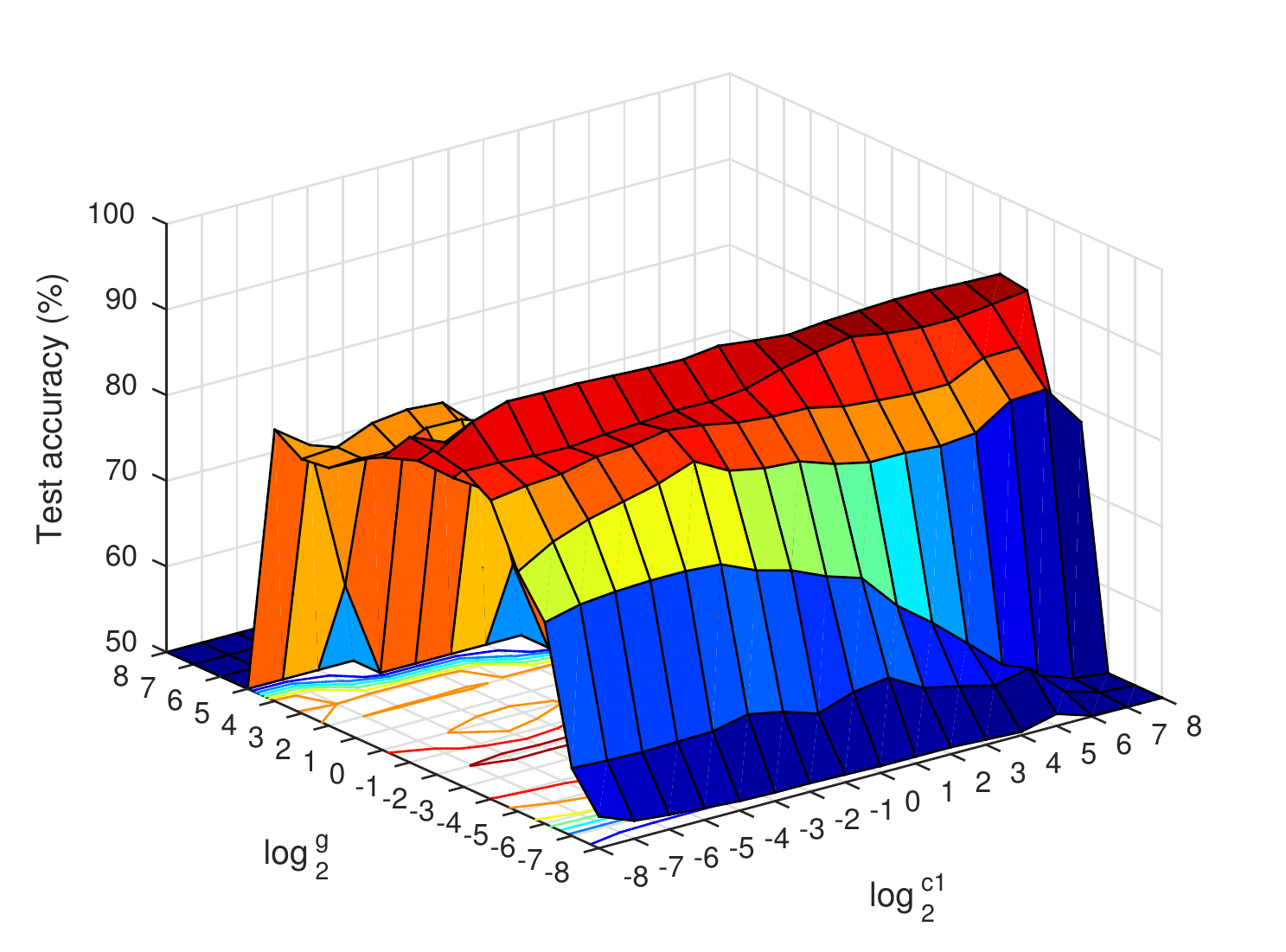}\label{fig:IP:b}}
 \subfloat[$\mu=0.1$, $c_3=0.5$]
 {\includegraphics[width= 0.25\textwidth]{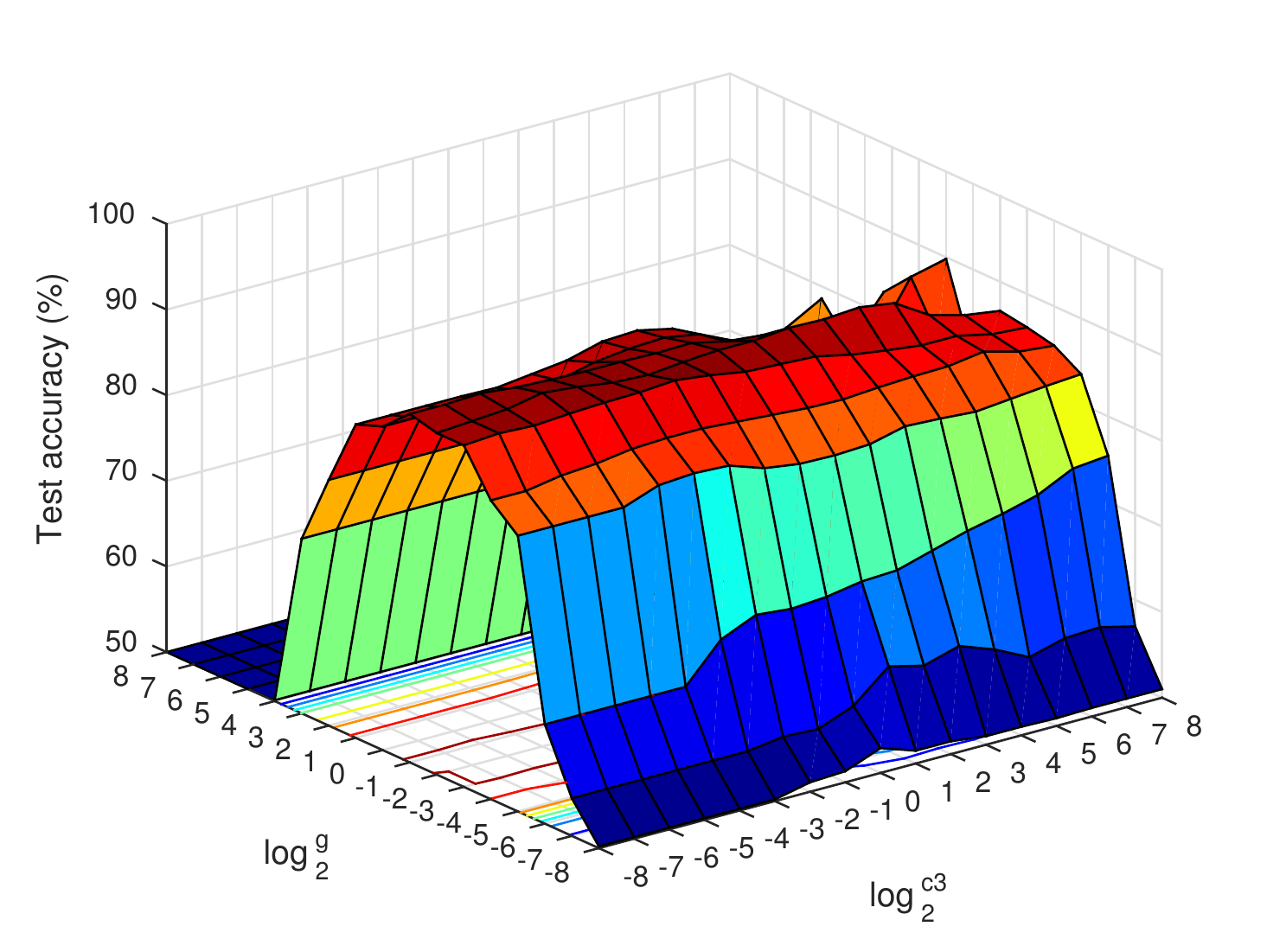}\label{fig:IP:c}}
 \subfloat[$c_1=32$, $c_3=0.5$]
 {\includegraphics[width= 0.25\textwidth]{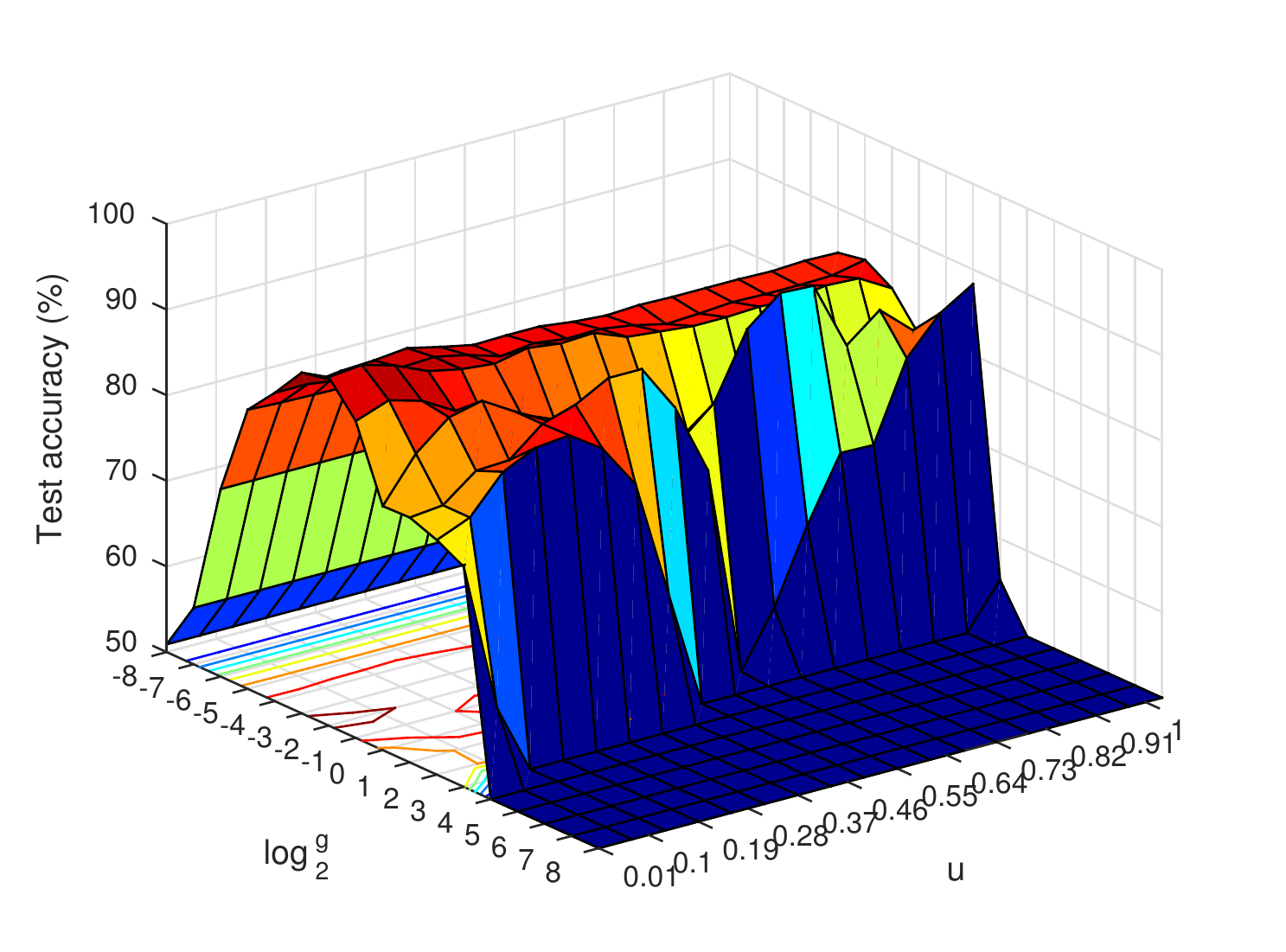}\label{fig:IP:d}}
 \caption{Accuracy varying adjustable parameters of nonlinear FR-TSVM on Ripleys dataset.\label{fig:IP}}
\end{figure*}

FR-TSVM's performance may be affected by hyperparameters. The hyperparameters corresponding to the investigation are those $c_1=c_2$, $c_3=c_4$, $u$, and $g$ in Gaussian kernel for nonlinear case. Here, we take \emph{Ripleys synathetic} dataset for an example.

Fig.~\ref{fig:IP} further shows the test accuracy with
varying the adjustable parameters on artificial dataset~\emph{Ripleys synathetic}.
Despite artificial dataset, two points are thus eventually asserted
from the investigation. First, each adjustable parameter affects
significantly the FR-TSVM's performance, especially Gaussian kernel parameter $g$. Second, the inconsistent trends of the accuracy with respect to the parameters reveal that the issue of parameter selection is still a challenge for us. We thus leave the issue as a future study.

\section{Conclusion}\label{sec:con}
In this paper, we propose a FR-TSVM algorithm for binary classification based on FSVM, TSVM and \emph{coordinate descent} methods. First, similar to FSVM classifier, the embedded fuzzy concept enhances noise-resistance capability and generalization ability. However, the fuzzy membership construction is different from that of the FSVM. Our method can effectively assign the fuzzy membership value to different instances by distinguishing the support vectors and the outliers. Second, as TSVM, the FR-TSVM finds a pair of nonparallel hyperplanes through two smaller sized QPPs rather than one large-sized QPP in the SVM or FSVM. As we all know, the dual problems of the TSVM may be ill-conditioned, but the proposed model's dual form has been brought to a pair of convex quadratic programming problems and confirmed it is capable of solution uniqueness and singularity avoidance. Third, a novel coordinate descent method with shrinking is developed to solve the dual problems. Compared to TSVM, our FR-TSVM is not only faster but also needs less memory storage. This indicates that our FR-TSVM is very suitable for large-scale data. Experiments with simulated and realistic datasets reveal that an exceedingly high classification accuracy with less computation time is achieved using both linear or nonlinear FR-TSVM. However, there are 6 parameters~(\emph{i.e.}, $c_1,c_2,c_3,c_4,\mu,g$) in our FR-TSVM, so the parameters selection is a practical problem and should be addressed in the future. Also, it is an interesting direction to extend FR-TSVM to more recognition problems such as multi-class or multi-label problem.

\section*{References}
\bibliographystyle{elsarticle-num}
\bibliography{bib}

\end{document}